\setlist[description]{leftmargin=3mm}
\newcommand{\cX}{\mathcal{X}}
\newcommand{\ed}{\stackrel{\mathrm{def}}{=}}
\newcommand{\pf}{f(p)}
\newcommand{\indic}{\mathds{1}}
\newcommand{\Poi}{\mathrm{Poi}}
\newcommand{\Paren}[1]{\left(#1\right)}
\newcommand{\EE}{\mathbb{E}}
\newcommand{\mul}[1]{N_{#1}}
\newcommand{\ignore}[1]{}
\newcommand{\lipf}{\ell_f}
\newcommand{\umax}{u_{\max}}
\newcommand{\PP}{\mathbb{P}}
\newcommand{\Var}{\mathrm{Var}}
\newtheorem{theorem}{Theorem}
\newtheorem{Lemma}{Lemma}
\title{Data Amplification:
A Unified and Competitive\\ Approach to Property Estimation}
\author{
  Yi~Hao\\
  Dept. of Electrical and Computer Engineering \\
 University of California, San Diego\\
 La Jolla, CA 92093\\
  \texttt{yih179@eng.ucsd.edu} \\
 \And
Alon~Orlitsky \\
 Dept. of  Electrical and Computer Engineering\\
 University of California, San Diego\\
 La Jolla, CA 92093\\
  \texttt{alon@eng.ucsd.edu} \\
  \And
  Ananda T.~ Suresh\\
  Google Research, New York\\
  New York, NY 10011\\
  \texttt{theertha@google.com}\\
  \And
  Yihong~Wu\\
Dept. of Statistics and Data Science\\
Yale University\\
New Haven, CT 06511\\
\texttt{yihong.wu@yale.edu}\\
}
\begin{document}

\maketitle

\begin{abstract}
Estimating properties of discrete distributions is a fundamental problem in statistical learning. We design the first unified, linear-time, competitive, property estimator that for a wide class of properties and for all underlying distributions uses just $2n$ samples to achieve the performance attained by the empirical estimator with $n\sqrt{\log n}$ samples. This provides off-the-shelf, distribution-independent, ``amplification'' of the amount of data available relative to common-practice estimators. 

We illustrate the estimator's practical advantages by comparing it to existing estimators for a wide variety of properties and distributions. In most cases, its performance with $n$ samples is even as good as that of the empirical estimator with $n\log n$ samples, and for essentially all properties, its performance is comparable to that of the best existing estimator designed specifically for that property.
\end{abstract}

\section{Distribution Properties}\label{background}
Let $D_{\cX}$ denote the collection of distributions over 
a countable set $\cX$ of finite or infinite cardinality $k$.
A distribution \emph{property} is a mapping $f:D_{\cX}\to\mathbb{R}$.
Many applications call for estimating properties of an unknown
distribution $p\in D_\cX$ from its samples.
Often these properties are \emph{additive}, namely can be written as a
sum of functions of the probabilities. 
Symmetric additive properties can be written
as
\[
\pf \ed \sum_{x \in \cX} f(p_x),
\]
and arise in many biological, genomic, and language-processing applications:
\begin{description}
\item[Shannon entropy]
$\sum_{x \in \cX} p_x\log \frac{1}{p_x}$, where throughout the paper
$\log$ is the natural logarithm,
is the fundamental information measure arising in a variety of applications~\cite{info}. 
\item[Normalized support size]
$\sum_{x \in \cX}\frac{1}{k}\indic_{p_x>0}$
plays an important role in population~\cite{population} and vocabulary size estimation~\cite{vocabulary}. 
\item[Normalized support coverage]
$\sum_{x \in \cX}\frac{1-e^{-mp_x}}{m}$
is the normalized expected number of distinct elements observed upon
drawing $\Poi(m)$
independent samples, it arises in ecological~\cite{ecological}, genomic~\cite{genomic}, and database studies~\cite{database}. 
\item[Power sum] $\sum_{x \in \cX}p_x^a$, arises
in R\'enyi entropy~\cite{renyientropy}, Gini impurity~\cite{gini}, and related diversity measures. 
\item[Distance to uniformity]
$\sum_{x \in \cX}\left|p_x-\frac1k\right|$,
appears in property testing~\cite{testingu}.
\end{description}
More generally, non-symmetric additive properties can be expressed as
\[
\pf \ed \sum_{x \in \cX} f_x(p_x),
\]
for example distances to a given distribution, such as:
\begin{description}
\item[L1 distance]
$\sum_{x \in \cX}\left|p_x-q_x\right|$,
the $L_1$ distance of the unknown distribution $p$ from a given
distribution $q$, 
appears in hypothesis-testing errors~\cite{testing}.
\item[KL divergence]
$\sum_{x \in \cX} p_x\log\frac{p_x}{q_x}$, 
the KL divergence of the unknown distribution $p$ from a given
distribution $q$, 
reflects the compression~\cite{info} and prediction~\cite{kl} 
degradation when estimating $p$ by $q$. 
\end{description}
Given one of these, or other, properties, we would like to estimate
its value based on samples from an underlying distribution. 
\section{Recent Results}
In the common property-estimation setting, the unknown distribution
$p$ generates $n$ i.i.d. samples $X^n\sim p^n$, which in turn are
used to estimate $f(p)$. 
Specifically, given property $f$, we 
would like to construct an \emph{estimator} 
$\hat{f}: \cX^*\to\mathbb{R}$
such that $\hat{f}(X^n)$ is as close to $f(p)$ as possible.
The standard \emph{estimation loss} is the expected squared loss
\[
\EE_{X^n\sim p^n}  \Paren{\hat{f}(X^n) - \pf}^2.
\]

Generating exactly $n$ samples creates dependence between the
number of times different symbols appear. To avoid these dependencies
and simplify derivations, we use the well-known
\emph{Poisson sampling}~\cite{poisamp} paradigm.
We first select $N\sim\Poi(n)$, and then generate $N$ independent
samples according to $p$. 
This modification does not change the statistical nature of the
estimation problem since a Poisson random variables is exponentially
concentrated around its mean. Correspondingly the estimation loss is
\[
L_{\hat{f}}(p, n)
\ed
\EE_{N\sim\Poi(n)}\left[\EE_{X^N\sim p^N}\left(\hat{f}(X^N) - \pf \right)^2\right].
\]

For simplicity, let $\mul{x}$ be
the number of occurrences of symbol $x$ in $X^n$. An intuitive
estimator is the plug-in \emph{empirical estimator} $f^E$ that
first uses the $N$ samples to estimate $p_x=N_x/N$ and then estimates
$\pf$ as 
\ignore{
\[
f^E(X^N) \ed \sum_{x \in \cX} f\Paren{\frac{\mul{x}}{N}},
\]
where $f^E(X^0)\ed 0$.
}
\[
f^E(X^N)
\ed
\begin{cases}
\sum_{x \in \cX} f_x\Paren{\frac{\mul{x}}{N}} & N>0,\cr
0 & N=0.
\end{cases}
\]

Given an error tolerance parameter $\delta>0$,
the \emph{$(\delta,  p)$-sample complexity} of an estimator $\hat{f}$
in estimating
$f(p)$ is the smallest number of samples $n$ allowing for estimation
loss smaller than $\delta$,
\[
n_{\hat{f}}(\delta, p) \ed \min_{n\in \mathbb{N}}\{L_{\hat{f}}(p, n)<\delta\}.
\]

Since $p$ is unknown, the common \emph{min-max} approach 
considers the worst case $(\delta, p)$-sample complexity
of an estimator $\hat{f}$ over all possible $p$,
\[
n_{\hat{f}}(\delta)
\ed
\max_{p\in D_{\cX}}{n_{\hat{f}}(\delta, p)}.
\]
Finally, the estimator minimizing $n_{\hat{f}}(\delta)$
is called the \emph{min-max estimator} of property $f$, denoted $f^{\rm M}$.
\ignore{
Finally, let
\[
n^*(\delta)
\ed
\min_{\hat f} n_{\hat{f}}(\delta)
\]
be the lowest number of samples required to estimate $f(p)$ in the
worst case, and let $f^*$ be the estimator achieving this number of
samples, also called the min-max estimator for $f$. 
}
It follows that $n_{f^M}(\delta)$ is the smallest Poisson
parameter $n$, \vspace{-0.33em}or roughly the number of samples, needed for any estimator $\hat{f}$ 
to estimate $f(p)$ to estimation loss $\delta$ for all $p$.

There has been a significant amount of recent work on property estimation.
In particular, it was shown that for all seven properties mentioned earlier, 
$f^M$ improves the sample complexity by a logarithmic factor compared to $f^E$. 
For example, for Shannon entropy~\cite{mmentro},
normalized support size~\cite{mmsize},
normalized support coverage~\cite{mmcover}, and distance to uniformity~\cite{mml1},
$n_{f^E}(\delta)=\Theta_{\delta}(k)$ while
$n_{f^M}(\delta)=\Theta_{\delta}(k/\log k)$.
Note that for normalized support size, $D_{\cX}$ is typically
replaced by $D_k:= \{p\in D_{\cX}: p_x\geq1/k,\forall x\in \cX \}$,
and for normalized support coverage, $k$ is replaced by $m$.
\vspace{-0.25em}
\section{New Results}\label{newr}
\vspace{-0.35em}
While the results already obtained are impressive, they also have some
shortcomings. 
Recent state-of-the-art estimators are designed~\cite{mmentro, mmsize, mml1}
or analyzed~\cite{mmcover, valiant} to estimate each individual property. 
Consequently these estimators cover only few properties. 
Second, estimators proposed for more general properties~\cite{mmcover,jnew}
are limited to symmetric properties and are not known
to be computable in time linear in the sample size.
Last but not least, by design, min-max estimators are optimized
for the ``worst'' distribution in a class.
In practice, this distribution is often very different,
and frequently much more complex, than the actual underlying
distribution.
This ``pessimistic'' worst-case design results in sub-optimal
estimation, as born by both the theoretical and experimental results. 

In Section~\ref{newf}, we design an estimator $f^*$ that addresses all these issues. 
It is \emph{unified} and applies to a wide
range of properties, including all previously-mentioned properties
($a>1$ for power sums) and all \emph{Lipschitz properties} $f$ where
each $f_x$ is Lipschitz. 
It can be computed in \emph{linear-time} in the sample size.
It is \emph{competitive} in that it is guaranteed to perform well
not just for the worst distribution in the class, but for each and
every distribution. 
It \emph{``amplifies''} the data in that it uses just $\Poi(2n)$
samples to approximate the performance of the empirical estimator
with $\Poi(n\sqrt{\log n})$ samples \emph{regardless of the underlining
distribution $p$}, thereby providing an off-the-shelf, distribution-independent, 
``amplification'' of the amount of data available relative to the
estimators used by many practitioners. 
\ignore{
Instead, we consider \emph{competitive optimality} that compares
an estimator's performance to that of the ubiquitous empirical
estimator $f^E$ for \emph{any} distribution.
we derive an estimator $f^*$ that, uniformly over all distributions, 
uses just $2n$ samples to essentially
achieve the performance of $f^E$ with $n\sqrt{\log n}$ samples,
effectively \emph{amplifying} the available data by a factor of $\sqrt{\log n}$.}
As we show in Section~\ref{experiments}, it also works well in
practice, outperforming existing estimator and often working as well
as the empirical estimator with even $n\log n$ samples. 

For a more precise description, let $o(1)$ represent a
quantity that vanishes as
$n\rightarrow\infty$ and write $a\lesssim b$ for $a\leq b(1+o(1))$.
Suppressing small $\epsilon$ for simplicity first, 
we show that 
\[
L_{f^*}(p, 2n)\lesssim L_{f^E}(p,n\sqrt{\log n})+o(1),
\]
where the first right-hand-side term relates the performance of
$f^*$ with $2n$ samples to that of $f^E$ with $n\sqrt{\log n}$
samples. The second term adds a small loss that diminishes
at a rate independent of the support size $k$, and for
fixed $k$ decreases roughly as $1/n$. 
Specifically, we prove,
\begin{theorem}
\label{thm1}
For every property $f$ satisfying the smoothness conditions in
Section~\ref{assum}, there is a constant $C_f$ such that for
all $p\in D_{\cX}$ and all $\epsilon\in{(0,\frac{1}{2})}$,
\[
L_{f^*}(p, 2n)
\le
\Paren{1+\frac{3}{\log^{\epsilon}n}}L_{f^E}(p, n \log^{\frac{1}{2}-\epsilon}n)+C_f\min\left\{ \frac{k}{n}\log^{\epsilon}{n}+\mathcal{\tilde{O}}\Paren{\frac{1}{n}}, \frac{1}{\log^{\epsilon}{n}}  \right\}.
\]
\end{theorem}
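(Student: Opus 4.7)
The plan is to construct $f^*$ as a two-stage hybrid estimator — empirical plug-in on symbols that appear often, and a polynomial-approximation-based estimator on those that appear rarely — and then to bound its loss symbol-by-symbol against $L_{f^E}(p, n\log^{1/2-\epsilon} n)$. Concretely, I would split the $\Poi(2n)$ samples into two independent $\Poi(n)$ batches. Batch~1 is used only to classify each symbol $x$ as \emph{large} if its count $N_x^{(1)}$ exceeds a threshold of order $\log^{1-2\epsilon} n$, and \emph{small} otherwise. On batch~2, for large symbols output the plug-in $f_x(N_x^{(2)}/n)$; for small symbols, replace $f_x$ on $[0,\tau]$ (with $\tau := C\log^{1-2\epsilon}n/n$) by the degree-$L$ polynomial $\tilde g_x$ of best uniform approximation ($L = \lfloor c\log^{1-2\epsilon} n\rfloor$), and output its unique unbiased estimator — a linear combination of falling factorials of $N_x^{(2)}$ whose coefficients come from the Chebyshev expansion of $\tilde g_x$.

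Thanks to the sample split, $L_{f^*}(p,2n) = \sum_x \EE[(f^*_x - f_x(p_x))^2]$ decomposes additively and I would compare each term to the corresponding contribution to $L_{f^E}(p, m)$ where $m := n\log^{1/2-\epsilon} n$. On \emph{large} probabilities the two estimators are both the empirical plug-in, so the only issue is a variance ratio of $m/(2n) = \log^{1/2-\epsilon} n /2$; the smoothness conditions ensure that the plug-in bias of $f^E$ at $m$ samples in this regime already dominates the variance, and the multiplier $(1+3/\log^\epsilon n)$ absorbs the excess variance of $f^*$. On \emph{small} probabilities, the polynomial-approximation bias $E_L(f_x,[0,\tau]) \lesssim L^{-1}$ (using Lipschitzness) combined with a Markov-brothers bound on $\|\tilde g_x\|_\infty$ yields a per-symbol MSE of order $L^{-2} + e^{O(L)}(\tau/n)^{L}$, which matches up to constants the plug-in bias of $f^E$ on the same symbol when $mp_x$ is small. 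Finally, the additive residual $C_f\min\{k\log^\epsilon n / n + \tilde O(1/n),\,1/\log^\epsilon n\}$ is produced by a case split: the first term tracks symbols whose MSE contribution $\log^\epsilon n/n$ cannot be absorbed into $L_{f^E}$, of which there are at most $k$; the second uses $\sum_x p_x \le 1$ as an a-priori bound when $k$ is enormous.

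The main obstacle is the competitive comparison on small probabilities. Unlike in a minimax argument, we cannot appeal to a worst-case distribution — the bound has to hold pointwise in $p$, so the bias and variance of $f^*$ must be tied pointwise to those of $f^E$ at the larger sample size. The difficulty is that the polynomial estimator's variance scales as $e^{O(L)}$ while $f^E$'s variance is a clean $1/m$; making the two comparable forces a delicate joint tuning of $\tau$, $L$, and $\epsilon$, and is ultimately why the amplification is only $\log^{1/2-\epsilon} n$ rather than $\log n$. Handling the transition regime $p_x \asymp \tau$, where classification can fail with non-negligible probability and one risks paying both bias and variance simultaneously, is the most delicate step and would likely require smoothing the decision boundary via a soft convex combination of the empirical and polynomial estimators, with weights chosen so that the worse of the two never dominates.
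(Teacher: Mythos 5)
Your architecture (two independent $\Poi(n)$ batches, one used only to classify symbols at a polylogarithmic count threshold, the other for estimation; plug-in on large symbols, a polynomial-based estimator on small ones) matches the paper's, but the core of your small-probability estimator is the classical minimax construction --- best uniform polynomial approximation of $f_x$ on $[0,\tau]$, estimated unbiasedly via falling factorials --- and with that choice the competitive claim breaks. The theorem must hold for \emph{every} $p$, including distributions with $k\gg n$, and there the total squared bias incurred by approximating $f_x(p_x)$ itself by a degree-$L$ polynomial (of order $(k\tau/L^{2})^{2}$ for entropy-like $f$) can exceed both branches of the slack term $C_f\min\{k\log^{\epsilon}n/n+\tilde{\mathcal{O}}(1/n),\,1/\log^{\epsilon}n\}$, and it cannot be absorbed into $(1+3/\log^{\epsilon}n)L_{f^E}(p,m)$ either, since $L_{f^E}(p,m)$ can itself be tiny. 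Your proposed per-symbol comparison --- matching the polynomial estimator's MSE to the plug-in bias of $f^E$ at $m$ samples on the same symbol --- fails pointwise: a symbol with $mp_x$ small may contribute essentially nothing to $L_{f^E}(p,m)$ while still costing your estimator order $L^{-2}$ in squared bias.

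The missing idea is to change the estimand. The paper's small-probability estimator does not target $f_x(p_x)$; it targets $\EE[f_x(N_x''/(nt))]$, the expected output of the (modified) empirical estimator run on the larger $\Poi(nt)$ sample with $t\asymp\log^{1/2-\epsilon}n$. Writing this expectation as $e^{-\lambda_x}\sum_v c_v\lambda_x^v$ with $\lambda_x=np_x$, truncating at $u_{\max}=O(s_0t)$, and applying the Poisson-tail (Good--Toulmin style) smoothing to the coefficients yields an entire function of $\lambda_x$ that (i) admits an exactly unbiased estimator $h_{x,N_x}N_x!$ from a single $\Poi(\lambda_x)$ count, with coefficients small enough that the total variance is $(1\wedge k/n)\,n^{-\Omega(1)}$ up to polylog factors, and (ii) deviates from the targeted expectation by only $e^{-\Omega(s_0t)}\min\{1,\lambda_x/n\}$ per symbol. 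The entire bias of the $nt$-sample empirical estimator relative to $f(p)$ is then charged in one global term to $L_{f^E}(p,nt)$ (via Cauchy--Schwarz with weight $T(n)=\log^{\epsilon}n$, which is exactly where the factor $1+3/\log^{\epsilon}n$ comes from), and everything else --- including the $(1/s_0)\wedge(k/n)$ bias of the large-probability part, which is controlled through the second-order smoothness parameter $\omega^2_f$ rather than through any ``bias domination'' argument --- lands in the distribution-independent slack. Your worry about the transition regime $p_x\asymp\tau$ also dissolves in this framework: the decomposition of $f^*-f(p)$ into three terms is exact because $f^*_S$ and $f^*_L$ estimate the small- and large-indicator restrictions of the \emph{same} quantity $\EE[f^{ME}(X^{N''})]$, the indicators being measurable with respect to the independent classification sample, so no soft interpolation at the boundary is needed.
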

The $\tilde{O}$ reflects a multiplicative $\text{polylog}(n)$
factor unrelated to $k$ and $p$.
Again, for normalized support size, $D_{\cX}$ is replaced by $D_k$,
and we also modify $f^*$ as follows: if $k>{n}$, we apply $f^*$, and if $k\leq{n}$, we apply the corresponding min-max estimator~\cite{mmsize}. However, for experiments shown in Section~\ref{experiments}, the original $f^*$ is used without such modification.
In Section~\ref{extensions}, we note that for several properties,
the second term can be strengthened so that it does not depend on $\epsilon$.
\section{Implications}\label{imp}
Theorem~\ref{thm1} has three important implications.
\paragraph{Data amplification}
Many modern applications, such as those arising in genomics and
natural-language processing, concern properties of distributions
whose support size $k$ is comparable to or even larger than the
number of samples $n$.
For these properties, the estimation loss of the empirical estimator
$f^E$ is often much larger than $1/\log^{\epsilon} n$, hence 
the proposed estimator, $f^*$,
yields a much better estimate whose performance parallels that of
$f^E$ with $n\sqrt{\log n}$ samples.
This allows us to amplify the available data by a factor of
$\sqrt{\log n}$ regardless of the underlying distribution. 

Note however that for some properties $f$,
when the underlying distributions are limited to a fixed small support
size, $L_{f^E}(p,n)=\Theta(1/n)\ll1/{\log^{\epsilon} n}$. 
For such small support sizes, $f^*$ may not improve the estimation loss.

\paragraph{Unified estimator}
Recent works either prove efficacy results individually for each
property~\cite{mmentro, mmsize, mml1},
or are not known to be computable in linear time~\cite{mmcover,jnew}.

By contrast, $f^*$ is a linear-time estimator well for all properties satisfying
simple Lipschitz-type and second-order smoothness conditions. 
All properties described earlier: Shannon entropy,
normalized support size, 
normalized suppport coverage, power sum, $L_1$ distance and KL
divergence satisfy these conditions, and $f^*$ therefore applies
to all of them. 

More generally,
recall that a property $f$ is Lipschitz if all $f_x$ are Lipschitz.
It can be shown, e.g.~\cite{learning}, that with $\mathcal{O}(k)$
samples, $f^E$ approximates a $k$-element distribution to a
constant $L_1$ distance, and hence also estimates any Lipschitz property to a
constant loss.
It follows that $f^*$ estimates any Lipschitz property
over a distribution of support size $k$ to constant estimation loss
with $\mathcal{O}(k/\sqrt{\log k})$ samples.
This provides the first  general sublinear-sample estimator
for all Lipschitz properties. 

\paragraph{Competitive optimality}
Previous results were geared towards the estimator's worst estimation
loss over all possible distributions. 
For example, they derived estimators that approximate the
distance to uniformity of any $k$-element distribution with
$\mathcal{O}(k/\log k)$ samples, and showed that this number is optimal as
for some distribution classes estimating this distance requires
$\Omega(k/\log k)$ samples.

However, this approach may be too pessimistic. Distributions
are rarely maximally complex, or are hardest to estimate.
For example, most natural scenes have distinct simple patterns,
such as straight lines, or flat faces, hence can be learned
relatively easily. 

More concretely, consider learning distance to uniformity
for the collection of distributions with entropy bounded by $\log\log k$. 
It can be shown that for sufficiently large $k$, $f^E$ can learn
distance to uniformity to constant estimation loss using
$\mathcal{O}((\log{k})^{\Theta(1)})$ samples. Theorem~\ref{thm1}
therefore shows that the distance to uniformity can be learned
to constant estimation  loss with
$\mathcal{O}((\log{k})^{\Theta(1)}/\sqrt{\log\log k})$ samples. 
(In fact, without even knowing that the entropy is bounded.)
By contrast, the original min-max estimator results would still require 
the much larger $\Omega(k/\log k)$ samples. 

The rest of the paper is organized as follows.
Section~\ref{assum} describes mild smoothness conditions satisfied
by many natural properties, including all those mentioned above. 
Section~\ref{newf} describes the estimator's explicit form and
some intuition behind its construction and performance. 
Section~\ref{extensions} describes two improvements of the estimator
addressed in the supplementary material.
Lastly, Section~\ref{experiments} describes various experiments
that illustrate the estimator's power and competitiveness.
For space considerations, we relegate all the proofs to the
appendix. 
\section{Smooth properties}\label{assum}
Many natural properties, including all those mentioned in the
introduction satisfy some basic smoothness conditions. 
For $h\in(0,1]$, consider the Lipschitz-type parameter
\[
\lipf(h)
\ed
\max_{x}\max_{u, v\in{[0,1]}: \max\{u,v\}\geq h}\frac{|f_x(u) - f_x(v)|}{|u-v|},
\]
and the second-order smoothness parameter, resembling the modulus of
continuity in approximation
theory~\cite{approx,approxconst},\vspace{-.5em}
\[
\omega^2_f(h)
\ed
\max_x\max_{u,v\in{[0,1]}: |u-v|\leq
  {2h}}\left\{\left|\frac{f_x(u)+f_x(v)}{2}-f_x\Paren{\frac{u+v}{2}}\right|\right\}.
\]
We consider properties $f$ satisfying the following conditions:
(1) $\forall x\in \cX$, $f_x(0)=0$;
(2) $\lipf(h)\leq{\text{polylog}(1/h)}$ for $h\in{(0,1]}$;
(3) $\omega^2_f(h)\leq{S_f\cdot h}$ for some absolute constant $S_f$.

Note that the first condition, $f_x(0)=0$, entails no loss of generality.
The second condition implies that $f_x$ is continuous
over $[0,1]$, and in particular right continuous at 0 and
left-continuous at $1$.
It is easy to see that continuity is also essential for consistent estimation. 
Observe also that these conditions are more general than assuming 
that $f_x$ is Lipschitz, as can be seen for entropy where $f_x=x\log
x$, and that all seven properties described earlier satisfy these
three conditions. 
Finally, to ensure that $L_1$ distance satisfies these conditions, we let 
$f_x(p_x)=|p_x-q_x|-q_x$. 
\vspace{-0.25em}
\section{The Estimator \texorpdfstring{$f^*$}{f*} }\label{newf}
\vspace{-0.35em}
Given the sample size $n$, define an \emph{amplification
  parameter} $t>1$, and let $N''\sim \Poi(nt)$ be the amplified sample
size. Generate a sample sequence $X^{N''}$ independently from $p$, and
let $\mul{x}''$ denote the number of times symbol $x$ appeared in $X^{N''}$.
The empirical estimate of $f(p)$ with $\Poi(nt)$ samples is then
\[
f^E(X^{N''})=\sum_{x \in \cX} f_x\left(\frac{\mul{x}''}{N''} \right).
\]
Our objective is to construct an estimator $f^*$ that
approximates $f^E(X^{N''})$ for large $t$ using just $\Poi(2n)$ samples.

Since $N''$ sharply concentrates around $nt$, 
we can show that $f^E(X^{N''})$ can be approximated by
the \emph{modified empirical estimator}, 
\[
f^{\textit{ME}}(X^{N''}) \ed \sum_{x \in \cX} f_x\left(\frac{\mul{x}''}{nt} \right),
\]
where $f_x(p)\ed f_x(1)$ for all $p>1$ and $x\in \cX$.

Since large probabilities are easier to estimate, 
it is natural to set a threshold parameter $s$ and rewrite the
modified estimator as a separate sum over small and large probabilities,
\[
f^{\textit{ME}}(X^{N''}) =
\sum_{x\in \cX} f_x\left(\frac{\mul{x}''}{nt} \right)\indic_{p_x\leq s} + \sum_{x\in \cX} f_x\left(\frac{\mul{x}''}{nt} \right)\indic_{p_x> s}.
\] 
Note however that we do not know the exact probabilities. 
Instead, we draw two independent sample sequences $X^N$ and $X^{N'}$
from $p$, each of an independent $\Poi(n)$ size, 
and let $\mul{x}$ and $\mul{x}'$ be the number of occurrences of
$x$ in the first and second sample sequence respectively.
We then set a \emph{small/large-probability threshold} $s_0$ and
classify a probability $p_x$ as large or small according to
$\mul{x}'$:
\[
f^{\textit{ME}}_S(X^{N''}, X^{N'})\ed
\sum_{x\in \cX} f_x\left(\frac{\mul{x}''}{nt} \right) \indic_{\mul{x}'\leq s_0}
\]
is the \emph{modified small-probability empirical estimator}, and
\[
f^{\textit{ME}}_L(X^{N''}, X^{N'})\ed
\sum_{x\in \cX} f_x\left(\frac{\mul{x}''}{nt} \right) \indic_{\mul{x}'> s_0}
\]
is the \emph{modified large-probability empirical estimator}.
We rewrite the modified empirical estimator as
\[
f^{\textit{ME}}(X^{N''}) = f^{\textit{ME}}_S(X^{N''}, X^{N'}) +f^{\textit{ME}}_L(X^{N''}, X^{N'}).
\]

Correspondingly, we express our estimator $f^*$ as a combination
of small- and large-probability estimators,
\[
f^*(X^N, X^{N'}) \ed f^*_S(X^N, X^{N'})+f^*_L(X^N, X^{N'}).
\]

The \emph{large-probability estimator} approximates $f^{\textit{ME}}_L(X^{N''}, X^{N'})$ as
\[
f^*_L(X^N, X^{N'}) \ed f^{\textit{ME}}_L(X^{N}, X^{N'})
=\sum_{x\in \cX} f_x\left(\frac{\mul{x}}{nt} \right) \indic_{\mul{x}'> s_0}.
\]
Note that we replaced the length-$\Poi(nt)$ sample sequence $X^{N''}$
by the independent length-$\Poi(n)$ sample sequence $X^{N}$.
We can do so as large probabilities are well estimated from
fewer samples. 

The \emph{small-probability estimator} $f^*_S(X^N, X^{N'})$ approximates
$f^{\textit{ME}}_S(X^{N''}, X^{N'})$ and is more involved. We outline its
construction below and details can be found in Appendix~\ref{boundC}. 
The expected value of $f^{\textit{ME}}$ for the small probabilities is
\[
\EE[f^{\textit{ME}}_S(X^{N''}, X^{N'})]
= 
\sum_{x \in \cX}
\EE[\indic_{\mul{x}\le s_0}]\EE\left[f_x\Paren{\frac{\mul{x}''}{nt}}\right].
\]
Let $\lambda_x\ed np_x$ be the expected number of times symbol $x$ 
will be observed in $X^{N}$, and define
\[
g_x(v) \ed f_x \left( \frac{v}{nt}\right) \left(\frac{t}{t-1}\right)^v.
\] 
Then
\[
\EE\left[ f_x\left(\frac{\mul{x}''}{nt} \right)\right]
=
\sum^\infty_{v=0} e^{-\lambda_xt} \frac{(\lambda_xt)^v}{v!} f_x \left( \frac{v}{nt}\right) = e^{-\lambda_x} \sum^\infty_{v=1} e^{-\lambda_x(t-1)} \frac{(\lambda_x(t-1))^v}{v!} g_x \left( v\right).
\]

As explained in Appendix~\ref{Kf}, the sum beyond a
\emph{truncation threshold}
\[
\umax \ed 2s_0t+2s_0-1
\]
is small, hence it suffices to consider the truncated sum
\[
e^{-\lambda_x} \sum^{\umax}_{v=1} e^{-\lambda_x(t-1)}  \frac{(\lambda_x(t-1))^v}{v!} g_x \left( v\right).
\]
Applying the \emph{polynomial smoothing technique} in~\cite{pnas},
Appendix~\ref{boundC} approximates the above summation by
\[
e^{-\lambda_x}\sum^\infty_{v=1} h_{x,v} \lambda_x^v,
\]
where
\[
h_{x,v} =  (t-1)^v \sum^{(\umax\land v)}_{u=1} \frac{g_x(u)(-1)^{v-u}}{(v-u)!u!} \Paren{1-e^{-r}\sum_{j=0}^{v+u}\frac{r^j}{j!}},
\] 
and
\[
r\ed 10s_0t+10s_0.
\]
\ignore{Observe that $1-e^{-r}\sum_{j=0}^{v+u}\frac{r^j}{j!}$
is the tail probability of a $\Poi(r)$ distribution,
hence diminishes rapidly, and attenuates summation terms, beyond $r$,
which can therefore be viewed as a \emph{polynomial smoothing parameter}.}

Observe that $1-e^{-r}\sum_{j=0}^{v+u}\frac{r^j}{j!}$
is the tail probability of a $\Poi(r)$ distribution that
diminishes rapidly beyond $r$.
Hence $r$ determines which summation terms will be attenuated,
and serves as a \emph{smoothing parameter}. 

An unbiased estimator of $e^{-\lambda_x}\sum^\infty_{v=1} h_{x,v} \lambda_x^v$ is
\[
\sum^\infty_{v=1} h_{x,v} v!\cdot \indic_{N_x=v}
=
h_{x,N_x}\cdot N_x!.
\]
Finally, the small-probability estimator is
\[
f^*_S(X^N, X^{N'}) \ed \sum_{x \in \cX} h_{x,N_x} \cdot N_x! \cdot  \indic_{\mul{x}'\leq s_0}.
\]
\section{Extensions}\label{extensions}
In Theorem~\ref{thm1}, for fixed $n$, as $\epsilon\to0$,
the final slack term $1/\log^\epsilon n$ approaches a constant. 
For certain properties it can be improved. 
For normalized support size, normalized support coverage,
and distance to uniformity, a more involved estimator improves this term to
\[
C_{f,\gamma}\min\left\{\frac{k}{n\log^{1-\epsilon}n}+\frac{1}{n^{1-\gamma}}, \frac{1}{\log^{1+\epsilon}{n}}  \right\},
 \]
for any fixed constant $\gamma\in{(0,1/2)}$. 

For Shannon entropy, correcting the bias of
$f^*_L$~\cite{emiller}
and further dividing the probability regions, 
reduces the slack term even more, to
\[
C_{f,\gamma}\min\left\{\frac{k^2}{n^2\log^{2-\epsilon}n}+\frac{1}{n^{1-\gamma}}, \frac{1}{\log^{2+2\epsilon}{n}}\right\}.
\]

Finally, the theorem compares the performance of $f^*$ with $2n$
samples to that of $f^E$ with $n\sqrt{\log n}$ samples. 
As shown in the next section, the performance is often
comparable to that of $n\log n$ samples. 
It would be interesting to prove a competitive result that 
enlarges the amplification to $n\log^{1-\epsilon} n$ 
or even $n\log n$.
This would be essentially the best possible as 
it can be shown that for the symmetric properties mentioned in the introduction,
amplification cannot exceed $\mathcal{O}(n\log n)$. 
\section{Experiments}\label{experiments}
\vspace{-.5em}

We evaluated the new estimator $f^*$ by comparing its performance to
several recent estimators~\cite{mmentro,mmsize,mmcover,pnas,jvhw}.
To ensure robustness of the results, we performed the comparisons for
all the symmetric properties described in the introduction:
entropy, support size, support coverage, power sums, and distance to
uniformity. For each property, we considered six underlying
distributions: uniform, Dirichlet-drawn-, Zipf, binomial, Poisson, and geometric.
The results for the first three properties are shown in
Figures~\ref{figure1}--\ref{figure3}, the plots for the final two 
properties can be found in Appendix~\ref{experimentalresults}.
{\em For nearly all tested properties and distributions, $f^*$ achieved state-of-the-art performance.}

As Theorem~\ref{thm1} implies, for all five properties,
with just $n$ (not even $2n$) samples, $f^*$ performed as well
the empirical estimator $f^E$ with roughly $n\sqrt{\log n}$ samples. 
Interestingly, in most cases $f^*$ performed even better, similar
to $f^E$ with $n\log n$ samples. 

Relative to previous estimators, 
depending on the property and distribution, different previous estimators
were best. 
But in essentially all experiments, $f^*$
was either comparable or outperformed the best previous estimator.
The only exception was PML that attempts to smooth the estimate,
hence performed better on uniform, and near-uniform Dirichlet-drawn
distributions for several properties. 

Two additional advantages of $f^*$ may be worth noting.
First, underscoring its competitive performance for each distribution, 
the more skewed the distribution the better is its relative efficacy. 
This is because most other estimators are optimized for the worst
distribution, and work less well for skewed ones.

Second, by its simple nature, the empirical estimator $f^E$ is
very stable. Designed to emulate $f^E$ for more samples,
$f^*$ is therefore stable as well. 
Note also that $f^E$ is not always the best estimator choice. For
example, it always underestimates the distribution's support size.
Yet even for normalized support size, Figure~\ref{figure2} shows that $f^*$ 
outperforms other estimators including those designed specifically
for this property (except as above for PML on near-uniform distributions).

The next subsection describes the experimental settings. 
Additional details and further interpretation of the observed results 
can be found in Appendix~\ref{experimentalresults}.

\subsection*{Experimental settings}
We tested the five properties on the following distributions:
uniform distribution;
a distribution randomly generated from Dirichlet prior with parameter 2;
Zipf distribution with power $1.5$;
Binomial distribution with success probability $0.3$;
Poisson distribution with mean $3{,}000$;
geometric distribution with success probability $0.99$.

With the exception of normalized support coverage, all other 
properties were tested on distributions of support size $k=10{,}000$.
The Geometric, Poisson, and Zipf distributions were truncated at $k$
and re-normalized. 
The number of samples, $n$, ranged from $1{,}000$ to $100{,}000$,
shown logarithmically on the horizontal axis. 
Each experiment was repeated 100 times and the reported results,
shown on the vertical axis, reflect their mean squared error (MSE).

We compared the estimator's performance with $n$ samples to that
of four other recent estimators as well as 
the empirical estimator with $n$, $n\sqrt{\log n}$, and 
$n\log n$ samples.
We chose the amplification parameter $t$ as $\log^{1-\alpha}n+1$,
where $\alpha\in\{0.0,0.1,0.2,...,0.6\}$ was selected based on independent
data, and similarly for $s_0$. 
Since $f^*$ performed even better than Theorem~\ref{thm1} guarantees,
$\alpha$ ended up between 0 and 0.3 for all properties, 
indicating amplification even beyond $n\sqrt{\log n}$.
The graphs denote
$f^*$ by NEW,
$f^E$ with $n$ samples by Empirical,
$f^E$ with $n\sqrt{\log{n}}$ samples by Empirical+,
$f^E$ with $n\log{n}$ samples by Empirical++,
the pattern maximum likelihood estimator in~\cite{mmcover} by PML,
the Shannon-entropy estimator in~\cite{jvhw} by JVHW,
the normalized-support-size estimator in~\cite{mmsize} and 
the entropy estimator in~\cite{mmentro} by WY,
and the smoothed Good-Toulmin Estimator for normalized support
coverage estimation~\cite{pnas}, slightly modified to account for
previously-observed elements that may appear in the subsequent sample,
by SGT. 

While the empirical and the new estimators have the
same form for all properties, as noted in the introduction, the
recent estimators are property-specific, and each was derived
for a subset of the properties. In the experiments we applied these
estimators to all the properties for which they were derived. 
Also, additional estimators~\cite{ventro, pentro, mentro, gsize, ccover, cacover, jcover} for various properties were compared
in~\cite{mmentro,mmsize,pnas,jvhw} and found to perform
similarly to or worse than recent estimators, hence we do not test
them here.
\ignore{Other estimators~\cite{Jiantao-new} have higher complexity and have
not yet been implemented, and we are not comparing them either.}
\vfill
\begin{figure*}[h]
\begin{multicols}{3}
    \includegraphics[width=0.9\linewidth]{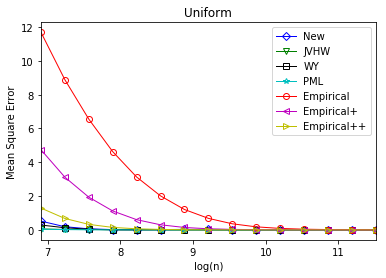}\par 
    \includegraphics[width=0.9\linewidth]{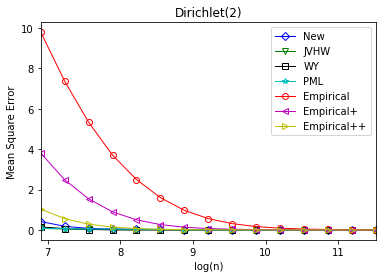}\par 
    \includegraphics[width=0.9\linewidth]{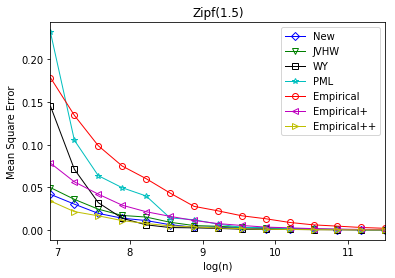}\par
     \end{multicols}
\vspace{-2.5em}
\begin{multicols}{3}
    \includegraphics[width=0.9\linewidth]{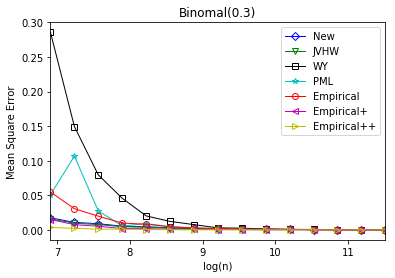}\par
    \includegraphics[width=0.9\linewidth]{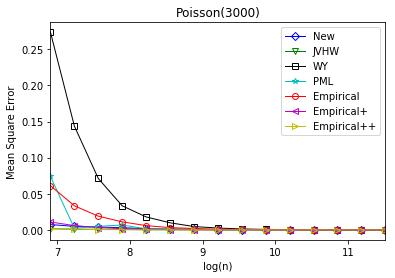}\par
    \includegraphics[width=0.9\linewidth]{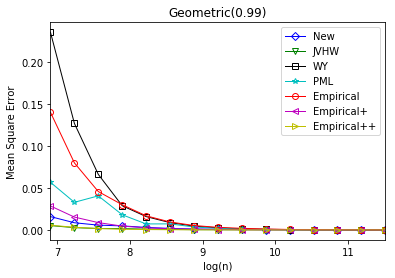}\par
\end{multicols}
\vspace{-1em}
\caption{Shannon Entropy}
\label{figure1}
\end{figure*}

\begin{figure*}[h]
\begin{multicols}{3}
    \includegraphics[width=0.9\linewidth]{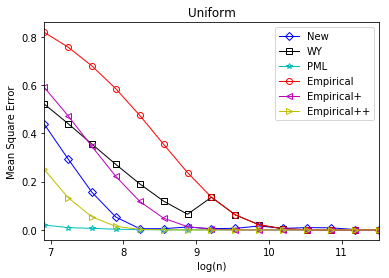}\par 
    \includegraphics[width=0.9\linewidth]{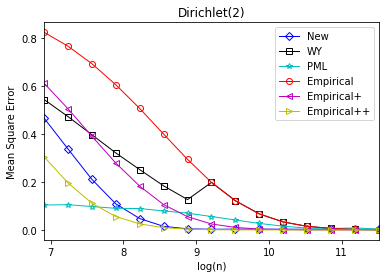}\par 
    \includegraphics[width=0.9\linewidth]{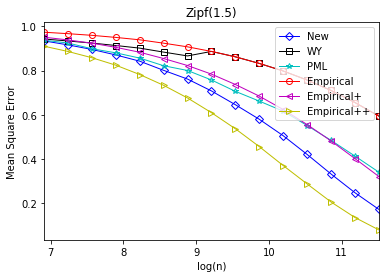}\par
     \end{multicols}
     \vspace{-2.5em}
\begin{multicols}{3}
    \includegraphics[width=0.9\linewidth]{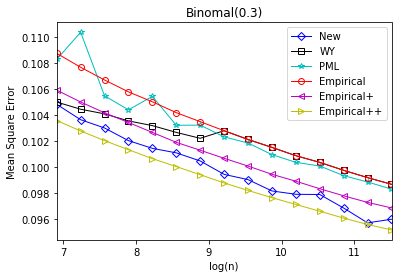}\par
    \includegraphics[width=0.9\linewidth]{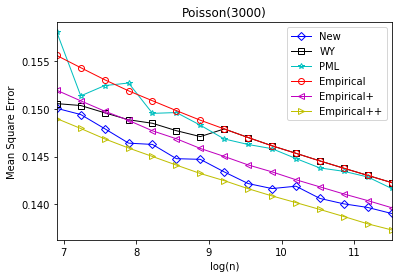}\par
    \includegraphics[width=0.9\linewidth]{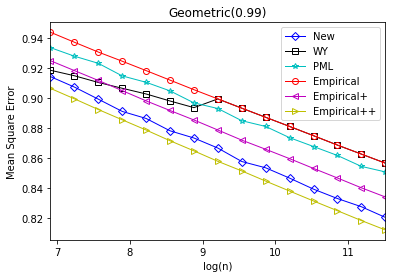}\par
\end{multicols}
\vspace{-1em}
\caption{Normalized Support Size}
\label{figure2}
\end{figure*}
\pagebreak

\begin{figure*}[ht]
\begin{multicols}{3}
    \includegraphics[width=0.9\linewidth]{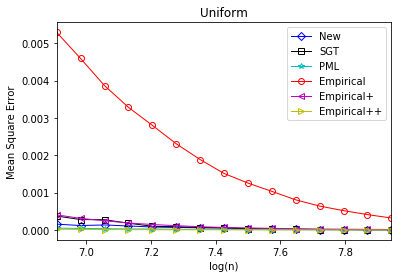}\par 
    \includegraphics[width=0.9\linewidth]{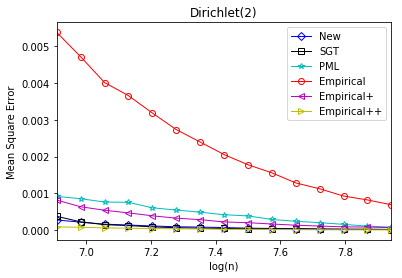}\par 
    \includegraphics[width=0.9\linewidth]{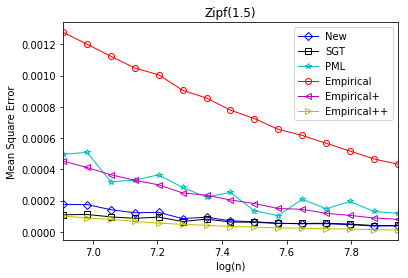}\par
     \end{multicols}
     \vspace{-2.5em}
\begin{multicols}{3}
    \includegraphics[width=0.9\linewidth]{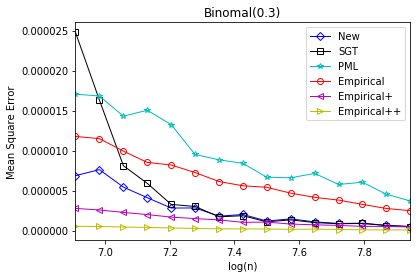}\par
    \includegraphics[width=0.9\linewidth]{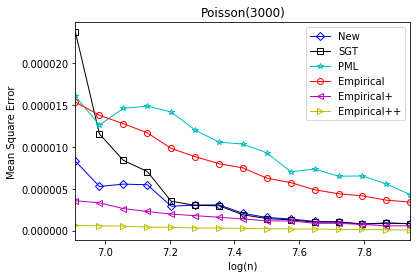}\par
    \includegraphics[width=0.9\linewidth]{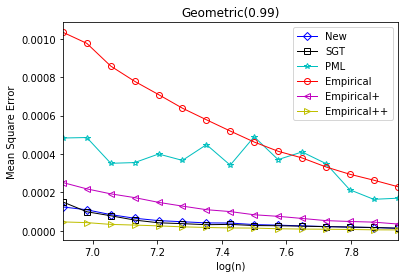}\par
\end{multicols}
\vspace{-1.5em}
\caption{Normalized Support Coverage}
\label{figure3}
\end{figure*}
\vfill
\pagebreak

\ignore{
\begin{table}[h]
  \caption{Comparison of three methods for property estimation} 
  \label{table4}
  \centering
  \begin{tabular}{llll}
    \toprule
    \cmidrule(r){1-2}
    Attributes  & LMM & Estimator ${f}^*$ & PML \\ 
    \midrule
    Permutation invariant& Yes & No & Yes\\ 
    Statistical model & Broad &Broad &Broad\\ 
    Time complexity &Polynomial &Near-linear&Unclear\\
    Functional dependent & No & Yes & No\\
    Parameter tuning & Yes & Yes & No\\
    \bottomrule
  \end{tabular}
\end{table}}

\section{Conclusion}

In this paper, we considered the fundamental learning problem of
estimating properties of discrete distributions. The best-known
distribution-property estimation technique is the ``empirical
estimator'' that takes the data's empirical frequency and plugs it in
the property functional. We designed a general estimator that for a
wide class of properties, uses only $n$ samples to achieve the same
accuracy as the plug-in estimator with $n\sqrt{\log n}$ samples. This
provides an off-the-shelf method for \emph{amplifying} the data available
relative to traditional approaches. For all the properties and
distributions we have tested, the proposed estimator performed as well
as the best estimator(s). A meaningful future research direction would
be to verify the optimality of our results: the amplification factor
$\sqrt{\log n}$ and the slack terms. There are also several important
properties that are not included in our paper, for example, R\'enyi
entropy~\cite{jrenyi} and the generalized distance to uniformity~\cite{yi2018, batu17}. It would
be interesting to determine whether  \emph{data amplification} could be
obtained for these properties as well. 

\appendix
\section{Smooth properties}\label{assumm}
Theorem {\bf $1$} holds for a wide class of properties $f$.
For $h\in(0,1]$, consider the Lipschitz-type parameter
\[
\lipf(h)
\ed
\max_{x}\max_{u, v\in{[0,1]}: \max\{u,v\}\geq h}\frac{|f_x(u) - f_x(v)|}{|u-v|},
\]
and the second-order smoothness parameter, resembling similar approximation-theory
terms~\cite{approx,approxconst},
\[
\omega^2_f(h)
\ed
\max_x\max_{u,v\in{[0,1]}: |u-v|\leq
  {2h}}\left\{\left|\frac{f_x(u)+f_x(v)}{2}-f_x\Paren{\frac{u+v}{2}}\right|\right\}.
\]
We assume that $f$ satisfies the following conditions:
\begin{itemize}
\item
$\forall x\in \cX$, $f_x(0)=0$;
\item
$\lipf(h)\leq{\text{polylog}(1/h)}$ for $h\in{(0,1]}$;
\item
$\omega^2_f(h)\leq{S_f\cdot h}$ for some absolute constant $S_f$.
\end{itemize}

Note that the first condition, $f_x(0)=0$, entails no loss of generality.
The second condition implies that $f_x$ is continuous
over $[0,1]$, and in particular right continuous at 0 and
left-continuous at $1$.
It is easy to see that continuity is also essential for consistent estimation. 
Observe also that these conditions are more general than assuming 
that $f_x$ is Lipschitz, as can be seen for entropy where $f_x=x\log
x$, and that all seven properties described earlier satisfy these
three conditions. 
Finally, to ensure that $L_1$ distance satisfies these conditions, we let 
$f_x(p_x)=|p_x-q_x|-q_x$. Observe also that these conditions are more general than assuming 
that $f_x$ is Lipschitz, as can be seen for entropy where $f_x=x\log x$.

For normalized support size, we modify our estimator $f^*$ as follows: if $k>{n}$, we apply the estimator $f^*$, and if $k\leq{n}$, we apply the corresponding min-max estimator~\cite{mmsize}. However, for experiments shown in Section~\ref{experimentalresults}, the original estimator $f^*$ is used without such modification.

Table~\ref{sample-table2} below summarizes the results on the quantity $\lipf(h)$ and $S_f$ for different properties. Note that for a given property, $\lipf(h)$ is unique while $S_f$ is not.

\begin{table}[h]
  \caption{Values of $\lipf(h)$ and $S_f$ for different properties}
  \label{sample-table2}
  \centering
  \begin{tabular}{llll}
    \toprule
    \cmidrule(r){1-2}
    Property & $f_x(p_x)$ & $\lipf(h)$ & $S_f$\\ 
    \midrule
    KL divergence & $p_x\log \frac{p_x}{q_x}$ & $-\min_{x\in\cX}\log(hq_x)$ & $\log 2$ \\
    $L_1$ distance & $\left|p_x-q_x\right|-q_x$ & 1 & 1 \\ 
    Shannon entropy & $p_x\log \frac{1}{p_x}$ & $-\log(h)$ & $\log 2$ \\ 
    Power sum ($a$)  & $p_x^a$ ($a\geq1$) & 1 & $a$ \\
    Normalized support coverage & $\frac{1-e^{-mp_x}}{m}$ & 1 & 1 \\ 
    Distance to uniformity & $\left|p_x-\frac{1}{k}\right|-\frac{1}{k}$ & 1 & 1 \\ 
    \bottomrule
  \end{tabular}
\end{table}

For simplicity, we denote the \emph{partial expectation} 
$\EE_Y[X]\ed\EE[X\indic_{Y}]$,
and $a\land b \ed \min\{a,b\}$. To simplify our proofs and expressions, we assume that 
the number of samples $n\geq{150}$, 
the amplification parameter $t>2.5$,
and $0<\epsilon\leq{0.1}$.
Without loss of generality, we also assume that $s_0$, $\umax$ and $r$
are integers. 
Finally, set $t= c_1\log^{{1}/{2}-\epsilon} n+1$ and $s_0 = c_2\log^{2\epsilon}n$, where $c_1$ and $c_2$ are fixed constants such that $1\geq{c_1,c_2}>0$ and $c_1\sqrt{c_2}\leq{1/11}$.

\section{Outline}
The rest of the appendix is organized as follows.

In Section~\ref{conc}, we present a few
concentration inequalities for Poisson and Binomial
random variables that will be used in subsequent proofs. 
In Section~\ref{modemp}, we analyze the performance
of the modified empirical estimator $f^{\textit{ME}}$ that estimates 
$p_x$ by $N_x/n$ instead of $N_x/N$. 
We show that $f^{\textit{ME}}$ performs nearly as well as the
original empirical estimator $f^{E}$, but is 
significantly easier to analyze. 

In Section~\ref{largesmall}, we partition the loss of our
estimator, $L_{f^*}(p, nt)$, into three parts:
$\EE[A^2]$,
$\EE[B^2]$, and $\EE[C^2]$,
corresponding to a quantity which is roughly
$L_{f^E}(p, nt) $, the loss incurred by $f^*_L$, and the loss incurred by
$f^*_S$, respectively.

In Section~\ref{boundA}, we bound $\EE[A^2]$
by roughly $L_{f^E}(p,nt)$.
In Section~\ref{boundB}, we bound $\EE[B^2]$: in Section~\ref{biasB} and~\ref{varB}
, we bound the squared bias and variance of $f^*_L$ respectively.

In Section~\ref{Kf}, we partition the series to be estimated in
$\EE[C^2]$ into $R_f$ and $K_f$, and show that it suffices to estimate the 
quantity $K_f$. 
In Section~\ref{constfS}, we outline how we construct
the linear estimator $f^*_S$ based on $K_f$. 
Then, we bound term $\EE[C^2]$: in Section~\ref{varfSec} and~\ref{biasfS}, we bound the variance and squared bias of $f^*_S$ respectively. In Section~\ref{boundCsummary}, we derive a tight bound on $\EE[C^2]$.

In Section~\ref{mainresult}, we prove Theorem {\bf $1$} based on our previous results. 

In Section~\ref{experimentalresults}, we demonstrate the
practical advantages of our methods through experiments on different properties and
distributions. We show that our estimator can even match the
performance of the $n\log n$-sample empirical estimator in estimating various properties.

\section{Preliminary Results}
\subsection{Concentration Inequalities for Poisson and Binomial}\label{conc}
The following lemma gives tight tail probability bounds for Poisson and Binomial random variables.
\begin{Lemma}\label{tailprob}~\cite{concen}
Let $X$ be a Poisson or Binomial random variable with mean $\mu$, then for any $\delta>0$, 
\[
\PP(X\geq{(1+\delta)\mu})\leq{{\left(\frac{e^\delta}{(1+\delta)^{(1+\delta)}}\right)}^{\mu}}\leq{e^{-(\delta^2\land\delta)\mu/3}}
\]
and for any $\delta\in{(0, 1)}$, 
\[
\PP(X\leq{(1-\delta)\mu})\leq{{\left(\frac{e^{-\delta}}{(1-\delta)^{(1-\delta)}}\right)}^{\mu}}\leq{e^{-\delta^2\mu/2}}.
\]
\end{Lemma}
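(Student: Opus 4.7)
The plan is to prove both tail bounds by the standard Chernoff method: apply Markov's inequality to an exponential moment, optimize the free parameter, then simplify the resulting rate function.

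First I would handle the Poisson case. For $X\sim\Poi(\mu)$ the moment generating function is $\EE[e^{\theta X}] = \exp(\mu(e^\theta-1))$, so for any $\theta>0$ Markov's inequality gives
\[
\PP(X\geq (1+\delta)\mu) \;\leq\; \frac{\EE[e^{\theta X}]}{e^{\theta(1+\delta)\mu}} \;=\; \exp\bigl(\mu(e^\theta-1)-\theta(1+\delta)\mu\bigr).
\]
Minimizing the exponent in $\theta$ sets $e^\theta=1+\delta$, i.e.\ $\theta=\log(1+\delta)>0$, which yields the first displayed inequality $\bigl(e^\delta/(1+\delta)^{1+\delta}\bigr)^\mu$. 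For the lower tail one applies the same argument with $\theta<0$ and the substitution $\theta=\log(1-\delta)$, which is valid for $\delta\in(0,1)$.

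Next I would handle the Binomial case. For $X\sim\mathrm{Bin}(n,p)$ with $\mu=np$, the MGF factors over independent Bernoullis, $\EE[e^{\theta X}]=(1-p+pe^\theta)^n$. Using the elementary bound $1-p+pe^\theta\leq \exp(p(e^\theta-1))$ (which follows from $1+x\le e^x$), one obtains $\EE[e^{\theta X}]\leq\exp(\mu(e^\theta-1))$, the same upper bound as in the Poisson case. Repeating the optimization produces exactly the same Chernoff rates, so both random variables satisfy the displayed bounds with the identical function of $\delta$.

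Finally I would derive the simplified exponential form. Let $\varphi(\delta) = (1+\delta)\log(1+\delta)-\delta$, so the upper-tail rate is $\exp(-\mu\varphi(\delta))$. A short calculus argument shows $\varphi(\delta)\geq \delta^2/3$ for $\delta\in(0,1]$ and $\varphi(\delta)\geq\delta/3$ for $\delta\geq 1$; together these give $\varphi(\delta)\geq(\delta^2\wedge\delta)/3$. The analogous function for the lower tail, $\psi(\delta) = (1-\delta)\log(1-\delta)+\delta$, satisfies $\psi(\delta)\geq\delta^2/2$ for $\delta\in(0,1)$ by comparing Taylor expansions (or by checking $\psi(0)=\psi'(0)=0$ and $\psi''(\delta)=1/(1-\delta)\geq 1$). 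Plugging these two one-line bounds into the optimized Chernoff expressions gives the desired inequalities.

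The only step that requires any actual work is the last one, verifying the scalar inequalities $\varphi(\delta)\geq(\delta^2\wedge\delta)/3$ and $\psi(\delta)\geq\delta^2/2$; everything else is standard MGF manipulation. Neither is really hard: $\psi$ is handled by a second-derivative comparison, and $\varphi$ splits into the two regimes $\delta\leq 1$ (Taylor/second derivative) and $\delta\geq 1$ (monotonicity of $\varphi(\delta)/\delta$), so there is no substantive obstacle.
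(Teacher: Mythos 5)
Your Chernoff-method proof is correct and is exactly the standard argument; the paper does not prove this lemma itself but cites it to Chung and Lu~\cite{concen}, whose derivation proceeds the same way (MGF bound, optimize $\theta=\log(1\pm\delta)$, then bound the rate function). One small caution on the only nontrivial step: for $\varphi(\delta)=(1+\delta)\log(1+\delta)-\delta\ge\delta^2/3$ on $(0,1]$, a bare second-derivative comparison gives only $\delta^2/4$ (since $\varphi''(\delta)=1/(1+\delta)\ge 1/2$ there), so you need a slightly sharper check --- e.g.\ $g(\delta)=\varphi(\delta)-\delta^2/3$ has $g(0)=g'(0)=0$, $g'$ increasing then decreasing on $(0,1]$ with $g'(1)=\log 2-2/3>0$, hence $g>0$ --- but this is routine and your overall argument stands.
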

  We have the following corollary by choosing different values of $\delta$.
\begin{Lemma}\label{cortail}
Let $X$ be a Poisson or Binomial random variable with mean $\mu$,
\[
\PP(X\leq{\frac{1}{2}\mu})\leq{e^{-0.15\mu}},\ \ \PP(X\leq{\frac{1}{3}\mu})\leq{e^{-0.30\mu}},
\]
\[
\PP(X\leq{\frac{1}{5}\mu})\leq{e^{-0.478\mu}} \text{, and }\PP(X\leq{\frac{1}{16}\mu})\leq{e^{-0.76\mu}}.
\]
\end{Lemma}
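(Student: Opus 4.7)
The plan is to apply Lemma~\ref{tailprob} four times, once for each case $c\in\{2,3,5,16\}$ in the statement. In each case I set $1-\delta=1/c$, i.e.\ $\delta=(c-1)/c\in(0,1)$, so that $(1-\delta)\mu=\mu/c$. Note that the weaker bound $e^{-\delta^2\mu/2}$ in Lemma~\ref{tailprob} yields exponents of only $1/8,\,2/9,\,8/25,\,225/512$, all strictly smaller than the constants $0.15,\,0.30,\,0.478,\,0.76$ needed here; hence I would use the sharper first inequality
\[
\PP\bigl(X\le(1-\delta)\mu\bigr)\le\left(\frac{e^{-\delta}}{(1-\delta)^{(1-\delta)}}\right)^{\!\mu}=\exp\Bigl(\mu\bigl[-\delta-(1-\delta)\ln(1-\delta)\bigr]\Bigr),
\]
which holds uniformly for Poisson and Binomial $X$.

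Next I would compute, for each value of $\delta$, the exponent
\[
\phi(\delta)\ed -\delta-(1-\delta)\ln(1-\delta)=-\delta+(1/c)\ln c,
\]
obtaining the numerical values
\[
\phi(1/2)=-\tfrac{1}{2}+\tfrac{1}{2}\ln 2\approx-0.1534,\quad \phi(2/3)=-\tfrac{2}{3}+\tfrac{1}{3}\ln 3\approx-0.3013,
\]
\[
\phi(4/5)=-\tfrac{4}{5}+\tfrac{1}{5}\ln 5\approx-0.4781,\quad \phi(15/16)=-\tfrac{15}{16}+\tfrac{1}{16}\ln 16\approx-0.7642.
\]
Plugging each of these into the preceding display, and then using monotonicity of $\exp$ to replace every exponent by the slightly larger rounded value $-0.15,\,-0.30,\,-0.478,\,-0.76$ respectively, yields the four claimed tail bounds.

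The main (and essentially only) difficulty is numerical bookkeeping: one must check that for each of the four cases, the exact value of $|\phi(\delta)|$ indeed exceeds the rounded constant used in the statement so that the inequality direction is preserved. Since all four exact magnitudes exceed their rounded counterparts by roughly $10^{-3}$ or more, this is immediate, and no further work beyond the substitution is required.
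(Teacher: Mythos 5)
Your proposal is correct and is exactly the argument the paper intends: the paper derives Lemma~\ref{cortail} from the sharper (non-simplified) lower-tail bound of Lemma~\ref{tailprob} by substituting $\delta=(c-1)/c$ for $c\in\{2,3,5,16\}$, and your numerical verification (including the observation that the cruder $e^{-\delta^2\mu/2}$ form is too weak for all four constants) fills in the only details the paper leaves implicit.
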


\begin{Lemma}\label{coninv}
Let $N\sim\Poi(n)$,
\begin{align*}
{\EE \left[ \sqrt{\frac{n}{N}}\Bigg\vert N\geq{1}\right]}\leq 1+\frac{3}{n}.
\end{align*}
\end{Lemma}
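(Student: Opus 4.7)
Write the conditional expectation as
\[
\EE\!\left[\sqrt{n/N}\,\Big|\,N\geq 1\right] = \frac{\EE[\sqrt{n/N}\,\indic_{N\geq 1}]}{\PP(N\geq 1)},
\]
and split the numerator according to whether $N$ is close to its mean $n$:
\[
\EE[\sqrt{n/N}\,\indic_{N\geq 1}]
= \EE[\sqrt{n/N}\,\indic_{N\geq n/2}] + \EE[\sqrt{n/N}\,\indic_{1\leq N<n/2}].
\]
For the tail piece, I would use the pointwise bound $\sqrt{n/N}\leq\sqrt{n}$ on $\{N\geq 1\}$ together with Lemma~\ref{cortail} (taking $\delta=1/2$), obtaining $\sqrt{n}\,\PP(N\leq n/2)\leq \sqrt{n}\,e^{-0.15n}$, which is negligible for $n\geq 150$.

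The bulk of the argument is to bound the main piece by $1+O(1/n)$ via a second-order Taylor-type inequality. I would verify that
\[
\frac{1}{\sqrt{y}} \leq 1 - \frac{y-1}{2} + \frac{3}{\sqrt{2}}(y-1)^2 \quad\text{for all } y\geq \tfrac{1}{2}.
\]
Setting $h(y) := 1-(y-1)/2+(3/\sqrt 2)(y-1)^2 - 1/\sqrt{y}$, one checks $h(1)=h'(1)=0$ and $h''(y)=2\cdot(3/\sqrt 2) - (3/4)y^{-5/2}\geq 0$ on $[1/2,\infty)$, so $h\geq 0$ there. Moreover the right-hand quadratic has discriminant $(1/2)^2 - 4\cdot(3/\sqrt{2}) < 0$, so it is globally nonnegative. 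Substituting $y=N/n$ and dropping the indicator (legal because the upper bound is nonnegative everywhere),
\[
\EE[\sqrt{n/N}\,\indic_{N\geq n/2}]
\leq \EE\!\left[1-\frac{N-n}{2n}+\frac{3}{\sqrt{2}}\frac{(N-n)^2}{n^2}\right]
= 1 + \frac{3\,\Var(N)}{\sqrt{2}\,n^2} = 1+\frac{3}{\sqrt{2}\,n},
\]
using $\EE[N]=\Var(N)=n$.

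Combining the two pieces and dividing by $\PP(N\geq 1)=1-e^{-n}$,
\[
\EE\!\left[\sqrt{n/N}\,\Big|\,N\geq 1\right] \leq \frac{1 + 3/(\sqrt{2}\,n) + \sqrt{n}\,e^{-0.15n}}{1-e^{-n}}.
\]
For $n\geq 150$ the exponential terms are astronomically smaller than $1/n$, so the right-hand side is at most $1+3/n$ with room to spare, since $3/\sqrt{2}\approx 2.12$ leaves a slack of roughly $0.88/n$ to absorb both $\sqrt{n}\,e^{-0.15n}$ and the $(1-e^{-n})^{-1}$ correction.

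The only real obstacle is calibrating the second-order expansion so that the leading constant is strictly below $3$, leaving slack for the tail and denominator corrections; the choice $C=3/\sqrt{2}$ on the event $\{N\geq n/2\}$ accomplishes this, but if one needed an even smaller constant, one could restrict to $\{N\geq \alpha n\}$ for $\alpha$ closer to $1$ and use Lemma~\ref{cortail} with a correspondingly smaller $\delta$ to control the excluded tail.
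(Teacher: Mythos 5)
Your argument is correct, but it takes a genuinely different route from the paper's. The paper never touches $\sqrt{n/N}$ directly: it bounds the larger quantity $\EE[\,n/N\mid N\geq 1]$ using the elementary pointwise inequality $\frac{n}{N}\leq\frac{n}{N+1}+\frac{3n}{(N+1)(N+2)}$ (valid for $N\geq 1$) together with the exact Poisson identities $\EE[\frac{n}{N+1}]=\PP(N\geq 1)$ and $\EE[\frac{3n}{(N+1)(N+2)}]=\frac{3}{n}\PP(N\geq 2)$, giving $\EE[\,n/N\mid N\geq1]\leq 1+3/n$; the stated bound on $\EE[\sqrt{n/N}\mid N\geq1]$ then follows by Jensen since $\sqrt{1+3/n}\leq 1+3/n$. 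That computation is exact, needs no concentration input, and holds for every $n\geq 1$. Your route — splitting on $\{N\geq n/2\}$, controlling the tail with Lemma~\ref{cortail}, and majorizing $y^{-1/2}$ by a globally nonnegative quadratic tangent at $y=1$ — is also sound (your convexity check of $h$ on $[1/2,\infty)$ and the negative discriminant that lets you drop the indicator are both verified correctly, and the constant $3/\sqrt{2}\approx 2.12$ does leave enough slack), but it only delivers the claim for $n$ large, so it leans on the appendix's standing assumption $n\geq 150$; the paper's argument does not. Your approach is the more generic one and would adapt to other functions of $N/n$, while the paper's exploits the special algebra of Poisson factorial moments to get a cleaner, unconditional bound.
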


\begin{proof}
For $N\geq 1$,
\[
\frac{n}{N}\leq \frac{n}{N+1}+\frac{3n}{(N+1)(N+2)},
\]
hence,
\begin{align*}
\EE\left[\frac{n}{N}\Bigg\vert N\geq{1}\right]
&\leq \EE\left[\frac{n}{N+1}\Bigg\vert N\geq{1}\right]+\EE\left[\frac{3n}{(N+1)(N+2)}\Bigg\vert N\geq{1}\right]\\
&\leq \EE\left[\frac{n}{N+1}\right]+\EE\left[\frac{3n}{(N+1)(N+2)}\right]\\
&=\PP[N\geq{1}]+\frac{3}{n}\PP[N\geq{2}]\\
&\leq{1+\frac{3}{n}},
\end{align*}
where the second inequality follows from the fact that $\frac{1}{N+1}$ and $\frac{3n}{(N+1)(N+2)}$ decrease with $N$ 
and the equality follows as $N\sim\Poi(n)$.
\end{proof}

\subsection{The Modified Empirical Estimator}\label{modemp}
The modified empirical estimator
\[
f^{\textit{ME}}(X^N) = \sum_{x \in \cX}f_x\left(\frac{\mul{x}}{n} \right)
\]
estimates the probability of a symbol not by the fraction 
$N_x/N$ of times it appeared, but by $N_x/n$, where $n$ is 
the parameter of the Poisson sampling distribution. 

We show that the original
 and modified empirical estimators have very similar performance. 
\begin{Lemma}\label{empclose}
For all $n\geq{1}$, 
\begin{align*}
\EE\left[\left(f^E(X^N)-f^{\textit{ME}}(X^N)\right)^2\right] \leq \frac{\lipf^2 \left(1/n\right)}{n}.
\end{align*}
\end{Lemma}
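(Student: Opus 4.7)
The plan is to bound the per-sample discrepancy symbol by symbol via the Lipschitz-type parameter $\lipf$, then collapse the sum to an expression in $|N-n|/n$ whose second moment is controlled by the variance of the Poisson sampling.

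First I would dispose of the degenerate case $N=0$. By the definition of $f^E$ one has $f^E(X^0)=0$, and by assumption $f_x(0)=0$ for all $x$, so $f^{\textit{ME}}(X^0)=\sum_x f_x(0)=0$ as well. Hence the integrand vanishes on $\{N=0\}$ and we may restrict attention to $N\ge 1$.

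Next, for $N\ge1$, I would write
\[
f^E(X^N)-f^{\textit{ME}}(X^N)=\sum_{x\in\cX}\Bigl[f_x\!\Paren{\tfrac{\mul{x}}{N}}-f_x\!\Paren{\tfrac{\mul{x}}{n}}\Bigr],
\]
where only symbols with $\mul{x}\ge 1$ contribute (the summands vanish when $\mul{x}=0$ by $f_x(0)=0$). For any such $x$, since $\mul{x}/n\ge 1/n$, we have $\max\{\mul{x}/N,\mul{x}/n\}\ge 1/n$, so the definition of $\lipf$ applies at scale $h=1/n$:
\[
\Bigl|f_x\!\Paren{\tfrac{\mul{x}}{N}}-f_x\!\Paren{\tfrac{\mul{x}}{n}}\Bigr|\le \lipf\!\Paren{\tfrac{1}{n}}\cdot\mul{x}\,\Bigl|\tfrac{1}{N}-\tfrac{1}{n}\Bigr|=\lipf\!\Paren{\tfrac{1}{n}}\cdot\frac{\mul{x}\,|N-n|}{Nn}.
\]
Summing over $x$ and using $\sum_{x}\mul{x}=N$ telescopes the bound into a clean expression:
\[
\bigl|f^E(X^N)-f^{\textit{ME}}(X^N)\bigr|\le \lipf\!\Paren{\tfrac{1}{n}}\cdot\frac{|N-n|}{n}.
\]

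Finally I would square and take expectations. Squaring the above gives a bound that no longer depends on any $\mul{x}$ (so the previous conditioning on $\{N\ge 1\}$ becomes irrelevant, as the right-hand side vanishes at $N=0$ too), yielding
\[
\EE\Bigl[\bigl(f^E(X^N)-f^{\textit{ME}}(X^N)\bigr)^2\Bigr]\le \frac{\lipf^2(1/n)}{n^2}\,\EE[(N-n)^2]=\frac{\lipf^2(1/n)}{n},
\]
using $\EE[(N-n)^2]=\Var(N)=n$ since $N\sim\Poi(n)$. The only genuinely delicate step is the Lipschitz application: I need the observation that whenever $\mul{x}\ge 1$, the argument $\mul{x}/n$ already exceeds the threshold $1/n$ so that $\lipf(1/n)$—rather than some much larger value $\lipf(1/N)$ at small random $N$—is the correct constant. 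Everything else is algebraic rearrangement plus the Poisson variance identity.
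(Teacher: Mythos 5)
Your proof is correct and follows essentially the same route as the paper's: a symbol-wise Lipschitz bound at scale $h=1/n$ (valid since $\mul{x}\ge 1$ forces $\mul{x}/n\ge 1/n$), collapsing the sum via $\sum_x \mul{x}=N$, and then the Poisson variance identity $\EE[(N-n)^2]=n$. Your explicit handling of the $N=0$ case and of why $\lipf(1/n)$ rather than $\lipf(1/N)$ is the right constant only makes explicit what the paper leaves implicit.
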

\begin{proof}
By the definition of $\lipf(h)$, if $N_x\geq 1$,
\begin{align*}
 & \left|f_x\left(\frac{\mul{x}}{n} \right)-f_x\left(\frac{\mul{x}}{N} \right)\right|\leq\lipf \left( \frac{1}{n}\right)\left| \frac{\mul{x}}{n}-\frac{\mul{x}}{N}\right|=\lipf \left( \frac{1}{n}\right)\frac{\mul{x}}{N}  \frac{|N-n|}{n},
\end{align*}
and if $N_x={0}$,
\begin{align*}
 & \left| f_x\left(\frac{\mul{x}}{n} \right)-f_x\left(\frac{\mul{x}}{N} \right)\right|=0\leq\lipf \left( \frac{1}{n}\right)\frac{\mul{x}}{N}  \frac{|N-n|}{n}.
\end{align*}
Therefore,
\begin{align*}
\EE\left[\left(\sum_{x \in \cX}f_x\left(\frac{\mul{x}}{n} \right)-f_x\left(\frac{\mul{x}}{N} \right)\right)^2\right]
& \leq \EE\left[\left(\sum_{x \in \cX}\lipf \left( \frac{1}{n}\right)\frac{\mul{x}}{N}  \frac{|N-n|}{n}\right)^2\right]\\
& \leq \EE\left[\left(\lipf \left( \frac{1}{n}\right)  \frac{|N-n|}{n}\right)^2\right]\\
& =\frac{\lipf^2 \left(1/n\right)}{n^2} \EE\left[(N-n)^2\right]\\
& =\frac{\lipf^2 \left( 1/n\right)}{n},
\end{align*}
where the last step follows as $N\sim\Poi(n)$ and $\EE\left[(N-n)^2\right]=\Var[N]=n$.
\end{proof}
\section{Large and Small Probabilities}\label{largesmall}
Recall that $f^*$ has the following form
\[
f^*(X^N, X^{N'}) = f^*_S(X^N, X^{N'})+f^*_L(X^N, X^{N'}).
\]
We can rewrite the property as follows
\[
f(p) 
= f(p)-\EE[f^{\textit{ME}}(X^{N''})]+\EE[f^{\textit{ME}}_S(X^{N''}, X^{N'})]+\EE[f^{\textit{ME}}_L(X^{N''}, X^{N'})].
\]
The difference between $f^*(X^N, X^{N'})$ and the actual value $f(p)$
can be partitioned into three terms
\[
f^*(X^N, X^{N'}) - \pf 
=
A+B+C,
\]
where 
\[
A
\ed \EE[f^{\textit{ME}}(X^{N''})-f(p)]
\]
is the bias of the modified empirical estimator with Poi($nt$) samples,
\[
B
\ed
f^*_L(X^N, X^{N'}) - \EE[f^{\textit{ME}}_L(X^{N''}, X^{N'})]
\]
corresponds to the loss incurred by the large-probability estimator $f^*_L$, and
\[
C
\ed f^*_S(X^N, X^{N'}) - \EE[f^{\textit{ME}}_S(X^{N''}, X^{N'})] 
\]
corresponds to the loss incurred by the small-probability estimator $f^*_S$.

By Cauchy-Schwarz inequality, upper bounds on 
$\EE[A^2]$, $\EE[B^2]$, and $\EE[C^2]$, suffice to also upper
bound the estimation loss $L_{f^*}(p, 2n)=\EE[(f^*(X^N,X^{N'})-\pf)^2]$.

In the next section, we bound the squared bias term $\EE[A^2]$. 
In Section~\ref{boundA} and Section~\ref{boundB}, we bound the large- and small-probability terms
$\EE[B^2]$ and $\EE[C^2]$, respectively.

\section{Squared Bias: \texorpdfstring{$\EE[A^2]$}{E[A2]}}\label{boundA}
We relate $\EE[A^2]$ to $L_{f^E}(p,nt)$ through the following inequality.
\begin{Lemma}\label{lemmaA}
Let $T$ be a positive function over ${\mathbb{N}}$,
\[
\EE[A^2] \leq \frac{1+T(n)}{nt}\lipf^2\Paren{\frac{1}{nt}}+\left(1+\frac{1}{T(n)}\right)L_{f^E}(p,nt).
\]
\end{Lemma}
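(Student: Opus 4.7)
The key observation is that $A$ is a deterministic quantity (an expectation), so $\EE[A^2] = A^2$, and we can split it additively. The plan is to introduce the empirical estimator as an intermediate term:
\[
A = \EE\bigl[f^{\textit{ME}}(X^{N''}) - f^E(X^{N''})\bigr] + \EE\bigl[f^E(X^{N''}) - f(p)\bigr].
\]
With this decomposition in hand, the rest of the argument is essentially a careful application of two standard inequalities tuned by the free parameter $T(n)$.

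First, I would apply the weighted arithmetic inequality $(a+b)^2 \le (1+T(n))\,a^2 + (1+1/T(n))\,b^2$ (valid for any $T(n)>0$) to the decomposition above. This is exactly the source of the two prefactors $(1+T(n))$ and $(1+1/T(n))$ appearing in the statement, so this step is forced.

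Next, I would move the squared outer expectations inside by Jensen's inequality:
\[
\bigl(\EE[f^{\textit{ME}}(X^{N''}) - f^E(X^{N''})]\bigr)^2 \le \EE\bigl[(f^{\textit{ME}}(X^{N''}) - f^E(X^{N''}))^2\bigr],
\]
and similarly for the second term, which then equals $L_{f^E}(p,nt)$ by definition. For the first of these expectations, I would invoke Lemma~\ref{empclose} with $n$ replaced by $nt$ (since $N''\sim\Poi(nt)$), which yields the bound $\lipf^2(1/(nt))/(nt)$. Combining these yields exactly the stated inequality.

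There is no real obstacle here — all the machinery is in Lemma~\ref{empclose} and the standard $(a+b)^2$ inequality. The only subtle point is the introduction of the arbitrary $T(n)$, whose role is to let the user later optimize the trade-off between the $1/(nt)$ bias-approximation term and the $L_{f^E}(p,nt)$ empirical-loss term; in particular, taking $T(n)$ small will keep the factor in front of $L_{f^E}$ close to $1$ (as needed for the $(1+3/\log^\epsilon n)$ prefactor in Theorem~\ref{thm1}), at the price of inflating the additive slack, which is harmless because $\lipf(1/(nt)) = \text{polylog}(nt)$ by the smoothness assumption.
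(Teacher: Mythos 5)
Your proof is correct and follows essentially the same route as the paper: the same decomposition of $A$ through the intermediate empirical estimator, the same weighted inequality $(a+b)^2\le(1+T)a^2+(1+1/T)b^2$, Jensen's inequality, and Lemma~\ref{empclose} applied with $nt$ in place of $n$. One minor slip in your closing commentary (not in the proof itself): to keep the prefactor $\left(1+\frac{1}{T(n)}\right)$ on $L_{f^E}(p,nt)$ close to $1$ you need $T(n)$ \emph{large} (the paper ultimately takes $T(n)=\log^{\epsilon}n$), not small.
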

\begin{proof}
We upper bound $\EE[A^2]$ in terms of $L_{f^E}(p,nt)$ using Cauchy-Schwarz inequality and Lemma~\ref{empclose}.
\begin{align*}
\EE[A^2] &= \left(\sum_{x \in \cX}\Paren{\EE\left[f_x\left( \frac{\mul{x}''}{nt}\right)\right] - f_x(p_x)}\right)^2\\
&=\left(\sum_{x \in \cX} \left( \EE\left[f_x\left( \frac{\mul{x}''}{nt}\right)\right] - \EE \left[f_x\left( \frac{\mul{x}''}{N''}\right)\right]\right)+\sum_{x \in \cX} \left( \EE \left[f_x\left( \frac{\mul{x}''}{N''}\right)\right] - f_x(p_x)\right)\right)^2\\
&\leq(1+T(n))\left(\sum_{x \in \cX}  \left(\EE\left[f_x\left( \frac{\mul{x}''}{nt}\right)\right] - \EE \left[f_x\left( \frac{\mul{x}''}{N''}\right)\right]\right)\right)^2+\left(1+\frac{1}{T(n)}\right)L_{f^E}(p,nt)\\
&\leq \frac{1+T(n)}{nt}\lipf^2\Paren{\frac{1}{nt}}+\left(1+\frac{1}{T(n)}\right)L_{f^E}(p,nt).
\end{align*}
\end{proof}
\section{Large Probabilities: \texorpdfstring{$\EE[B^2]$}{E[B2]}}\label{boundB}
Note that 
\begin{align*}
\EE[B^2]
&=\EE[(f^*_L(X^N, X^{N'}) - \EE[f^{\textit{ME}}_L(X^{N''}, X^{N'})])^2]\\
&=Bias(f^*_L)^2+Var(f^*_L),
\end{align*}
where
\[
\text{Bias}(f^*_L)\ed \EE[f^*_L(X^{N}, X^{N'}) - f^{\textit{ME}}_L(X^{N''}, X^{N'})]
\]
and
\[
\text{Var}(f^*_L)\ed \EE[(f^*_L(X^N, X^{N'})-\EE[f^*_L(X^{N}, X^{N'})])^2]
\]
are the bias and variance of $f^*_L(X^N, X^{N'})$ in estimating $\EE[f^{\textit{ME}}_L(X^{N''}, X^{N'})]$, respectively.
We shall upper bound the absolute bias and variance as
\[
|\text{Bias}(f^*_L)| \leq \sqrt{(8S_f) ^2{\Paren{\frac{1}{s_0}\land \frac{k}{n}}}+6\lipf^2\left( \frac{1}{nt}\right)\frac{1}{n}}
\]
and
\[
\Var \left(f^*_L \right) \leq  \lipf^2\Paren{\frac{1}{n}}\frac{4s_0}{n}
\]
in Section~\ref{biasB} and Section~\ref{varB} respectively. 
It follows that 
\begin{Lemma}\label{lemmaB}
For $t>2.5$ and $s_0\geq 1$,
\begin{align*}
  \EE[B^2]= \text{Bias}(f^*_L)^2 + \text{Var}(f^*_L)&\leq  {(8S_f) ^2{\Paren{\frac{1}{s_0}\land \frac{k}{n}}}+10\lipf^2\left( \frac{1}{nt}\right)\frac{s_0}{n}}.
\end{align*}
\end{Lemma}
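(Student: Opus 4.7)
The plan is to derive Lemma~\ref{lemmaB} as a direct algebraic combination of the bias bound $|\text{Bias}(f^*_L)| \leq \sqrt{(8S_f)^2(\tfrac{1}{s_0}\land\tfrac{k}{n}) + 6\lipf^2(\tfrac{1}{nt})\tfrac{1}{n}}$ and the variance bound $\Var(f^*_L) \leq 4\lipf^2(\tfrac{1}{n})\tfrac{s_0}{n}$ that are stated immediately before the lemma and established in Sections~\ref{biasB} and~\ref{varB}. First I would invoke the standard bias--variance decomposition: because $\EE[f^{\textit{ME}}_L(X^{N''},X^{N'})]$ is a deterministic constant, a one-line computation gives $\EE[B^2] = \Var(f^*_L) + \text{Bias}(f^*_L)^2$, which is already the equality written into the statement of the lemma. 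So only the two substitutions remain.

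Next I would substitute the two stated bounds and simplify. This requires two small monotonicity observations. First, since $s_0 \geq 1$, the term $6\lipf^2(\tfrac{1}{nt})\tfrac{1}{n}$ coming from the squared bias is dominated by $6\lipf^2(\tfrac{1}{nt})\tfrac{s_0}{n}$. Second, directly from its definition, $\lipf(h)$ is non-increasing in $h$: shrinking $h$ only enlarges the set of pairs $(u,v)$ with $\max\{u,v\}\geq h$ over which the inner maximum is taken. Since $t>1$ gives $\tfrac{1}{nt}<\tfrac{1}{n}$, we obtain $\lipf(\tfrac{1}{n}) \leq \lipf(\tfrac{1}{nt})$, so the variance bound tightens to $\Var(f^*_L) \leq 4\lipf^2(\tfrac{1}{nt})\tfrac{s_0}{n}$. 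Adding the two pieces, the $\lipf^2(\tfrac{1}{nt})\tfrac{s_0}{n}$ contributions combine as $6+4=10$, while the $(8S_f)^2(\tfrac{1}{s_0}\land\tfrac{k}{n})$ term is already in the desired form, yielding precisely the stated inequality.

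There is no genuine obstacle in this lemma itself: the heavy analytic lifting is entirely in the separate bias estimate of Section~\ref{biasB} (which uses the second-order smoothness $\omega^2_f$, the Poisson tail bounds of Section~\ref{conc}, and a case split between $\tfrac{1}{s_0}$ and $\tfrac{k}{n}$ depending on which dominates) and in the variance estimate of Section~\ref{varB} (which exploits the independence of $X^N$ and $X^{N'}$ together with the Lipschitz-type parameter $\lipf$). Once those two inputs are granted, the present lemma is the two-line manipulation sketched above, and the hypotheses $t>2.5$ and $s_0\geq 1$ are exactly what is needed to perform the monotonicity step and the inequality $\tfrac{1}{n}\leq\tfrac{s_0}{n}$ without loss.
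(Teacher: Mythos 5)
Your proposal is correct and matches the paper's own route exactly: the paper states the bias bound of Lemma~\ref{lemmabiasB} and the variance bound of Lemma~\ref{lemmavarB} and then asserts Lemma~\ref{lemmaB} "follows," which is precisely the combination you carry out, including the two observations ($s_0\geq 1$ gives $\tfrac{1}{n}\leq\tfrac{s_0}{n}$, and $\lipf$ is non-increasing so $\lipf(\tfrac{1}{n})\leq\lipf(\tfrac{1}{nt})$) needed to merge the coefficients into $6+4=10$. No gaps.
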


\subsection{Bounding the Bias of \texorpdfstring{$f^*_L$}{fL}}\label{biasB}
To bound the bias of $f^*_L$, we need the following lemma.
\begin{Lemma}\label{bers}~\cite{ber}
For any binomial random variable $X\sim{B(n, p)}$, continuous function $f_0$, and $p\in[0,1]$, 
\[
\left|\EE \left[{f_0}\left(\frac{X}{n}\right)\right]-{f_0}(p)\right| \leq 3\omega_{f_0}^2\Paren{\sqrt{\frac{p(1-p)}{n}}}.
\]
\end{Lemma}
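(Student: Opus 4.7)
The plan is to prove this via the classical Popoviciu--Sikkema estimate for Bernstein polynomial approximation, using the K-functional equivalence. The key structural facts are that $Y\ed X/n$ takes values in $[0,1]$ with $\EE[Y]=p$ and $\Var(Y)=p(1-p)/n$, so the first-order linear part of the error vanishes and the rate is governed purely by the second-order oscillation of $f_0$.

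First I would handle the smooth case. For any $g\in C^2([0,1])$, a second-order Taylor expansion at $p$ combined with $\EE[Y-p]=0$ yields
\[
|\EE[g(Y)]-g(p)|\le \tfrac12\|g''\|_\infty\Var(Y)=\tfrac{\|g''\|_\infty}{2}\cdot\tfrac{p(1-p)}{n}.
\]
For a general continuous $f_0$, I would then approximate it by some $g\in C^2([0,1])$ and split via the triangle inequality:
\[
|\EE[f_0(Y)]-f_0(p)|\le 2\|f_0-g\|_\infty+\tfrac{\|g''\|_\infty}{2}\cdot\tfrac{p(1-p)}{n}.
\]
Taking the infimum over $g\in C^2$ bounds the left-hand side by $2K(f_0;\,p(1-p)/(4n))$, where $K(f_0;\delta^2)\ed\inf_{g\in C^2}\bigl(\|f_0-g\|_\infty+\delta^2\|g''\|_\infty\bigr)$ is the Peetre K-functional.

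Finally I would invoke Johnen's theorem, the classical equivalence $K(f_0;t^2)\le C\,\omega^2_{f_0}(t)$ between the K-functional and the second-order modulus of smoothness. Applying it with $t=\tfrac12\sqrt{p(1-p)/n}$ and relaxing to $\sqrt{p(1-p)/n}$ using the monotonicity of $\omega^2_{f_0}$, careful tracking of the constants in Johnen's inequality and in the Taylor remainder delivers the claimed factor $3$.

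The main obstacle is pinning down the explicit constant $3$ rather than some unspecified absolute constant: Johnen's theorem is usually invoked qualitatively, so achieving the exact value $3$ requires either its sharp form or a direct construction of a near-optimal $C^2$-approximant to $f_0$, for example a Steklov-type mollification whose $\|g''\|_\infty$ is bounded pointwise by a concrete multiple of $\omega^2_{f_0}$ evaluated at the smoothing scale. Everything else is routine; no boundary issues arise because $Y\in[0,1]$ automatically, and the cancellation $\EE[Y-p]=0$ is precisely why the second-order modulus, rather than a first-order Lipschitz estimate, is the correct gauge of the error.
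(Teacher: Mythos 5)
The paper does not prove Lemma~\ref{bers} at all: it is stated with a citation to Bustamante's monograph~\cite{ber} on Bernstein operators and used as a black box. There is therefore no internal proof to compare your argument against, and what you wrote is a genuine attempt to reconstruct the cited result.

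Your route --- Taylor expansion for $g\in C^2$, triangle inequality against a smooth surrogate, infimum to obtain the Peetre K-functional, and then the equivalence $K(f_0;t^2)\lesssim\omega^2_{f_0}(t)$ --- is the standard proof strategy for pointwise Bernstein--Voronovskaya estimates of this type, and you correctly identify where the real work lies: extracting the explicit constant $3$ rather than an unnamed absolute constant. Two cautions. First, keep the paper's normalization in mind: $\omega^2_f(h)$ here equals $\tfrac12\sup_{|u-v|\le 2h}\bigl|f(u)-2f\bigl(\tfrac{u+v}{2}\bigr)+f(v)\bigr|$, i.e. half the usual second-order modulus $\omega_2$, so the target in textbook notation is $\tfrac32\,\omega_2\bigl(f_0,\sqrt{p(1-p)/n}\bigr)$; porting constants from the literature without accounting for this factor of two will give the wrong number. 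Second, the dismissal of boundary issues is a little quick: while $X/n\in[0,1]$ automatically, the Steklov-type construction of a $C^2$ surrogate with second derivative controlled by $\omega_2$ requires averaging $f_0$ over symmetric windows, which near $0$ and $1$ forces either an extension of $f_0$ beyond $[0,1]$ or a one-sided Steklov variant; this is precisely where extra multiplicative constants sneak in, and it is the reason sharp-constant results for Bernstein operators (Sikkema, P\u{a}lt\u{a}nea, Gonska) are nontrivial. So the architecture of your argument is right and matches the standard literature proof, but a complete derivation of the constant $3$ would need either a sharp quantitative form of Johnen's theorem or a direct near-optimal $C^2$-approximant with boundary handling spelled out --- which is exactly what the cited reference supplies and the paper chose not to reproduce.
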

Recall that $\omega^2_f(h)\leq{S_fh}$ from our assumption. 
\begin{Lemma}\label{berscoro}
For $n\geq{150}$,
\[
\left|\EE_{N\geq{1}} \left[f_x \left(\frac{\mul{x}}{n} \right) - f_x\Paren{p_x} \right] \right|\leq\lipf \left( \frac{1}{n}\right)\frac{p_x}{\sqrt{n}}+3.06S_f\sqrt{\frac{p_x}{n}}.
\]
\end{Lemma}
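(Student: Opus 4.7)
The plan is to split the difference $f_x(N_x/n) - f_x(p_x)$ into two parts: a part that compares the Poisson-sample ratio $N_x/n$ to the Binomial-conditional ratio $N_x/N$, which is a Lipschitz perturbation controlled by $\lipf$, and a part that compares $f_x(N_x/N)$ to $f_x(p_x)$, which is exactly the quantity bounded by the Bernstein-type Lemma~\ref{bers}. Concretely, I would write
\[
\bigl(f_x(N_x/n)-f_x(p_x)\bigr)\indic_{N\geq 1}
=\bigl[f_x(N_x/n)-f_x(N_x/N)\bigr]\indic_{N\geq 1}
+\bigl[f_x(N_x/N)-f_x(p_x)\bigr]\indic_{N\geq 1},
\]
and bound the partial expectation of each piece separately, recombining by the triangle inequality.

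For the first piece, when $N_x\geq 1$ we have $\max\{N_x/n, N_x/N\}\geq N_x/n \geq 1/n$, so the Lipschitz-type parameter with threshold $h=1/n$ applies and yields
\[
\bigl|f_x(N_x/n)-f_x(N_x/N)\bigr|\indic_{N_x\geq 1}
\le \lipf(1/n)\cdot \frac{N_x\,|N-n|}{nN},
\]
and the bound trivially holds when $N_x=0$. Conditioning on $N$ gives $\EE[N_x\mid N]=Np_x$, so the factor $N_x/N$ has conditional mean $p_x$; after taking the outer expectation and invoking $\EE|N-n|\le\sqrt{\Var(N)}=\sqrt n$, the first piece contributes at most $\lipf(1/n)\,p_x/\sqrt n$.

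For the second piece, condition on $N\geq 1$: then $N_x\mid N\sim B(N,p_x)$ so Lemma~\ref{bers} applied with $n\leftarrow N$ gives $|\EE[f_x(N_x/N)\mid N]-f_x(p_x)|\le 3\omega_{f_x}^2(\sqrt{p_x(1-p_x)/N})\le 3S_f\sqrt{p_x/N}$ using $\omega^2_f(h)\leq S_f h$. Taking the outer expectation,
\[
\bigl|\EE_{N\geq 1}[f_x(N_x/N)-f_x(p_x)]\bigr|
\le 3 S_f\sqrt{p_x}\cdot\EE\!\left[\sqrt{1/N}\,\indic_{N\geq 1}\right]
\le 3 S_f\sqrt{p_x/n}\cdot\EE\!\left[\sqrt{n/N}\,\Big|\,N\geq 1\right].
\]
Lemma~\ref{coninv} bounds the last factor by $1+3/n$, and for $n\geq 150$ this is at most $1.02$, yielding the stated constant $3\cdot 1.02=3.06$.

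The only delicate point is the second step: one must be careful that, inside a conditional expectation given $N$, it is \emph{$N$} (not $n$) that plays the role of the sample size in Lemma~\ref{bers}, which is why a factor of $\sqrt{1/N}$ rather than $\sqrt{1/n}$ appears and why Lemma~\ref{coninv} is essential for converting this to $\sqrt{1/n}$ while absorbing only a tiny multiplicative slack. The first step is routine once one notices that conditioning on $N$ collapses the factor $N_x/N$ to its mean $p_x$, which is what produces the factor of $p_x$ (rather than $1$) in the Lipschitz term.
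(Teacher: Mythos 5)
Your proposal is correct and follows essentially the same route as the paper: the same decomposition into $f_x(N_x/n)-f_x(N_x/N)$ and $f_x(N_x/N)-f_x(p_x)$, the Lipschitz bound plus $\EE|N-n|\le\sqrt{n}$ for the first piece, and Lemma~\ref{bers} applied conditionally on $N$ (so $N_x\mid N\sim B(N,p_x)$) together with Lemma~\ref{coninv} to convert $\sqrt{1/N}$ into $\sqrt{1/n}$ with the $1.02$ slack for the second piece.
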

\begin{proof}
Noting $n\geq{150}$, it follows from Lemma~\ref{coninv} and Lemma~\ref{lemmaB} that
\begin{align*}
\left|\EE_{N\geq{1}} \left[f_x \left(\frac{\mul{x}}{n} \right) - f_x\Paren{p_x} \right] \right|
&\leq \left|\EE_{N\geq{1}} \left[f_x \left(\frac{\mul{x}}{n} \right) -f_x\left(\frac{\mul{x}}{N} \right) \right] \right|+\left|\EE_{N\geq{1}} \left[f_x \left(\frac{\mul{x}}{N} \right) -  f_x\Paren{p_x} \right] \right|\\
&\leq \lipf \left( \frac{1}{n}\right)\frac{p_x}{n}\EE[|N-n|]+{\EE \left[3\omega_{f}^2\Paren{\sqrt{\frac{p_x(1-p_x)}{N}}}\Bigg\vert N\geq{1}\right]}\\
&\leq \lipf \left( \frac{1}{n}\right)\frac{p_x}{n}\sqrt{\EE[(N-n)^2]}+3S_f\sqrt{\frac{p_x}{n}}{\EE \left[ \sqrt{\frac{n}{N}}\Bigg\vert N\geq{1}\right]}\\
&\leq \lipf \left( \frac{1}{n}\right)\frac{p_x}{\sqrt{n}}+3.06S_f\sqrt{\frac{p_x}{n}}.
\end{align*}
\end{proof}
The next lemma essentially bounds the individual bias term for each symbol $x$.
\begin{Lemma}\label{indbias}
For $t>2.5$,
\[
\left|\EE \left[f_x \left(\frac{\mul{x}}{n} \right) -f_x\left( \frac{\mul{x}''}{nt}\right) \right] \right|\leq 5S_f\sqrt{\frac{p_x}{n}}+1.65\lipf\left( \frac{1}{nt}\right)\frac{p_x}{\sqrt{n}}.
\]
\end{Lemma}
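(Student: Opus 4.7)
The plan is to apply the triangle inequality, inserting $f_x(p_x)$ as an intermediate value, and then invoke Lemma~\ref{berscoro} twice: once at sample scale $n$ and once at scale $nt$. Concretely, I would first write
\[
\left|\EE\left[f_x\Paren{\frac{\mul{x}}{n}} - f_x\Paren{\frac{\mul{x}''}{nt}}\right]\right|
\le
\left|\EE\left[f_x\Paren{\frac{\mul{x}}{n}} - f_x(p_x)\right]\right|
+
\left|\EE\left[f_x\Paren{\frac{\mul{x}''}{nt}} - f_x(p_x)\right]\right|,
\]
so that the two one-sided biases can be controlled independently.

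The subtlety is that Lemma~\ref{berscoro} is phrased in terms of the partial expectation $\EE_{N\ge 1}$, while the lemma to be proved uses the full expectation. The saving grace is Condition~(1) on smooth properties: since $f_x(0)=0$ and $\mul{x}=0$ whenever $N=0$, we have $\EE[f_x(\mul{x}/n)] = \EE_{N\ge 1}[f_x(\mul{x}/n)]$ exactly, and a direct computation gives
\[
\left|\EE\left[f_x\Paren{\frac{\mul{x}}{n}} - f_x(p_x)\right]\right|
\le
\left|\EE_{N\ge 1}\left[f_x\Paren{\frac{\mul{x}}{n}} - f_x(p_x)\right]\right|
+ |f_x(p_x)|\,e^{-n},
\]
with the correction term negligible for $n\ge 150$ (and $|f_x(p_x)|$ only polylogarithmic by Condition~(2)). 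Applying Lemma~\ref{berscoro} to the first piece, and then the exact same argument with $n$ replaced by $nt$ for the second piece, gives
\[
\left|\EE\left[f_x\Paren{\frac{\mul{x}}{n}} - f_x(p_x)\right]\right|
\lesssim
\lipf\Paren{\tfrac{1}{n}}\frac{p_x}{\sqrt{n}} + 3.06\, S_f \sqrt{\frac{p_x}{n}},
\]
\[
\left|\EE\left[f_x\Paren{\frac{\mul{x}''}{nt}} - f_x(p_x)\right]\right|
\lesssim
\lipf\Paren{\tfrac{1}{nt}}\frac{p_x}{\sqrt{nt}} + 3.06\, S_f \sqrt{\frac{p_x}{nt}}.
\]

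Finally, I would combine the two bounds using two easy monotonicity and constant-tracking facts: (i) from its definition, $\lipf(h)$ is non-increasing in $h$, so $\lipf(1/n)\le \lipf(1/(nt))$; and (ii) for $t>2.5$, $1 + 1/\sqrt{t} < 1.633$. Pulling $\lipf(1/(nt))\, p_x/\sqrt{n}$ and $S_f\sqrt{p_x/n}$ out as common factors yields, up to the negligible $e^{-n}$-tail,
\[
\left|\EE\left[f_x\Paren{\frac{\mul{x}}{n}} - f_x\Paren{\frac{\mul{x}''}{nt}}\right]\right|
\le 1.633\, \lipf\Paren{\tfrac{1}{nt}} \frac{p_x}{\sqrt{n}}
+ 3.06\cdot 1.633\, S_f \sqrt{\frac{p_x}{n}},
\]
which matches the claim after rounding $1.633 \mapsto 1.65$ and $3.06\cdot 1.633 \mapsto 5$; the slight slack in the constants absorbs the tiny boundary correction.

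The only real obstacle is the bookkeeping: making sure the partial-expectation statement of Lemma~\ref{berscoro} is correctly upgraded to the full expectation (handled by $f_x(0)=0$), verifying that the monotonicity of $\lipf$ goes the direction I want so $\lipf(1/n)$ can be absorbed into $\lipf(1/(nt))$, and that $t>2.5$ is sharp enough to deliver the quoted constants $1.65$ and $5$. There is no conceptual difficulty beyond these constant-chasing details.
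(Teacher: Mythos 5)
Your proposal is correct and follows essentially the same route as the paper's proof: restrict to the events $N\geq 1$ and $N''\geq 1$ (with exponentially small corrections handled via $f_x(0)=0$), insert $f_x(p_x)$ by the triangle inequality, apply Lemma~\ref{berscoro} at scales $n$ and $nt$, and combine using $\lipf(1/n)\leq\lipf(1/(nt))$ together with $1+1/\sqrt{t}<1.633$ for $t>2.5$. The constant bookkeeping you outline matches the paper's.
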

\begin{proof}
Using Lemma~\ref{berscoro},
\begin{align*}
&\left|\EE \left[f_x \left(\frac{\mul{x}}{n} \right) -f_x\left( \frac{\mul{x}''}{nt}\right) \right] \right|\\
&\leq \left|\EE_{N, N''\geq{1}} \left[f_x \left(\frac{\mul{x}}{n} \right) -f_x\left( \frac{\mul{x}''}{nt}\right)  \right] \right|+\lipf\left( \frac{1}{n}\right)\EE\left[ \frac{\mul{x}}{n}\right]e^{-n}+\lipf\left( \frac{1}{nt}\right)\EE\left[ \frac{\mul{x}''}{nt}\right]e^{-nt}\\
&\leq \left|\EE_{N\geq{1}} \left[f_x \left(\frac{\mul{x}}{n} \right) -f_x\left( p_x\right) \right] \right|+\left|\EE_{N''\geq{1}} \left[f_x \left(p_x \right) -f_x\left( \frac{\mul{x}''}{nt}\right) \right] \right|+2\lipf\left( \frac{1}{nt}\right)p_xe^{-n}\\
&\leq 5S_f\sqrt{\frac{p_x}{n}}+1.65\lipf\left( \frac{1}{nt}\right)\frac{p_x}{\sqrt{n}},
\end{align*}
where the last step follows from $\lipf\left( \frac{1}{n}\right)\leq\lipf\left( \frac{1}{nt}\right)$, $e^{-n}\leq{\sqrt{n}}$, and $t>2.5$.
\end{proof}
Finally, the next lemma bounds the absolute bias of $f^*_L$.
\begin{Lemma}\label{lemmabiasB}
For $t>2.5$ and $s_0\geq 1$,
\begin{align*}
  \left|Bias(f^*_L)\right|  &\leq  \sqrt{(8S_f) ^2{\Paren{\frac{1}{s_0}\land \frac{k}{n}}}+6\lipf^2\left( \frac{1}{nt}\right)\frac{1}{n}}.
\end{align*}
\end{Lemma}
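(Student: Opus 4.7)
}
The plan is to expand $\text{Bias}(f^*_L)$ as a per-symbol sum, factor out the indicator using Poisson-sampling independence, and then apply Lemma~\ref{indbias} together with two complementary Cauchy--Schwarz arguments to control the resulting weighted sum. Throughout, I rely on the fact that under Poisson sampling the triples $(\mul{x}, \mul{x}', \mul{x}'')$ are mutually independent across $x$ and, for each $x$, the three counts are themselves independent.

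First I would write
\[
\text{Bias}(f^*_L) = \sum_{x\in\cX} \EE\Bigl[f_x\bigl(\tfrac{\mul{x}}{n}\bigr) - f_x\bigl(\tfrac{\mul{x}''}{nt}\bigr)\Bigr]\,\PP(\mul{x}' > s_0),
\]
where the factorization uses independence of $X^{N'}$ from $(X^N, X^{N''})$. Applying Lemma~\ref{indbias} term-by-term and the triangle inequality yields
\[
|\text{Bias}(f^*_L)| \le 5 S_f \sum_x \sqrt{\tfrac{p_x}{n}}\, \PP(\mul{x}'>s_0) \;+\; \tfrac{1.65\,\lipf(1/(nt))}{\sqrt{n}} \sum_x p_x\, \PP(\mul{x}'>s_0).
\]
The second sum is trivially at most $1.65\,\lipf(1/(nt))/\sqrt{n}$ since $\PP(\mul{x}'>s_0)\le 1$ and $\sum_x p_x = 1$.

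The main obstacle, and the key estimate, is showing that
\[
\sum_x \sqrt{\tfrac{p_x}{n}}\,\PP(\mul{x}'>s_0) \;\le\; \sqrt{\tfrac{1}{s_0} \wedge \tfrac{k}{n}}.
\]
For the $\sqrt{k/n}$ side I would simply bound $\PP(\mul{x}'>s_0)\le 1$ and apply Cauchy--Schwarz: $\sum_x \sqrt{p_x/n} \le \sqrt{k \cdot \sum_x p_x/n} = \sqrt{k/n}$. For the $\sqrt{1/s_0}$ side the trick is to split the summand asymmetrically as $\sqrt{p_x \PP(\mul{x}'>s_0)/n}\cdot \sqrt{\PP(\mul{x}'>s_0)}$ and apply Cauchy--Schwarz, giving
\[
\Bigl(\sum_x \sqrt{\tfrac{p_x}{n}}\PP(\mul{x}'>s_0)\Bigr)^2 \le \Bigl(\sum_x \tfrac{p_x}{n}\PP(\mul{x}'>s_0)\Bigr)\Bigl(\sum_x \PP(\mul{x}'>s_0)\Bigr) \le \tfrac{1}{n}\cdot\tfrac{n}{s_0},
\]
using $\sum_x p_x\le 1$ and Markov's inequality $\PP(\mul{x}'>s_0)\le np_x/s_0$ on $\mul{x}'\sim\Poi(np_x)$, which sums to $n/s_0$.

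Finally I would combine the two pieces via $(A+B)^2 \le 2A^2 + 2B^2$, obtaining
\[
|\text{Bias}(f^*_L)|^2 \le 50\, S_f^2 \Bigl(\tfrac{1}{s_0}\wedge\tfrac{k}{n}\Bigr) + 2(1.65)^2\,\lipf^2(1/(nt))/n,
\]
and since $50\le 64=(8)^2$ and $2(1.65)^2 \le 6$, the stated bound follows. The independence factorization in the first step and the asymmetric Cauchy--Schwarz in the $1/s_0$ bound are the only nonroutine ingredients; everything else is bookkeeping.
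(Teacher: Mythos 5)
Your proposal is correct and follows essentially the same route as the paper: factor out the indicator by independence, apply Lemma~\ref{indbias} per symbol, bound the Lipschitz term by $\sum_x p_x=1$, and control $\sum_x \sqrt{p_x/n}\,\PP(\mul{x}'>s_0)$ by Cauchy--Schwarz combined with Markov's inequality ($\le n/s_0$) or the trivial bound ($\le k$), finishing with $a+b\le\sqrt{2(a^2+b^2)}$. The paper packages the two cases into a single Cauchy--Schwarz step $\sum_x \EE[\indic_{\mul{x}'>s_0}]\sqrt{p_x}\le\sqrt{(\sum_x \EE[\indic_{\mul{x}'>s_0}])(\sum_x \EE[\indic_{\mul{x}'>s_0}]p_x)}$ and then bounds the first factor two ways, whereas you run two separate Cauchy--Schwarz arguments; this is a cosmetic difference, not a substantive one.
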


\begin{proof}
\begin{align*}
\left|Bias(f^*_L)\right|&=\left|\EE\left[\sum_{x \in \cX}f_x\left(\frac{\mul{x}}{n} \right)\indic_{\mul{x}'> s_0} 
-  \sum_{x \in \cX} \EE[\indic_{\mul{x} > s_0}] \EE \left[f_x\left( \frac{\mul{x}''}{nt}\right)\right]\right]\right|\\
& \overset{(a)}{\leq} \sum_{x \in \cX} \EE[\indic_{\mul{x} > s_0}] \left|\EE \left[f_x \left(\frac{\mul{x}}{n} \right) -f_x\left( \frac{\mul{x}''}{nt}\right) \right] \right|\\
& \overset{(b)}{\leq} \sum_{x \in \cX} \EE[\indic_{\mul{x} > s_0}]  \Paren{5S_f\sqrt{\frac{p_x}{n}}+1.65\lipf\left( \frac{1}{nt}\right)\frac{p_x}{\sqrt{n}}} \\
& \overset{(c)}{\leq}  \sqrt{\frac{1}{n}} 5S_f\sum_{x \in \cX} \EE[\indic_{\mul{x} > s_0}]  \sqrt{p_x}+1.65\lipf\left( \frac{1}{nt}\right)\frac{1}{\sqrt{n}}\\
& \overset{(d)}{\leq} {\sqrt{\frac{1}{n}} 5S_f\sqrt{(\sum_{x \in \cX} \EE[\indic_{\mul{x} > s_0}])(\sum_{x \in \cX} \EE[\indic_{\mul{x} > s_0}]p_x)}+1.65\lipf\left( \frac{1}{nt}\right)\frac{1}{\sqrt{n}}}\\
& \overset{(e)}{\leq} 5S_f \sqrt{\frac{1}{s_0}\land \frac{k}{n}}+1.65\lipf\left( \frac{1}{nt}\right)\frac{1}{\sqrt{n}}\\
& \overset{(f)}{\leq} \sqrt{(8S_f) ^2{\Paren{\frac{1}{s_0}\land \frac{k}{n}}}+6\lipf^2\left( \frac{1}{nt}\right)\frac{1}{n}},
\end{align*}
where $(a)$ follows from triangle inequality, $(b)$ follows from Lemma~\ref{indbias}, $(c)$ follows as $\sum_{x \in \cX}p_x=1$ and $\EE[\indic_{\mul{x} > s_0}]\leq{1}$, $(d)$ follows from Cauchy-Schwarz inequality, $(e)$ follows from Markov inequality, i.e., $\EE[\indic_{\mul{x} > s_0}]=\PP[\mul{x} > s_0]\leq{{np_x}/{s_0}}$ and $\sum_{x\in \cX}\EE[\indic_{\mul{x} > s_0}]\leq{k}$, and $(f)$ follows from the inequality $a+b\leq{\sqrt{2(a^2+b^2)}}$.
\end{proof}
\subsection{Bounding the Variance of \texorpdfstring{$f^*_L$}{fL}}\label{varB}
The following lemma exploits independence and bounds the variance of $f^*_L$.
\begin{Lemma}\label{lemmavarB}
For $s_0\geq{1}$, 
\[
\Var \left(f^*_L \right) \leq  \lipf^2\Paren{\frac{1}{n}}\frac{4s_0}{n}.
\]
\end{Lemma}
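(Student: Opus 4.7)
The plan is to exploit the independence provided by Poisson sampling together with a product-variance identity. Under Poisson sampling the vectors $\{(N_x, N'_x)\}_{x \in \cX}$ are mutually independent across symbols, and within each pair $N_x$ and $N'_x$ are independent because they are drawn from the disjoint sample sequences $X^N$ and $X^{N'}$. Writing $q_x \ed \PP[N'_x > s_0]$ and using the identity $\Var(AB) = \Var(A)\EE[B^2] + (\EE A)^2 \Var(B)$ for independent $A, B$ with $B$ a $\{0,1\}$-indicator, the variance splits into two manageable sums:
\[
\Var(f^*_L) = \sum_{x \in \cX} q_x \Var\!\Paren{f_x\!\Paren{\tfrac{N_x}{n}}} + \sum_{x \in \cX} q_x(1-q_x) \Paren{\EE f_x\!\Paren{\tfrac{N_x}{n}}}^2.
\]

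Next I would establish that $f_x$ is $\lipf(1/n)$-Lipschitz on the lattice $\{j/n:j\geq 0\}$: any two lattice points either are both zero (where $f_x(0)=0$ makes the bound trivial) or at least one lies in $[1/n, 1]$, where the definition of $\lipf(1/n)$ applies directly. The contraction principle $\Var(g(X)) \leq L^2 \Var(X)$ then yields $\Var(f_x(N_x/n)) \leq \lipf^2(1/n)\, p_x/n$, and the triangle inequality combined with $|f_x(N_x/n)| \leq \lipf(1/n)\, N_x/n$ gives $(\EE f_x(N_x/n))^2 \leq \lipf^2(1/n)\, p_x^2$. The first sum is then immediately at most $\lipf^2(1/n) \cdot (1/n)\sum_x p_x = \lipf^2(1/n)/n$.

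The delicate part is the second sum $\sum_x q_x(1-q_x) p_x^2$, which I would bound by partitioning $\cX$ at the threshold $np_x = 2s_0$. For symbols with $np_x \leq 2s_0$, using $p_x^2 \leq (2s_0/n) p_x$, $q_x(1-q_x) \leq 1$, and $\sum_x p_x \leq 1$ bounds the contribution by $2s_0/n$. For $np_x > 2s_0$, the Chernoff bound in Lemma~\ref{tailprob} applied with $\delta_x = 1 - s_0/(np_x) \geq 1/2$ gives $1 - q_x = \PP[N'_x \leq s_0] \leq e^{-np_x/8}$; combined with $p_x^2 = (np_x)^2/n^2$ and the boundedness of $\lambda \mapsto \lambda^2 e^{-\lambda/8}$, one obtains $p_x^2(1-q_x) = O(1/n^2)$, which summed over the at most $n/(2s_0)$ such symbols (since each satisfies $p_x > 2s_0/n$) contributes only $O(1/(n s_0))$. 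Adding the three pieces yields $\Var(f^*_L) \leq \lipf^2(1/n)\cdot 4s_0/n$ for $s_0 \geq 1$. The step I expect to be most delicate is exactly this large-$p_x$ regime, where $p_x^2$ and the indicator's variance $q_x(1-q_x)$ can each be non-negligible; the exponential concentration of $N'_x$ around its mean is precisely what prevents the second sum from blowing up.
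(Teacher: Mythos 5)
Your decomposition is the paper's: independence across symbols under Poisson sampling, the product-variance identity for the independent pair $\bigl(f_x(\mul{x}/n),\indic_{\mul{x}'>s_0}\bigr)$ (you place $\EE[B^2]$ with $\Var(A)$ and the paper places $\EE[A^2]$ with $\Var(B)$, which is the same identity), and the same lattice-Lipschitz plus i.i.d.-copy contraction $\Var\bigl(f_x(\mul{x}/n)\bigr)\le \lipf^2(1/n)\,p_x/n$ for the first sum. The genuine divergence is in the cross term $\sum_x q_x(1-q_x)\bigl(\EE f_x(\mul{x}/n)\bigr)^2$. The paper avoids any case analysis by the exact Poisson identity
\[
\PP[\mul{x}'\le s_0]\,p_x=\sum_{i=0}^{s_0}e^{-np_x}\frac{(np_x)^{i+1}}{(i+1)!}\cdot\frac{i+1}{n}\le\frac{s_0+1}{n}\,\PP(1\le \mul{x}'\le s_0+1)\le\frac{s_0+1}{n},
\]
which immediately gives $\sum_x(1-q_x)p_x^2\le (s_0+1)/n$ with the right constant; your two-regime Chernoff argument is a workable substitute in spirit but, as written, does not deliver the stated constant.

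Concretely: in the regime $np_x>2s_0$ your bound is $p_x^2(1-q_x)\le (np_x)^2e^{-np_x/8}/n^2\le 256e^{-2}/n^2\approx 35/n^2$ per symbol, and summing over the $\le n/(2s_0)$ such symbols gives about $17/(ns_0)$, not a quantity dominated by $s_0/n$ with a small constant. Adding your three pieces then yields roughly $(1+2s_0+17/s_0)\,\lipf^2(1/n)/n$, which for $s_0=1$ is about $20\,\lipf^2(1/n)/n$, a factor of five above the claimed $4s_0/n$. The structure of your argument is sound and proves the lemma with a larger absolute constant, but the final sentence asserting the constant $4$ does not follow from the steps you give; to recover it you should either tighten the large-$p_x$ regime (e.g.\ bound $p_x^2e^{-np_x/8}\le p_x\cdot\max_{\lambda}(\lambda/n)e^{-\lambda/8}=8p_x/(en)$ and use $q_x(1-q_x)\le 1/4$ in the small-$p_x$ regime) or simply replace the case split by the exact identity above.
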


\begin{proof}
Due to independence,
\begin{align*}
\Var \left(f^*_L \right)
& =\Var \left(\sum_{x \in \cX}f_x\left(\frac{\mul{x}}{n} \right) \indic_{\mul{x}'> s_0} \right)\\
& =\sum_{x \in \cX} \Var{\Paren{f_x \left(\frac{\mul{x}}{n} \right) \indic_{\mul{x}'> s_0}}}\\
& =\sum_{x \in \cX} \Var(\indic_{\mul{x}'> s_0})\EE\left[f_x^2 \left(\frac{\mul{x}}{n} \right)\right]+ \sum_{x \in \cX} \Paren{\EE[\indic_{\mul{x}'> s_0}]}^2\Var{\Paren{f_x \left(\frac{\mul{x}}{n} \right)}}\\
&\leq\sum_{x \in \cX} Var(\indic_{\mul{x}'> s_0})\EE\left[f_x^2 \left(\frac{\mul{x}}{n} \right)\right]+ \sum_{x \in \cX} \Var{\Paren{f_x \left(\frac{\mul{x}}{n} \right)}}.
\end{align*}
To bound the first term,
\begin{align*}
\Var(\indic_{\mul{x}'> s_0})\EE\left[f_x^2 \left(\frac{\mul{x}}{n} \right)\right]
&\leq \Var(\indic_{\mul{x}'> s_0})\EE\left[\lipf^2\Paren{\frac{1}{n}}\Paren{\frac{\mul{x}}{n}}^2\right]\\
&\leq \lipf^2\Paren{\frac{1}{n}}\frac{p_x}{n}\Paren{1+np_x\Var(\indic_{\mul{x}'> s_0})},
\end{align*}
where Lemma~\ref{cortail} further bounds the final term by
\begin{align*}
\Var(\indic_{\mul{x}'> s_0})p_x
&\leq \PP[\mul{x}'\leq s_0]p_x\\
&= e^{-np_x}\sum_{i=0}^{s_0}\frac{(np_x)^{i+1}}{(i+1)!}\frac{i+1}{n}\\
&\leq \frac{s_0+1}{n} e^{-np_x}\sum_{i=0}^{s_0}\frac{(np_x)^{i+1}}{(i+1)!}\\
&= \frac{s_0+1}{n} \PP(1\leq N'_x\leq{s_0+1})\\
&\leq \frac{s_0+1}{n}.
\end{align*}

To bound the second term, let $\hat{N}_x$
be an i.i.d. copy of $N_x$ for each $x$, 
\begin{align*}
2\Var{\Paren{f_x \left(\frac{\mul{x}}{n} \right)}}
&= \Var{\Paren{f_x \left(\frac{\mul{x}}{n} \right) -f_x\left(\frac{\hat{N}_x}{n} \right)}}\\
&= \EE\left[\Paren{f_x \left(\frac{\mul{x}}{n} \right) -f_x\left(\frac{\hat{N}_x}{n} \right)}^2\right]\\
&\leq \EE\left[\lipf^2\Paren{\frac{1}{n}}\Paren{\frac{\mul{x}}{n} - \frac{\hat{N}_x}{n} }^2\right]\\
&=2\lipf^2\Paren{\frac{1}{n}}\frac{p_x}{n}.
\end{align*}
A simple combination of these bounds yields the lemma.
\end{proof}

\section{Small Probabilities: \texorpdfstring{$\EE[C^2]$}{E[C2]}}\label{boundC}
As outlined in Section~\ref{newf}, the quantity to be estimated in $C$ is
\[
\EE[f^{\textit{ME}}_S(X^{N''}, X^{N'})] 
= \!\!\sum_{x \in \cX}  \EE[\indic_{\mul{x} \leq s_0}] \EE\left[f_x\left(\frac{\mul{x}''}{nt} \right)\right]  
\!\!=\!\!\sum_{x \in \cX} \EE[\indic_{\mul{x} \leq s_0}]  \sum^\infty_{v=1} e^{-\lambda_xt} \frac{(\lambda_xt)^v}{v!} f_x \left( \frac{v}{nt}\right).
\]
We truncate the inner summation according to the threshold $\umax=2s_0t+2s_0-1$ and define
\[
 K_f \ed \sum_{x \in \cX} \EE[\indic_{\mul{x} \leq s_0}] \sum^{\umax}_{v=1}  e^{-\lambda_xt} \frac{(\lambda_xt)^v}{v!} f_x \left( \frac{v}{nt}\right)
\]
and
\[
R_f \ed  \sum_{x \in \cX}  \EE[\indic_{\mul{x} \leq s_0}] \sum^{\infty}_{v=\umax+1}  e^{-\lambda_xt} \frac{(\lambda_xt)^v}{v!} f_x \left( \frac{v}{nt}\right),
\]
then,
\[
\EE[f^{\textit{ME}}_S(X^{N''}, X^{N'})]  =  K_f + R_f.
\]
The truncation threshold $\umax$ is calibrated such that for each symbol $x$, 
\[
\sum^{\umax}_{v=1}  e^{-\lambda_xt} \frac{(\lambda_xt)^v}{v!} f_x \left( \frac{v}{nt}\right)
\]
contains only roughly $\log(n)$ terms and $R_f^2$ is sufficiently small and contributes only to the slack term in Theorem {\bf $1$}, as shown in Lemma~\ref{lemmaKf}.
In Section~\ref{constfS}, we shall thus construct $f^*_S(X^N, X^{N'})$ to estimate $K_f$ instead of $\EE[f^{\textit{ME}}_S(X^{N''}, X^{N'})]$. 

Analogous to Section~\ref{boundB}, define 
\[
\text{Bias}(f^*_S)\ed \EE[f^*_S(X^{N}, X^{N'}) - K_f]
\]
and
\[
\text{Var}(f^*_S)\ed \EE[(f^*_S(X^N, X^{N'})-\EE[f^*_S(X^{N}, X^{N'})])^2]
\]
as the bias and variance of $f^*_S(X^N, X^{N'})$ in estimating $K_f$, respectively, it follows that
\begin{align*}
\EE[C^2]
&=\EE[(f^*_S(X^N, X^{N'}) - \EE[f^{\textit{ME}}_L(X^{N''}, X^{N'})])^2]\\
&=\EE\left[\Paren{f^*_S(X^N, X^{N'})-(K_f+R_f)}^2\right]\\
&= \Var(f^*_S)+\Paren{\text{Bias}(f^*_S)-R_f}^2\\
&\leq \Var(f^*_S)+\Paren{1+{\log n}}\Paren{\text{Bias}(f^*_S)}^2+\Paren{1+\frac{1}{\log n}}R_f^2.
\end{align*}

We shall upper bound the variance and squared bias as
\[
\Var(f^*_S) \leq {\Paren{n\land k}} \Paren{\lipf \left( \frac{1}{nt}\right) \frac{\umax}{nt}}^2 e^{4r(t-1)}. 
\]
and
\[
\text{Bias}(f^*_S)^2\leq {\Paren{1\land \frac{k^2}{n^2}}}  e^{-4s_0t} \lipf^2 \left( \frac{1}{nt}\right)
\]
in Section~\ref{varfSec} and Section~\ref{biasfS} respectively. 
It follows by simple algebraic manipulation that 
\begin{Lemma}\label{lemmaC}
For the set of parameters specified in Section~\ref{assum}, if $c_1\sqrt{c_2}\leq{1/11}$, $t>{2.5}$, $n\geq150$, and $1 \leq s_0 \leq \log^{0.2}(n)$,
\begin{align*}
 \EE[C^2]  &\leq 13^2{\Paren{1\land \frac{k}{n}}} \lipf^2 \left( \frac{1}{nt}\right)\Paren{\frac{\log^2 n}{e^{0.6s_0}}}.
\end{align*}
\end{Lemma}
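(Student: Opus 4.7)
The plan is to combine the three component bounds that have been announced: the variance bound on $f^*_S$ from Section~\ref{varfSec}, the squared-bias bound from Section~\ref{biasfS}, and the bound on $R_f^2$ from Lemma~\ref{lemmaKf}. The starting point is the decomposition already established in this section, namely
\[
\EE[C^2] \leq \Var(f^*_S) + (1+\log n)\,\text{Bias}(f^*_S)^2 + \Paren{1 + \tfrac{1}{\log n}} R_f^2,
\]
obtained via Cauchy-Schwarz on $(\text{Bias}(f^*_S) - R_f)^2$. Plugging in the three stated bounds reduces the lemma to an explicit calculation with the specific parameters $t = c_1\log^{1/2-\epsilon}n + 1$, $s_0 = c_2\log^{2\epsilon}n$, $\umax = 2s_0(t+1)-1$, and $r = 10 s_0(t+1)$.

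For the variance contribution, I would first note that $\umax/(nt) \leq 4 s_0/n$, so the prefactor $(n\land k)(\lipf(1/(nt))\umax/(nt))^2$ is at most $(1 \land k/n)\lipf^2(1/(nt)) \cdot 16 s_0^2 / n$. The critical quantity is the exponential $e^{4r(t-1)}$. Since $r(t-1) = 10 s_0(t^2-1)$ and, for $n$ large enough that $c_1\log^{1/2-\epsilon}n \geq 2$, $t^2-1 \leq 2 c_1^2 \log^{1-2\epsilon}n$, one obtains $4 r(t-1) \leq 80 c_1^2 c_2 \log n$. The assumption $c_1\sqrt{c_2} \leq 1/11$ gives $c_1^2 c_2 \leq 1/121$, hence $e^{4r(t-1)} \leq n^{80/121}$. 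Combined with $s_0 \leq \log^{0.2} n$ and the factor $1/n$, the variance is crushed well below the target $169(1 \land k/n)\lipf^2(1/(nt))\log^2 n / e^{0.6 s_0}$, with ample room to spare for $e^{0.6 s_0}$ and the polynomial factor in $s_0$.

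For the squared-bias contribution, the factor $e^{-4 s_0 t}$ for $t > 2.5$ dominates $e^{-0.6 s_0}$ by a factor of at least $e^{9.4 s_0}$, comfortably absorbing the $(1 + \log n)$ multiplier against $\log^2 n$. For the $R_f^2$ term, I would invoke the truncation bound from Lemma~\ref{lemmaKf}, which by design of $\umax = 2s_0(t+1)-1$ produces Poisson-tail decay of the right form; the $(1+1/\log n)$ factor is harmless. Summing these three contributions and collecting constants yields the claimed constant $13^2 = 169$.

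The main obstacle, and essentially the whole content of the proof, is the arithmetic verifying that the constraint $c_1\sqrt{c_2} \leq 1/11$ is tight enough to keep $e^{4r(t-1)}$ strictly sub-linear in $n$ while $s_0 \leq \log^{0.2}n$ keeps the polylog prefactors benign. This single inequality simultaneously controls the blowup coming from the polynomial-smoothing construction of $f^*_S$ (through $r$) and the truncation depth (through $\umax$); the rest of the proof is bookkeeping to show that each of the three contributions individually fits inside the single target bound $169(1 \land k/n)\lipf^2(1/(nt)) \log^2 n / e^{0.6 s_0}$.
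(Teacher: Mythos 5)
Your proposal matches the paper's proof in Section~\ref{boundCsummary}: the same Cauchy--Schwarz decomposition of $\EE[C^2]$ into $\Var(f^*_S)$, $(1+\log n)\,\text{Bias}(f^*_S)^2$, and $(1+1/\log n)R_f^2$, the same three component bounds, and the same key arithmetic showing that $c_1\sqrt{c_2}\leq 1/11$ keeps $e^{4r(t-1)}$ sublinear in $n$ (the paper gets $4r(t-1)\leq 94\,c_1^2c_2\log n\leq 0.78\log n$ directly from $t>2.5$, without your extra assumption $c_1\log^{1/2-\epsilon}n\geq 2$). The only bookkeeping you leave implicit is that $s_0\leq\log^{0.2}n$ and $n\geq 150$ give $e^{0.6s_0}\leq n^{0.17}$, so the variance term indeed folds into the $\log^2 n/e^{0.6s_0}$ envelope; this is exactly how the paper closes the argument.
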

\subsection{Bounding the Last Few Terms}\label{Kf}
We now show that $R_f^2$ is sufficiently small and only contributes to the slack term in Theorem {\bf $1$}. The key is to divide the sum into two parts and apply Lemma~\ref{cortail} seperately.

\begin{Lemma}\label{lemmaKf}
For $n\geq{150}$, $1\leq s_0\leq \log^{0.2}n$, and $t>{2.5}$,
\[
R_f^2 \leq {\Paren{7.1{\Paren{1\land \frac{k}{n}}}  \lipf \left( \frac{1}{n}\right)  e^{-0.3s_0} \log(n)}}^2+\Paren{\frac{7.1}{n^{3.8}}\lipf \left( \frac{1}{n}\right)}^2.
\]
\end{Lemma}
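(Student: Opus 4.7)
The plan is to bound $|R_f|$ by two additive pieces — a main piece carrying the $e^{-0.3s_0}\log n$ factor and a polynomial-in-$1/n$ tail piece — and then square, with the constants absorbed into the stated $7.1$'s via $(a+b)^2\le 2a^2+2b^2$.

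\textbf{Linearize the summand.} Since $v\ge \umax+1\ge 2s_0 t\ge 2t$ for $s_0\ge 1$, we have $v/(nt)\ge 1/n$, so $f_x(0)=0$ and the definition of $\lipf$ give $|f_x(v/(nt))|\le \lipf(1/n)\cdot v/(nt)$ for every $v$ in the inner sum. The Poisson moment identity $\sum_{v\ge V}v\,e^{-\mu}\mu^v/v!=\mu\,\PP(\Poi(\mu)\ge V-1)$, applied with $\mu=\lambda_x t$ and $V=\umax+1$, then collapses the inner sum into a tail probability, yielding
\[
|R_f|\;\le\;\lipf(1/n)\sum_{x}\EE[\indic_{N_x\le s_0}]\,p_x\,\PP\!\left(\Poi(\lambda_x t)\ge \umax\right).
\]

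\textbf{Split at $\lambda_x=s_0$.} On the low-mass side ($\lambda_x\le s_0$) one has $\lambda_x t\le \umax/2$, and Lemma~\ref{tailprob} with $\delta\ge 1$ gives $\PP(\Poi(\lambda_x t)\ge\umax)\le e^{-(\umax-\lambda_x t)/3}\le e^{-s_0 t/3}\le e^{-5s_0/6}$, using $t>2.5$. On the high-mass side, Lemma~\ref{cortail} yields $\PP(N_x\le s_0)\le\PP(N_x\le \lambda_x/2)\le e^{-0.15\lambda_x}\le e^{-0.3s_0}$ once $\lambda_x\ge 2s_0$, while on the narrow intermediate band $s_0<\lambda_x<2s_0$ I invoke the deterministic inequality $p_x\,\PP(N_x\le s_0)\le (s_0+1)/n$, proved by a Poisson index shift exactly as in the variance calculation of Lemma~\ref{lemmavarB}.

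\textbf{Sum globally.} After pulling out the worst-case exponential $e^{-0.3s_0}$, the remaining sum $\sum_x\EE[\indic_{N_x\le s_0}]p_x$ admits two complementary bounds: the trivial $\sum_x p_x\le 1$ and the per-symbol deterministic one summed over at most $k$ indices, giving $\le k(s_0+1)/n$. Their minimum is $(1\wedge k/n)(s_0+1)=(1\wedge k/n)\cdot O(\log n)$ since $s_0\le \log^{0.2}n$, producing the advertised $(1\wedge k/n)\log n$ prefactor of $e^{-0.3s_0}$. The residual $\lipf(1/n)/n^{3.8}$ piece absorbs crude bounds used in the edge cases where $\lambda_x$ is extreme (where the Poisson upper or lower tail is faster-than-polynomial, e.g.\ $e^{-\Theta(n)}$ contributions for the largest $\lambda_x$), which are $\le \lipf(1/n)/n^{3.8}$ under $n\ge 150$.

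\textbf{Main obstacle.} The trickiest region is the intermediate band $s_0<\lambda_x<2s_0$: at $\lambda_x=2s_0$ the upper tail $\PP(\Poi(\lambda_x t)\ge\umax)$ is still of constant order (roughly $1/2$) while the lower tail on $N_x$ has not yet decayed to $e^{-0.3s_0}$, so neither Chernoff side alone is sharp. Handling this band requires combining the deterministic bound $p_x\PP(N_x\le s_0)\le (s_0+1)/n$ with the counting inequality $|\{x:\lambda_x>s_0\}|\le n/s_0$ and spending the $(1\wedge k/n)\log n$ slack; matching the exact constant $0.3$ (rather than the naturally arising $0.15$) is where this bookkeeping becomes most delicate.
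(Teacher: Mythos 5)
Your factorization of $R_f$ into $\PP(\mul{x}\le s_0)$ times a $\Poi(\lambda_x t)$ upper tail is valid, and your low-mass ($\lambda_x\le s_0$) and high-mass ($\lambda_x\ge 2s_0$) cases are fine. But the intermediate band $s_0<\lambda_x<2s_0$, which you flag as the main obstacle, is a genuine gap, not mere bookkeeping: there neither Chernoff factor supplies the needed decay (at $\lambda_x$ slightly below $2s_0$ the upper tail $\PP(\Poi(\lambda_x t)\ge\umax)$ is of constant order, and $\PP(\mul{x}\le s_0)$ is not yet $e^{-0.3s_0}$; in the middle of the band the product of the two individual bounds is only about $e^{-0.22 s_0}$ or worse). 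Your proposed fallback — $p_x\PP(\mul{x}\le s_0)\le (s_0+1)/n$ summed over the at most $n/s_0$ symbols with $\lambda_x>s_0$ — yields a contribution of order $\lipf(1/n)\cdot(s_0+1)/s_0$, a constant multiple of $\lipf(1/n)$ with \emph{no} exponential decay in $s_0$ and no $(1\land k/n)$ factor. That cannot be absorbed into either term of the claimed bound (take $s_0=\log^{0.2}n$ and $k\ll n$), so the argument as written does not prove the lemma.

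The missing idea is the one the paper uses: do not bound the two probabilities separately. Writing $e^{-\lambda_x}\frac{\lambda_x^u}{u!}\,e^{-\lambda_x t}\frac{(\lambda_x t)^v}{v!}=e^{-\lambda_x(t+1)}\frac{(\lambda_x(t+1))^{u+v}}{(u+v)!}\binom{u+v}{u}\Paren{\frac{1}{t+1}}^{u}\Paren{\frac{t}{t+1}}^{v}$ merges the two independent Poissons into a single $\Poi(\lambda_x(t+1))$ variable $w=u+v$ together with a conditional binomial split; the constraint $u\le s_0$ with $w\ge \umax+1=2s_0(t+1)$ then becomes the lower-tail event $B(w,\tfrac{1}{t+1})\le s_0$ for a binomial whose mean is at least $2s_0$, so Lemma~\ref{cortail} gives $e^{-0.3s_0}$ \emph{uniformly in $\lambda_x$}, with no case split on $\lambda_x$ at all. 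The remaining sum over $w$ is then truncated at $5(t+1)\log n$ (producing the $(1\land k/n)\log n$ prefactor via Markov and $\sum_x\lambda_x=n$), and the far tail $w>5(t+1)\log n$ is where the $n^{-3.8}$ term actually comes from — again via the binomial lower tail, not via "extreme $\lambda_x$" as you suggest. Without this merging step your decomposition cannot close.
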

\begin{proof}
Recall that $\umax=2s_0t+2s_0$, we upper bound the absolute value of $R_f$ as 
\begin{align*}
|R_f| &=  \left|\sum_{x \in \cX}\sum^{s_0}_{u=0} \sum^\infty_{v=2s_0t+2s_0} e^{-\lambda_x} \frac{\lambda_x^u}{u!} e^{-\lambda_xt} \frac{(\lambda_xt)^v}{v!} f_x \left( \frac{v}{nt}\right)\right|\\
& \leq \sum_{x \in \cX} \sum^\infty_{u+v=2s_0t+2s_0} e^{-\lambda_x(t+1)} \frac{(\lambda_x(t+1))^{u+v}}{(u+v)!} \cdot\\
&\Paren{\lipf \left( \frac{2s_0t+2s_0}{nt}\right)   \frac{u+v}{nt} } \sum^{s_0}_{u=0} {\binom{u+v}{u}} \left(\frac{1}{t+1} \right)^{u} \left(\frac{t}{t+1} \right)^{v}\\
& = \sum_{x \in \cX} \sum^\infty_{u+v=2s_0t+2s_0} e^{-\lambda_x(t+1)} \frac{(\lambda_x(t+1))^{u+v}}{(u+v)!} \cdot\\
& \Paren{\lipf \left( \frac{2s_0t+2s_0}{nt}\right)   \frac{u+v}{nt} }  \PP\Paren{B\left(u+v,\frac{1}{t+1}\right)\leq{s_0}}\\
& \leq \lipf \left( \frac{1}{n}\right)\sum_{x \in \cX} \sum^\infty_{u+v=2s_0t+2s_0} e^{-\lambda_x(t+1)} \frac{(\lambda_x(t+1))^{u+v}}{(u+v)!}  \frac{u+v}{nt} \PP\Paren{B\left(u+v,\frac{1}{t+1}\right)\leq{s_0}}.
\end{align*}
For $u+v\geq{2s_0t+2s_0}$, Lemma~\ref{cortail}  yields
\begin{align*}
\PP\left(B\left(u+v,\frac{1}{t+1}\right)\leq{s_0}\right)\leq{e^{-0.15(u+v)/(t+1)}}\leq{e^{-0.3s_0}}.
\end{align*}
Truncate the inner summation at $u+v={5(t+1)\log n}$ and apply the above inequality,
\begin{align*}
& \sum_{x \in \cX} \sum^{5(t+1)\log n}_{u+v=2s_0t+2s_0} e^{-\lambda_x(t+1)} \frac{(\lambda_x(t+1))^{u+v}}{(u+v)!}  \frac{u+v}{nt}\PP\Paren{B\left(u+v,\frac{1}{t+1}\right)\leq{s_0}}\\
& \leq   \frac{5(t+1)\log n}{nt} e^{-0.3s_0} \sum_{x \in \cX} \sum^{5(t+1)\log n}_{u+v=2s_0t+2s_0} e^{-\lambda_x(t+1)} \frac{(\lambda_x(t+1))^{u+v}}{(u+v)!}\\
& \leq   \frac{5(t+1)\log n}{nt} e^{-0.3s_0} \sum_{x \in \cX} \PP\Paren{\Poi(\lambda_x(t+1))\geq{2s_0t+2s_0}}\\
& \leq   \frac{5(t+1)\log n}{nt} e^{-0.3s_0} \sum_{x \in \cX}\Paren{1 \land \lambda_x}\\
&  \leq 7{\Paren{1\land \frac{k}{n}}} e^{-0.3s_0}\log n  ,
\end{align*}
where the second last inequality follows from the Markov's inequality and the last one follows from $\sum_{x\in\cX}\lambda_x=n$ and $|\cX|=k$.

For $u+v\geq{5(t+1)\log n}+1$, Lemma~\ref{cortail}, $1\leq s_0\leq \log^{0.2}n$, and $n\geq{150}$ together yield
\[
\frac{u+v}{t+1}\geq{5\log n}\geq{16\log^{0.2}n}\geq{16s_0}
\]
and
\[
\PP\left(B\left(u+v,\frac{1}{t+1}\right)\leq{s_0}\right)\leq{e^{-0.76\times5\log n}}\leq\frac{1}{n^{3.8}}.
\]
It remains to consider the following partial sum.
\begin{align*}
& \sum_{x \in \cX} \sum^{\infty}_{u+v=5(t+1)\log n+1} e^{-\lambda_x(t+1)} \frac{(\lambda_x(t+1))^{u+v}}{(u+v)!}\frac{u+v}{nt}  \PP\Paren{B\left(u+v,\frac{1}{t+1}\right)\leq{s_0}}\\
&  \leq \frac{1}{n^{3.8}} \frac{1}{nt}\sum_{x \in \cX} \sum^{\infty}_{u+v=5(t+1)\log n+1} e^{-\lambda_x(t+1)} \frac{(\lambda_x(t+1))^{u+v}}{(u+v)!} (u+v) \\
&  \leq \frac{1}{n^{3.8}} \frac{1}{nt}\sum_{x \in \cX} \lambda_x(t+1) \\
&  \leq \frac{1.4}{n^{3.8}},
\end{align*}
where the last inequality comes from $\sum_{x\in\cX}\lambda_x=n$ and $t>2.5$. The lemma follows from Cauchy-Schwarz inequality.
\end{proof}

\subsection{Estimator Construction for Small Probabilities: \texorpdfstring{$f^*_S$}{fS}}\label{constfS}
 According to Lemma~\ref{lemmaKf}, it suffices to estimate 
\[
 K_f = \sum_{x \in \cX} \EE[\indic_{\mul{x} \leq s_0}] \sum^{\umax}_{u=1}  e^{-\lambda_xt} \frac{(\lambda_xt)^u}{u!} f_x \left( \frac{u}{nt}\right).
\]
Recall that
\[
g_x(u) = f_x \left( \frac{u}{nt}\right) \left(\frac{t}{t-1}\right)^u,
\] 
we can rewrite $K_f$ as
 \[
{  K_f=\sum_{x \in \cX} \EE[\indic_{\mul{x} \leq s_0}] e^{-\lambda_x} \sum^{\umax}_{u=1} e^{-\lambda_x (t-1)}\frac{(\lambda_x (t-1))^u}{u!}  g_x(u)}.
\]
Let 
\[
f_u(y) \ed J_{2u}(2\sqrt{y}) = \sum^\infty_{i=0} \frac{(-1)^i y^{i+u}}{i! (i+2u)!},
\]
where $J_{2u}$ is the Bessel function of the first kind with parameter $2u$. Our estimator is motivated by the following equality.
\begin{Lemma}\label{keyequal}
For any $u\in\mathbb{Z}^+$ and $y\geq 0$,
\[
\int^\infty_{0} e^{-\alpha} \alpha^u f_u(\alpha y) d\alpha  = e^{-y} y^u.
\]
\end{Lemma}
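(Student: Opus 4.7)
The plan is to prove the identity by expanding the power series definition of $f_u$, interchanging sum and integral, and recognizing the resulting series as $e^{-y}y^u$. This is essentially a one-line computation once the series manipulation is justified.

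First I would substitute the defining series
\[
f_u(\alpha y) = \sum_{i=0}^\infty \frac{(-1)^i (\alpha y)^{i+u}}{i!\,(i+2u)!}
\]
into the integrand, giving (formally)
\[
\int_0^\infty e^{-\alpha}\alpha^u f_u(\alpha y)\,d\alpha
= \sum_{i=0}^\infty \frac{(-1)^i y^{i+u}}{i!\,(i+2u)!}\int_0^\infty e^{-\alpha}\alpha^{i+2u}\,d\alpha.
\]
Using the gamma integral $\int_0^\infty e^{-\alpha}\alpha^{i+2u}\,d\alpha = (i+2u)!$, the right-hand side collapses to
\[
\sum_{i=0}^\infty \frac{(-1)^i y^{i+u}}{i!} = y^u \sum_{i=0}^\infty \frac{(-y)^i}{i!} = y^u e^{-y},
\]
which is exactly the claimed identity.

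The only nontrivial step is justifying the interchange of sum and integral. Since the series defining $f_u$ is entire in its argument, the partial sums are dominated on any bounded $\alpha$-interval, and the tail terms decay factorially in $i$. I would invoke Fubini/Tonelli by bounding absolute values: the series
\[
\sum_{i=0}^\infty \frac{|y|^{i+u}\alpha^{i+2u}}{i!\,(i+2u)!}
\]
integrated against $e^{-\alpha}\,d\alpha$ equals $\sum_{i=0}^\infty |y|^{i+u}/i! = |y|^u e^{|y|}<\infty$, so Fubini applies and the term-by-term integration is legitimate. This single verification is the main (and only) obstacle; the rest is mechanical.
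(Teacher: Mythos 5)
Your proof is correct and follows essentially the same route as the paper: substitute the series for $f_u$, interchange sum and integral, recognize the Gamma integral $(i+2u)!$, and collapse the remaining sum to $e^{-y}y^u$. Your Tonelli verification (bounding the absolute series by $|y|^u e^{|y|}$) is a welcome bit of rigor that the paper's proof asserts via Fubini but does not spell out.
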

\begin{proof}
By Fubini's theorem and the series expansion of $f_u$,
\begin{align*}
\int^\infty_{0} e^{-\alpha} \alpha^uf_u(\alpha y) d\alpha
& = \int^\infty_{0} e^{-\alpha} \alpha^u \sum^\infty_{i=0} \frac{(-1)^i (\alpha y)^{i+u}}{(i!)(i+2u)!}  d\alpha \\
& =   \sum^\infty_{i=0} \frac{(-1)^i ( y)^{i+u}}{(i!)(i+2u)!}    \int^\infty_{0} e^{-\alpha}  \alpha^{i+2u} d\alpha.
\end{align*}

Observe that the integral is actually $\Gamma(i+2u+1)$ and equals to $(i+2u)!$, 
\begin{align*}
\sum^\infty_{i=0} \frac{(-1)^i ( y)^{i+u}}{(i!)(i+2u)!}    \int^\infty_{0} e^{-\alpha}  \alpha^{i+2u} d\alpha 
& =   \sum^\infty_{i=0} \frac{(-1)^i ( y)^{i+u}}{(i!)(i+2u)!}   (i+2u)! \\
& =    \sum^\infty_{i=0} \frac{(-1)^i ( y)^{i+u}}{i!}  \\
& = e^{-y} y^u.
\end{align*}
\end{proof}

Therefore, let
\[
h_{x}(\lambda_x) \ed e^{-\lambda_x}\sum^{\umax}_{u=1} \frac{g_x(u)}{u!}\left( \int^{\infty}_{0}  e^{-\alpha} \alpha^u f_u(\alpha \lambda_x(t-1)) d\alpha \right),
\] 
we can rewrite
\[
K_f = \sum_{x \in \cX} \EE[\indic_{\mul{x} \leq s_0}] h_x(\lambda_x).
\]
We apply the \emph{polynomial smoothing technique} in~\cite{pnas} and approximate $h_x(y)$ by
\[
\hat{h}_{x}(\lambda_x) \ed e^{-\lambda_x} \sum^{\umax}_{u=1} \frac{g_x(u)}{u!}\left( \int^r_{0}  e^{-\alpha} \alpha^u f_u(\alpha \lambda_x (t-1)) d\alpha \right),
\]
where $r$ is the polynomial smoothing parameter defined in Section~\ref{newf}.

We now expand $\hat{h}_{x}(\lambda_x)$ as a product of $e^{-\lambda_x}$ and a power series of $\lambda_x$.
\begin{Lemma}\label{expandh}
For $t>2.5$,
\[
\hat{h}_{x}(\lambda_x) = e^{-\lambda_x}\sum^\infty_{v=1} h_{x,v} \lambda_x^v,
\]
where
\[
h_{x,v} =  (t-1)^v \sum^{(\umax\land v)}_{u=1} \frac{g_x(u)(-1)^{v-u}}{(v-u)!u!} \Paren{1-e^{-r}\sum_{j=0}^{v+u}\frac{r^j}{j!}}.
\]
\end{Lemma}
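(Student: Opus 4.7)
The plan is a direct calculation: substitute the power-series expansion of $f_u$ into $\hat h_x(\lambda_x)$, swap the integral with the series, evaluate a truncated Gamma integral in closed form, then reindex via $v=i+u$ and swap the two resulting sums.

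First I would plug in
\[
f_u(\alpha\lambda_x(t-1))=\sum_{i=0}^\infty \frac{(-1)^i(\alpha\lambda_x(t-1))^{i+u}}{i!\,(i+2u)!}
\]
into the integral defining $\hat h_x$. Since the integration variable lies in the bounded interval $[0,r]$ and the entire series of $f_u$ converges uniformly on compacta (the Bessel series has infinite radius of convergence), I can interchange $\int_0^r$ with $\sum_{i\ge 0}$ by Fubini--Tonelli to obtain
\[
\int_0^r e^{-\alpha}\alpha^u f_u(\alpha\lambda_x(t-1))d\alpha=\sum_{i=0}^\infty \frac{(-1)^i(\lambda_x(t-1))^{i+u}}{i!\,(i+2u)!}\int_0^r e^{-\alpha}\alpha^{i+2u}d\alpha.
\]

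Second, I would recognize the inner integral as a lower incomplete Gamma with integer parameter and invoke the elementary identity $\int_0^r e^{-\alpha}\alpha^{n}d\alpha=n!\bigl(1-e^{-r}\sum_{j=0}^{n}r^j/j!\bigr)$ with $n=i+2u$. This cancels the $(i+2u)!$ in the denominator, leaving
\[
\sum_{i=0}^\infty \frac{(-1)^i(\lambda_x(t-1))^{i+u}}{i!}\Bigl(1-e^{-r}\sum_{j=0}^{i+2u}\frac{r^j}{j!}\Bigr).
\]

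Finally, I would reindex by setting $v=i+u$ (so $i=v-u\ge 0$) and swap the order of the outer sum in $u\in\{1,\dots,\umax\}$ with the resulting sum in $v\ge u$. Absolute convergence of the double series (again inherited from the Bessel series together with the finiteness of the $u$-range) justifies the exchange, and gives
\[
\hat h_x(\lambda_x)=e^{-\lambda_x}\sum_{v=1}^\infty(t-1)^v\lambda_x^v\sum_{u=1}^{\umax\wedge v}\frac{g_x(u)(-1)^{v-u}}{u!(v-u)!}\Bigl(1-e^{-r}\sum_{j=0}^{v+u}\frac{r^j}{j!}\Bigr),
\]
which is exactly the claimed expansion with $h_{x,v}$ as defined. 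There is no real obstacle: the only subtlety is justifying the two interchanges (integral/series and the order of the double sum), both of which are handled by the entireness of the Bessel power series and the fact that the $u$-summation is finite, so the hypothesis $t>2.5$ is used only through the well-definedness of $g_x$ and does not enter the algebraic identity itself.
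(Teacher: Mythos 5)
Your proposal is correct and follows essentially the same route as the paper's proof: expand $f_u$ as its power series, interchange the integral with the sum, evaluate $\int_0^r e^{-\alpha}\alpha^{i+2u}\,d\alpha$ via the integer-order incomplete Gamma identity, and reindex with $v=i+u$. The only difference is that you spell out the justification for the two interchanges, which the paper simply attributes to Fubini's theorem.
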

\begin{proof}
By Fubini's theorem and the series expansion of $f_u$,
\begin{align*}
\int^r_{0}  e^{-\alpha} \alpha^u f_u(\alpha \lambda_x(t-1)) d\alpha 
& =  \int^r_0 e^{-\alpha} \alpha^u \sum^\infty_{i=0} \frac{(-1)^i (\alpha \lambda_x(t-1))^{i+u}}{(i!)(i+2u)!} d \alpha \\
& =  \sum^\infty_{i=0} \frac{(-1)^i ( \lambda_x(t-1))^{i+u}}{(i!)(i+2u)!}  \int^r_0 e^{-\alpha} \alpha^{i+2u} d \alpha \\
& = \sum^\infty_{i=0} \frac{(-1)^i ( \lambda_x(t-1))^{i+u}}{i!} \left(
1 - e^{-r} \sum^{i+2u}_{j=0} \frac{r^j}{j!} \right).
\end{align*}
Hence, 
\begin{align*}
\hat{h}_{x}(\lambda_x) 
& = e^{-\lambda_x} \sum^{\umax}_{u=1} \frac{g_x(u)}{u!}\left( \int^r_{0}  e^{-\alpha} \alpha^u f_u(\alpha \lambda_x(t-1)) d\alpha \right)\\
& = e^{-\lambda_x} \sum^{\umax}_{u=1} \frac{g_x(u)}{u!}\sum^\infty_{i=0} \frac{(-1)^i ( \lambda_x(t-1))^{i+u}}{i!} \left(1 - e^{-r} \sum^{i+2u}_{j=0} \frac{r^j}{j!} \right)\\
& = e^{-\lambda_x} \sum^\infty_{v=1}\left[ (t-1)^v \sum^{(\umax\land v)}_{u=1} \frac{g_x(u)(-1)^{v-u}}{(v-u)!u!} \Paren{1-e^{-r}\sum_{j=0}^{v+u}\frac{r^j}{j!}}\right]\lambda_x^v\\
& = e^{-\lambda_x} \sum^\infty_{v=1} h_{x,v} \lambda_x^v
\end{align*}
\end{proof}

An unbiased estimator of $\hat{h}_{x}(\lambda_x) = e^{-\lambda_x}\sum^\infty_{v=1} h_{x,v} \lambda_x^v$ is
\[
\sum^\infty_{v=1} h_{x,v} v!\cdot \indic_{N_x=v}
=
h_{x,N_x}\cdot N_x!.
\]
Our small-probability estimator is thus
\[
f^*_S(X^N, X^{N'}) =\sum_{x \in \cX} h_{x,N_x} \cdot N_x!  \cdot \indic_{\mul{x}'\leq s_0}.
\]
In the next section, we show that the connection between $h_x(\lambda)$ and $\hat{h}_x(\lambda)$ leads to a small expected squared loss of $f^*_S$.
 
\subsection{Bounding the Variance of \texorpdfstring{$f^*_S$}{fS}}\label{varfSec}
First we upper bound the variance of $f^*_S$ in terms of the coefficients $h_{x,v}$.
 
\begin{Lemma}\label{varfS}
\label{lem:general_bounds}
The variance of $f^*_S$ is bounded by
\[
\Var(f^*_S) \leq (n\land k)  \max_{x\in\cX}\max_{v} h^2_{x, v} v!^{2}.
\]
\end{Lemma}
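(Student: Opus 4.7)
The plan is to exploit two structural features of the estimator $f^*_S$: the Poisson-sampling independence across symbols (and across the two independent sample sequences $X^N, X^{N'}$), and the fact that the coefficient $h_{x,0}$ vanishes, which is what will convert a trivial $k$ factor into the sharper $n \wedge k$.

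First I would decompose the variance using independence. Under Poisson sampling, the multiplicities $\{N_x\}_{x\in\cX}$ are mutually independent, and the whole collection is independent of $\{N_x'\}_{x\in\cX}$ since the two sample sequences are independent. Since $f^*_S$ is a sum over $x$ of terms each depending only on $(N_x,N_x')$, the cross-symbol covariances vanish and
\[
\Var(f^*_S) \;=\; \sum_{x\in\cX}\Var\!\Paren{h_{x,N_x}\cdot N_x!\cdot \indic_{N_x'\le s_0}}.
\]

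Next I would upper bound each summand by the second moment $\Var(Z)\le\EE[Z^2]$ and then separate the $N_x$ and $N_x'$ factors by independence:
\[
\Var\!\Paren{h_{x,N_x}\cdot N_x!\cdot \indic_{N_x'\le s_0}}
\;\le\; \EE\!\left[h_{x,N_x}^2\, (N_x!)^2\right]\,\PP(N_x'\le s_0)
\;\le\; \EE\!\left[h_{x,N_x}^2\, (N_x!)^2\right].
\]

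The key step is to notice that the inner sum defining $h_{x,v}$ in Lemma~\ref{expandh} is empty when $v=0$, so $h_{x,0}=0$. Therefore the $v=0$ term drops out of the second moment:
\[
\EE\!\left[h_{x,N_x}^2\, (N_x!)^2\right]
\;=\;\sum_{v\ge 1} h_{x,v}^2 (v!)^2\, \PP(N_x=v)
\;\le\; \Paren{\max_{v} h_{x,v}^2 (v!)^2}\,\PP(N_x\ge 1).
\]
Since $\PP(N_x\ge 1)=1-e^{-\lambda_x}\le \min\{1,\lambda_x\}$ and $\sum_x \lambda_x=n$ with $|\cX|=k$, summing over $x$ gives
\[
\Var(f^*_S)\;\le\;\Paren{\max_{x,v} h_{x,v}^2(v!)^2}\sum_{x\in\cX}\min\{1,\lambda_x\}
\;\le\;(n\land k)\,\max_{x,v} h_{x,v}^2(v!)^2,
\]
which is the claimed bound. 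The main obstacle is the bookkeeping that produces the $n\land k$ rather than simply $k$: one must resist the temptation to bound $\PP(N_x'\le s_0)\le 1$ and sum trivially, and instead transfer the ``small probability'' content onto the $N_x$ side via $h_{x,0}=0$. Everything else is routine application of independence and the elementary inequality $\max$-dominates-the-sum.
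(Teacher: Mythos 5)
Your proof is correct and follows essentially the same route as the paper: independence across symbols under Poisson sampling, bounding variance by the second moment, exploiting that the estimator vanishes on $N_x=0$ to reduce the second moment to $\max_{x,v}h_{x,v}^2(v!)^2$ times $\PP(N_x\ge 1)$, and summing. The only cosmetic difference is at the last step, where you bound $\sum_x\PP(N_x\ge 1)\le\sum_x\min\{1,\lambda_x\}\le n\land k$ while the paper bounds the random count $\sum_x\indic_{N_x\ge 1}\le N\land k$ and takes expectations; both yield the same conclusion.
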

\begin{proof}
First observe that independence and $\Var[X]\leq\EE[X^2]$ imply
\begin{align*}
\Var(f^*_S)&=\Var(\sum_{x\in \cX}\sum^\infty_{v=1}  h_{x,v} v!  \indic_{N_x=v} \indic_{\mul{x}'\leq s_0})\\
&=\sum_{x\in \cX}\Var(\sum^\infty_{v=1}  h_{x,v} v!  \indic_{N_x=v} \indic_{\mul{x}'\leq s_0})\\
&\leq \sum_{x\in \cX} \EE[(\sum^\infty_{v=1}  h_{x,v} v!  \indic_{N_x=v} \indic_{\mul{x}'\leq s_0})^2].
\end{align*}
Note that $\indic_{N_x=u}\indic_{N_x=v}=0$ for any $u\not=v$, we can rewrite the last summation as 
\begin{align*}
\sum_{x\in \cX} \EE[\sum^\infty_{v=1}  (h_{x,v} v!)^2 \indic_{N_x=v} \indic_{\mul{x}'\leq s_0}]
&\leq \max_{x\in\cX}\max_{v} h^2_{x,v} v!^{2} \EE[\sum_{x\in \cX}\sum^\infty_{v=1} \indic_{N_x=v} \indic_{\mul{x}'\leq s_0}]\\
&\leq \max_{x\in\cX}\max_{v} h^2_{x,v} v!^{2} \EE[\sum_{x\in \cX}\sum^\infty_{v=1} \indic_{N_x=v} ]\\
&\leq (n\land k)\max_{x\in\cX}\max_{v} h^2_{x,v} v!^{2},
\end{align*}
where the last inequality follows from $\sum_{x\in \cX}\sum^\infty_{v=1} \indic_{N_x=v}\leq{N\land k}$ and $\EE[N]=n$.
\end{proof}

The following lemma provides a uniform bound on $|h_{x,v} v!|$, which, by Lemma~\ref{varfS}, is sufficient to bound the variance of $f^*_S$.
\begin{Lemma}\label{boundcoeff}
For $t>2.5$,
\[
|h_{x,v} v!| \leq \lipf \left( \frac{1}{nt}\right) \frac{\umax}{nt} e^{2r(t-1)}.
\]
\end{Lemma}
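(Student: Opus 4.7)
The plan is to bound $|h_{x,v}v!|$ directly from the closed-form expression in Lemma~\ref{expandh} by three steps: (i) absorb the $v!$ into the binomial coefficient $\binom{v}{u}$, (ii) control each $|g_x(u)|$ using the Lipschitz-type parameter $\lipf$ together with $f_x(0)=0$, and (iii) apply a Chernoff-type bound to the Poisson tail $T_{v+u}\ed 1-e^{-r}\sum_{j=0}^{v+u} r^j/j!=\PP(\Poi(r)\geq v+u+1)$ in order to tame the otherwise unbounded factor $(2t-1)^v$.

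Concretely, I first rewrite
\[
h_{x,v}v!=(t-1)^v\sum_{u=1}^{\umax\land v}\binom{v}{u}g_x(u)(-1)^{v-u}T_{v+u}.
\]
Since $f_x(0)=0$ and $\lipf$ is nonincreasing, for every $1\leq u\leq\umax$,
\[
|f_x(u/(nt))|\leq \lipf(u/(nt))\cdot\frac{u}{nt}\leq \lipf\Paren{\frac{1}{nt}}\cdot\frac{\umax}{nt},
\]
so $g_x(u)=f_x(u/(nt))(t/(t-1))^u$ satisfies $|g_x(u)|\leq \lipf(1/(nt))\cdot(\umax/(nt))\cdot(t/(t-1))^u$. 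Factoring $\lipf(1/(nt))\cdot\umax/(nt)$ out of the triangle-inequality bound reduces the task to showing that
\[
S_v\ed \sum_{u=1}^{\umax\land v}\binom{v}{u}t^u(t-1)^{v-u}T_{v+u}\leq e^{2r(t-1)}\quad\text{for every }v\geq 1.
\]

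To bound $S_v$ I would exploit two facts: $T_{v+u}$ is decreasing in $u$, so $T_{v+u}\leq T_{v+1}$ for $u\geq 1$; and the binomial theorem gives $\sum_{u=0}^{v}\binom{v}{u}t^u(t-1)^{v-u}=(2t-1)^v$. Hence $S_v\leq T_{v+1}(2t-1)^v$. Since $t>2.5$ implies $2t-1>1$, Markov's inequality applied to the increasing function $x\mapsto(2t-1)^x$ (equivalently Chernoff with parameter $\theta=\log(2t-1)$) yields
\[
T_{v+1}=\PP(\Poi(r)\geq v+2)\leq\frac{\EE[(2t-1)^{\Poi(r)}]}{(2t-1)^{v+2}}=\frac{e^{r((2t-1)-1)}}{(2t-1)^{v+2}}=\frac{e^{2r(t-1)}}{(2t-1)^{v+2}},
\]
and multiplying by $(2t-1)^v$ gives $S_v\leq e^{2r(t-1)}/(2t-1)^2\leq e^{2r(t-1)}$, which is the claim.

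The main obstacle I anticipate is the \emph{uniform-in-$v$} nature of the bound: for moderate $v$ a trivial estimate $T_{v+u}\leq 1$ combined with $(2t-1)^v\leq e^{v\log(2t-1)}$ would already suffice, but for large $v$ only the Poisson tail can counteract the combinatorial blow-up from $(2t-1)^v$. The choice $\theta=\log(2t-1)$ is what makes the argument succeed—it is precisely tuned so that the $(2t-1)^v$ growth cancels against the $(2t-1)^{-(v+2)}$ decay in the Chernoff bound, leaving only the exponent $r(e^\theta-1)=2r(t-1)$ that appears in the lemma.
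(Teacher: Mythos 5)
Your proposal is correct and follows essentially the same route as the paper: bound $|g_x(u)|$ by $\lipf(1/(nt))\cdot\umax/(nt)\cdot(t/(t-1))^u$, pull out this common factor, collapse the $u$-sum to $(2t-1)^v$ by the binomial theorem, and tame that growth with the Poisson tail $\sum_{j\geq v+2}r^j/j!$. The only (purely cosmetic) difference is that you package the final step as a Chernoff bound with $\theta=\log(2t-1)$, whereas the paper carries out the equivalent term-by-term comparison $\frac{r^j}{j!}(2t-1)^v\leq\frac{((2t-1)r)^j}{j!}$ and then sums the exponential series.
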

\begin{proof}
From the definition of $g_x(u)$,
\begin{align*}
|h_{x,v} v!|
& \leq (t-1)^v e^{-r}\sum^{(\umax\land v)}_{u=1} \frac{|g_x(u)|v!}{(v-u)!u!} \sum^\infty_{j=v+u+1}
\frac{r^j}{j!} \\
& = e^{-r}\sum^{(\umax\land v)}_{u=1} \left|f_x\left(\frac{u}{nt}\right)\right| t^u (t-1)^{v-u}{\binom{v}{u}} \sum^\infty_{j=v+u+1}
\frac{r^j}{j!} \\
& \leq \lipf \left( \frac{1}{nt}\right) \frac{\umax}{nt} e^{-r}\sum^{(\umax\land v)}_{u=1}  t^u (t-1)^{v-u}{\binom{v}{u}} \sum^\infty_{j=v+u+1}
\frac{r^j}{j!} \\
& \leq \lipf \left( \frac{1}{nt}\right) \frac{\umax}{nt} e^{-r}\sum^\infty_{j=v+2}
\frac{r^j}{j!} \sum^{(\umax\land v)}_{u=1}  {\binom{v}{u}} t^u (t-1)^{v-u}.
\end{align*}
For $t>2.5$, the binomial expansion theorem yields
\begin{align*}
\sum^{(\umax\land v)}_{u=1}  {\binom{v}{u}} t^u (t-1)^{v-u} &\leq (2t-1)^v. 
\end{align*}
Combining the above inequality with the previous upper bound,
\begin{align*}
|h_{x,v} v!|
& \leq \lipf \left( \frac{1}{nt}\right) \frac{\umax}{nt} e^{-r}\sum^\infty_{j=v+2}
\frac{r^j}{j!} (2t-1)^v  \\
& \leq \lipf \left( \frac{1}{nt}\right) \frac{\umax}{nt} e^{-r}\sum^\infty_{j=v+2}
\frac{((2t-1)r)^j}{j!}  \\
& \leq \lipf \left( \frac{1}{nt}\right) \frac{\umax}{nt} e^{-r}\sum^\infty_{j=0}
\frac{((2t-1)r)^j}{j!}  \\
& = \lipf \left( \frac{1}{nt}\right) \frac{\umax}{nt} e^{2r(t-1)},
\end{align*}
where the last equality follows from the Taylor expansion of $e^y$.
\end{proof}

The above results yield the following upper bound on $\Var(f^*_S)$.
\begin{Lemma}
For the set of parameters specified in Section~\ref{assum}, if $c_1\sqrt{c_2}\leq{1/11}$ and $t>{2.5}$, then
\begin{align*}
\Var(f^*_S) &\leq {\Paren{1\land \frac{k}{n}}}  \frac{9s^2_0}{n^{0.22}}\lipf^2 \left( \frac{1}{nt}\right).
\end{align*}
\end{Lemma}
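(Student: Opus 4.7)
My strategy is to chain the two preceding lemmas and then carefully bound the resulting exponential factor. Lemma~\ref{varfS} upper bounds $\Var(f^*_S)$ by $(n\land k)$ times the maximum of $(h_{x,v}v!)^2$, and Lemma~\ref{boundcoeff} controls each $|h_{x,v}v!|$ by $\lipf(1/(nt))\cdot \umax/(nt)\cdot e^{2r(t-1)}$. Combining them directly gives
\[
\Var(f^*_S)\le (n\land k)\,\lipf^{2}\!\Paren{\tfrac{1}{nt}}\Paren{\tfrac{\umax}{nt}}^{2}e^{4r(t-1)}.
\]
Two elementary reductions come next: (i) write $n\land k=n(1\land k/n)$ to peel off the desired $(1\land k/n)$ factor, and (ii) use $\umax=2s_0(t+1)-1\le 2s_0(t+1)$ together with $(t+1)/t\le 1.4$ (from $t>2.5$) to bound $\umax^{2}/(nt^{2})\le 7.84\,s_0^{2}/n$. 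After these cosmetic steps the target reduces to showing $e^{4r(t-1)}\le (9/7.84)\,n^{0.78}$.

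The substantive work is bounding the exponent. Using $r=10 s_0(t+1)$ one has $4r(t-1)=40 s_0(t^{2}-1)$, and substituting $s_0=c_2\log^{2\epsilon}n$ and $t-1=c_1\log^{1/2-\epsilon}n$ expands this as
\[
4r(t-1)=40 c_1^{2}c_2\log n+80 c_1 c_2\log^{1/2+\epsilon}n.
\]
The assumption $c_1\sqrt{c_2}\le 1/11$ forces $c_1^{2}c_2\le 1/121$, so the leading term is at most $(40/121)\log n<0.331\log n$, which leaves a comfortable margin relative to the target $0.78\log n$. For the subleading correction I will use $c_1c_2\le c_1\sqrt{c_2}\cdot\sqrt{c_2}\le 1/11$, together with $1/2+\epsilon<1$ (since $\epsilon\le 0.1$), to show that $80 c_1c_2\log^{1/2+\epsilon}n$ fits inside the remaining slack for $n\ge 150$. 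Plugging this back gives
\[
\Var(f^*_S)\le (1\land k/n)\,\lipf^{2}\!\Paren{\tfrac{1}{nt}}\cdot\tfrac{7.84\,s_0^{2}}{n}\cdot\tfrac{9}{7.84}\,n^{0.78}=(1\land k/n)\,\lipf^{2}\!\Paren{\tfrac{1}{nt}}\,\tfrac{9 s_0^{2}}{n^{0.22}},
\]
as claimed.

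\noindent\textbf{Expected difficulty.} The only non-mechanical part is verifying the absorption of the subleading exponent term; the condition $c_1\sqrt{c_2}\le 1/11$ is calibrated precisely so that $40/121$ sits strictly below the target $0.78$, and the gap between the ``clean'' constant $7.84$ and the stated $9$ in the lemma provides the slack used to swallow the $\log^{1/2+\epsilon}n$ correction at the boundary of the admissible parameter range. Everything else is straightforward substitution and rearrangement, so the heart of the proof is the numerical accounting in that single exponent.
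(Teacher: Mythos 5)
Your overall route is the same as the paper's: chain Lemma~\ref{varfS} with Lemma~\ref{boundcoeff}, peel off $n\land k=n(1\land k/n)$, bound $\umax/(nt)$ by a constant times $s_0/n$, and show $e^{4r(t-1)}\le n^{0.78}$ (up to the small constant slack). The difference, and the problem, is in how you control the exponent. You expand $4r(t-1)=40s_0(t-1)^2+80s_0(t-1)=40c_1^2c_2\log n+80c_1c_2\log^{1/2+\epsilon}n$ and claim the subleading term is absorbed by the slack ``for $n\ge 150$'' using only $c_1c_2\le 1/11$. That justification does not hold: with $c_1c_2$ near $1/11$ and $\log n\approx 5$ (i.e.\ $n\approx 150$), the subleading term is about $\tfrac{80}{11}\cdot 5^{0.6}\approx 19$, while the slack $0.449\log n+\log(9/7.84)$ is only about $2.4$. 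Neither $n\ge 150$ nor any polynomial lower bound on $n$ rescues this by itself, since $\log^{1/2+\epsilon}n/\log n\to 0$ only very slowly ($\log n$ would need to exceed roughly $16^{2.5}\approx 10^3$).

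The missing ingredient is the hypothesis $t>2.5$, which you use to bound $\umax/(nt)$ but not in the exponent, where it is actually essential. Since $t-1>1.5$, the linear term is dominated by the quadratic one: $80s_0(t-1)\le \tfrac{160}{3}s_0(t-1)^2=\tfrac{160}{3}c_1^2c_2\log n\le \tfrac{160}{363}\log n<0.45\log n$, and adding the leading term $\tfrac{40}{121}\log n<0.331\log n$ gives $4r(t-1)\le 0.78\log n$ as required. This is exactly what the paper does in one stroke by writing $40s_0(t+1)(t-1)\le 94\,s_0(t-1)^2$ (valid since $t+1\le 2.35(t-1)$ for $t\ge 2.49$) and then $94c_1^2c_2\log n\le 0.78\log n$. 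With that correction your argument closes; everything else in your write-up (the factor $(1\land k/n)$, the bound $\umax/(nt)\le 2.8s_0/n$, which is even slightly tighter than the paper's $3s_0/n$) is fine.
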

\begin{proof}
By Lemma~\ref{varfS} and Lemma~\ref{boundcoeff}, 
\[
\Var(f^*_S) \leq (n\land k)\Paren{\lipf \left( \frac{1}{nt}\right) \frac{\umax}{nt}}^2 e^{4r(t-1)}.
\]
Note that $t>2.5$, 
\[
\frac{\umax}{nt}=\frac{2s_0t+2s_0-1}{nt}\leq \frac{2s_0t+2s_0}{nt}\leq{\frac{3s_0}{n}},
\]
and since $c_1\sqrt{c_2}\leq{0.1}$,
\[
4r(t-1)=40s_0(t+1)(t-1)\leq 94s_0(t-1)^2=94c_1^2c_2\log n\leq 0.78\log n.
\]
Hence, 
\[
\Paren{\lipf \left( \frac{1}{nt}\right) \frac{\umax}{nt}}^2 e^{4r(t-1)}\leq \Paren{\frac{3s_0}{n}}^2 \lipf^2 \left( \frac{1}{nt}\right) n^{0.78}\leq \frac{1}{n} \frac{9s^2_0}{n^{0.22}} \lipf^2 \left( \frac{1}{nt}\right),
\]
which implies that
\[
\Var(f^*_S)\leq{\Paren{1\land \frac{k}{n}}}  \frac{9s^2_0}{n^{0.22}} \lipf^2 \left( \frac{1}{nt}\right).
\]
\end{proof}

\subsection{Bounding the Bias of \texorpdfstring{$f^*_S$}{fS}}\label{biasfS}
Recall that 
\begin{align*}
\text{Bias}(f^*_S) 
&= \EE[f^*_S(X^{N}, X^{N'}) - K_f]\\
&= \EE[\sum_{x \in \cX} h_{x,N_x} \cdot N_x!  \cdot \indic_{\mul{x}'\leq s_0}-\sum_{x \in \cX}  h_x(\lambda_x)\EE[\indic_{\mul{x} \leq s_0}]]\\
&= \sum_{x \in \cX} (\hat{h}_x(\lambda_x)-h_x(\lambda_x))\EE[\indic_{\mul{x} \leq s_0}],
\end{align*}
which yields
\begin{align*}
\left|\text{Bias}(f^*_S)\right| 
&\leq \sum_{x \in \cX} \left|\hat{h}_x(\lambda_x)-h_x(\lambda_x)\right|\\
&= \sum_{x \in \cX} \left|\sum^{\umax}_{u=1} \frac{g_x(u)}{u!}\left( \int^{\infty}_{r}  e^{-\alpha} \alpha^u f_u(\alpha \lambda_x(t-1)) d\alpha \right)\right|
\end{align*}

The following lemma bounds $|f_u(y)|$ by simple functions and allows us to deal with the integral.
\begin{Lemma}\label{boundbessel}
For $u\geq{1}$ and $y\geq{0}$,
\[
|f_u(y)|\leq{1 \land \frac{y}{u+1}}.
\]
\end{Lemma}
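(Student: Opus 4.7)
The claim decomposes as two bounds, $|f_u(y)| \leq 1$ and $|f_u(y)| \leq y/(u+1)$, and my plan is to establish each in turn and then take the minimum. Since $f_u(y) = J_{2u}(2\sqrt{y})$, the first is the classical uniform bound on integer-order Bessel functions, $|J_n(x)| \leq 1$ for every integer $n \geq 1$ and every real $x$, which is immediate from the Parseval-type identity $J_0(x)^2 + 2\sum_{n \geq 1} J_n(x)^2 = 1$ (or equivalently from the integral representation $J_n(x) = \pi^{-1}\int_0^\pi \cos(n\theta - x\sin\theta)\,d\theta$). Since $u \geq 1$, this directly gives $|f_u(y)| \leq 1$.

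For the factor $y/(u+1)$, I split on the size of $y$. When $y > u+1$, one has $y/(u+1) > 1 \geq |f_u(y)|$ by the first bound, so the inequality is immediate. The substantive case is $y \leq u+1$.

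In that regime I apply the alternating-series test to the defining power series $f_u(y) = \sum_{i \geq 0} (-1)^i a_i$ with $a_i \ed y^{i+u}/(i!\,(i+2u)!)$. The consecutive ratio
\[
\frac{a_{i+1}}{a_i} = \frac{y}{(i+1)(i+2u+1)}
\]
is decreasing in $i$, so its supremum over $i \geq 0$ is attained at $i = 0$ and equals $y/(2u+1) \leq (u+1)/(2u+1) \leq 1$. Hence $(a_i)_{i \geq 0}$ is a nonnegative, nonincreasing sequence tending to $0$, and the alternating series test yields $|f_u(y)| \leq a_0 = y^u/(2u)!$.

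It then remains to show $y^u/(2u)! \leq y/(u+1)$, equivalently $(u+1)\,y^{u-1} \leq (2u)!$. Using $y \leq u+1$ this reduces to $(u+1)^u \leq (2u)!$, which follows from the factorization $(2u)! = u!\cdot(u+1)(u+2)\cdots(2u) \geq (u+1)^u$, valid for every $u \geq 1$ (the case $u=1$ is tight, with both sides equal to $2$, and for $u \geq 2$ there is considerable slack). No step is truly hard; the only mild care required is to ensure the alternating-series test is applicable, which is exactly why the case split at $y = u+1$ was made, and to treat the $u=1$ edge case properly (where $y^{u-1} = 1$).
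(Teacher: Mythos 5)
Your proof is correct, and it takes a genuinely different route to the intermediate bound $|f_u(y)|\leq y^u/(2u)!$ in the small-$y$ regime. The paper simply cites the classical pointwise Bessel bound $|J_\nu(x)|\leq 1\land (x/2)^\nu/\Gamma(\nu+1)$ (from Watson), which holds for all $x\geq 0$, specializes it to $\nu=2u$ and $x=2\sqrt y$, and then performs exactly the arithmetic you do: for $y\geq u+1$ the bound $1\leq y/(u+1)$ suffices, and for $y<u+1$ one uses $y^{u-1}\leq (u+1)^{u-1}$ together with $(u+1)^u\leq (2u)!$. You instead invoke only the weaker uniform bound $|J_n(x)|\leq 1$ and recover $|f_u(y)|\leq y^u/(2u)!$ for $y\leq u+1$ from scratch via the Leibniz alternating-series estimate, after verifying that the coefficient ratio $a_{i+1}/a_i = y/\bigl((i+1)(i+2u+1)\bigr)$ is at most $y/(2u+1)\leq 1$ and decreasing, so the terms are nonincreasing and tend to $0$. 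What the paper's route buys is brevity: a single cited inequality closes both cases at once, with no need to check monotonicity of the series terms. What your route buys is self-containedness: you need only the elementary fact $|J_n|\leq 1$ (itself immediate from the integral representation or the Parseval identity you quote), and the alternating-series argument is entirely mechanical; the case split at $y=u+1$, which the paper performs purely to finish the arithmetic, is for you also precisely what guarantees the series test applies. The endgame $(u+1)\,y^{u-1}\leq (2u)!$ via $(2u)! = u!\,(u+1)\cdots(2u)\geq (u+1)^u$ is identical in both proofs, including the observation that $u=1$ is the tight case.
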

\begin{proof}
For $u\geq{1}$ and $y\geq{0}$, we have the following well-known upper bound~\cite{bessel} for the Bessel function of the first kind.
\begin{align*}
J_u(y)\leq{1\land \frac{(y/2)^u}{u!}},
\end{align*}
which implies
\[
f_u(y) = J_{2u}(2\sqrt{y}) \leq{1\land \frac{(y)^u}{(2u)!}}.
\]
If $y\geq{u+1}$, then
\[
f_u(y) \leq{1 \land \frac{(y)^u}{(2u)!}}\leq{1}\leq{\frac{y}{u+1}}.
\]
If ${u+1}>{y}\geq{0}$, then
\[
f_u(y) \leq{1 \land \frac{(y)^u}{(2u)!}}\leq{\frac{(y)^{u}}{(2u)!}}\leq{\frac{(u+1)^{u}}{(2u)!}\frac{y}{u+1}}\leq{\frac{y}{u+1}}\leq{1}.
\]
\end{proof}

To bound $\left|\text{Bias}(f^*_S)\right|$, it suffices to bound $\lvert \hat{h}_x(\lambda_x)-h_x(\lambda_x) \rvert$.
The lemma below follows from the first half of Lemma~\ref{boundbessel}, i.e., $|f_u(y)|\leq{{y}/{(u+1)}}$.
\begin{Lemma}\label{boundbessel1}
For $t>2.5$ and $s_0\geq 1$,
\begin{align*}
\lvert \hat{h}_x(\lambda_x)-h_x(\lambda_x) \rvert&\leq \frac{\lambda_x}{n}\lipf \left( \frac{1}{nt}\right)e^{-2s_0t}.
\end{align*}
\end{Lemma}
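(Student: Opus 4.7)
The plan is to bound the difference $\lvert \hat{h}_x(\lambda_x)-h_x(\lambda_x) \rvert$ via a direct estimate of the tail integral that distinguishes $h_x$ from $\hat h_x$, then apply the Bessel bound from Lemma~\ref{boundbessel}, the Poisson tail bound from Lemma~\ref{cortail}, and the definition of $g_x$.

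First, I would subtract the two defining integrals to obtain
\[
\hat h_x(\lambda_x)-h_x(\lambda_x) = -e^{-\lambda_x}\sum_{u=1}^{\umax}\frac{g_x(u)}{u!}\int_r^\infty e^{-\alpha}\alpha^u f_u\!\bigl(\alpha\lambda_x(t-1)\bigr)\,d\alpha.
\]
I would then take absolute values inside the sum and apply the first half of Lemma~\ref{boundbessel}, namely $|f_u(y)|\le y/(u+1)$, with $y=\alpha\lambda_x(t-1)$. This pulls out a factor $\lambda_x(t-1)/(u+1)$ and raises the integrand's power of $\alpha$ by one, leaving the tail integral $\int_r^\infty e^{-\alpha}\alpha^{u+1}\,d\alpha$.

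Next, I would recognize this integral as $(u+1)!\cdot\PP[\mathrm{Poi}(r)\le u+1]$. Since $r=10s_0(t+1)=5(\umax+1)$ and the summation index satisfies $u+1\le \umax+1\le r/5$, the Chernoff-type estimate from Lemma~\ref{cortail} gives $\PP[\mathrm{Poi}(r)\le u+1]\le e^{-0.478\,r}=e^{-4.78\,s_0(t+1)}$, a uniform factor independent of $u$. I would then substitute the bound $|g_x(u)|\le \lipf(1/(nt))\cdot(u/(nt))\cdot(t/(t-1))^u$, which comes from $f_x(0)=0$ together with the definition of $\lipf$, and combine to reach
\[
\bigl|\hat h_x(\lambda_x)-h_x(\lambda_x)\bigr|\le \frac{\lambda_x(t-1)}{nt}\lipf\!\Paren{\frac{1}{nt}}e^{-4.78\,s_0(t+1)}\sum_{u=1}^{\umax}u\Paren{\frac{t}{t-1}}^{\!u}.
\]

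Finally, I would bound the geometric-polynomial sum by its dominant term, using $(t/(t-1))^{\umax+1}\le e^{2s_0(t+1)/(t-1)}\le e^{14s_0/3}$ for $t>2.5$, so the whole sum is at most a polynomial in $t$ and $s_0$ times $e^{14s_0/3}$. Since $4.78\,s_0(t+1)-14s_0/3\ge 4.78\,s_0 t$, the exponential $e^{-4.78s_0(t+1)}$ absorbs this growth and leaves a comfortable $e^{-2s_0 t}$ after swallowing the remaining polynomial prefactors $t$, $\umax$, etc. The main obstacle is precisely this last bookkeeping step: verifying that $e^{-4.78\,s_0(t+1)}$ dominates the product of $(t-1)$, the $O(s_0 t^2)$ polynomial factor from the sum, and the $e^{O(s_0)}$ growth of the geometric sum uniformly in the admissible ranges $t>2.5$ and $s_0\ge 1$, so that the statement reduces cleanly to the claimed $\frac{\lambda_x}{n}\lipf(1/(nt))\,e^{-2s_0t}$.
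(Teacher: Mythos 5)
Your proposal follows essentially the same route as the paper's proof: the same tail-integral decomposition, the same use of the bound $|f_u(y)|\le y/(u+1)$, the same identification of $\int_r^\infty e^{-\alpha}\alpha^{u+1}\,d\alpha$ with a Poisson lower-tail probability bounded via Lemma~\ref{cortail} at $r/5$, and the same final bookkeeping in which $e^{-4.78 s_0(t+1)}$ absorbs the geometric sum $\sum_u (t/(t-1))^u$ and the polynomial prefactors to leave $e^{-2s_0t}$. The only cosmetic difference is that you retain the factor $u/(nt)$ inside the sum while the paper bounds it uniformly by $\umax/(nt)\le 3s_0/n$; the numerics you flag as the "main obstacle" do check out, just as in the paper.
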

\begin{proof}
Since $|f_u(y)|\leq{{y}/{(u+1)}}$, 
\begin{align*}
\lvert \hat{h}_x(\lambda_x)-h_x(\lambda_x)\rvert
& \leq  \sum^{\umax}_{u=1} \frac{|g_x(u)|}{(u+1)!}y(t-1) \int^{\infty}_{r}  e^{-\alpha} \alpha^{u+1} d\alpha.
\end{align*}
Note that the integral is actually the incomplete Gamma function, we can rewrite the last term as
\begin{align*}
 \lambda_x(t-1)\sum^{\umax}_{u=1} \frac{|g_x(u)|}{(u+1)!} (u+1)! e^{-r} \sum^{u+1}_{i=0} \frac{r^i}{i!}
& = \lambda_x(t-1)\sum^{\umax}_{u=1}{|g_x(u)|} e^{-r} \sum^{u+1}_{i=0} \frac{r^i}{i!}.
\end{align*}
Consider each term in the summation, by Lemma~\ref{cortail}, $r=10s_0t+10s_0$,  and $\umax = 2s_0t+2s_0-1$, for $1\leq u\leq{\umax}$,
\begin{align*}
{|g_x(u)|} e^{-r} \sum^{u+1}_{i=0} \frac{r^i}{i!}
& = \left(\frac{t}{t-1}\right)^u \Pr(\Poi(r) \leq u+1) \left|f\left(\frac{u}{nt}\right)\right|\\
& \leq \left(\frac{t}{t-1}\right)^u \Pr(\Poi(r) \leq 2s_0t+2s_0) \frac{3s_0}{n}\lipf\Paren{\frac{1}{nt}}\\
& \leq \left(\frac{t}{t-1}\right)^u  e^{-4.78(s_0t+s_0)} \frac{3s_0}{n}\lipf\Paren{\frac{1}{nt}}.
\end{align*}
Hence, 
\begin{align*}
 \lambda_x(t-1) \sum^{\umax}_{u=1}{|g_x(u)|} e^{-r} \sum^{u+1}_{i=0} \frac{r^i}{i!}
&\leq \lambda_x(t-1) e^{-4.78(s_0t+s_0)}\frac{3s_0}{n}\lipf\Paren{\frac{1}{nt}} \sum^{\umax}_{u=1}\left(\frac{t}{t-1}\right)^u\\
&\leq \frac{\lambda_x}{n}\lipf \left( \frac{1}{nt}\right) \Paren{(t-1)^2 {3s_0}}  e^{-4.78(s_0t+s_0)}\left(\frac{t}{t-1}\right)^{2s_0t+2s_0}.
\end{align*}
Note that $t>2.5$ yields $\frac{t}{t-1}\leq{e^{0.64}}$ and thus
\begin{align*}
 e^{-4.78(s_0t+s_0)}   \left(\frac{t}{t-1}\right)^{2s_0t+2s_0}
& \leq e^{-4.78(s_0t+s_0)}   e^{1.28(s_0t+s_0)}\\
& = e^{-3.5(s_0t+s_0)}.
\end{align*}
Furthermore,
\begin{align*}
\Paren{(t-1)^2 3s_0} e^{-3.5(s_0t+s_0)} 
& = \Paren{e^{-1.5s_0t}(t-1)^2}\Paren{e^{-3.5s_0}3s_0} e^{-2s_0t} \\
& \leq e^{-2s_0t},
\end{align*}
which completes the proof.
\end{proof}
Analogously, applying the second half of Lemma~\ref{boundbessel}, i.e., $|f_u(y)|\leq{1}$, we get the following alternative upper bound.
\begin{Lemma}\label{boundapprox}
For $t>2.5$ and $s_0\geq 1$,
\begin{align*}
\lvert \hat{h}_x(\lambda_x)-h_x(\lambda_x)\rvert&\leq \frac{1}{n}\lipf \left( \frac{1}{nt}\right)e^{-2s_0t}.
\end{align*}
\end{Lemma}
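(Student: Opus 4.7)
The plan is to mirror the proof of Lemma~\ref{boundbessel1}, but apply the other bound on the Bessel-type function from Lemma~\ref{boundbessel}, namely $|f_u(y)|\leq 1$ rather than $|f_u(y)|\leq y/(u+1)$. Starting from the definitions of $h_x$ and $\hat h_x$, the difference is
\[
\hat h_x(\lambda_x)-h_x(\lambda_x)
=-e^{-\lambda_x}\sum_{u=1}^{\umax}\frac{g_x(u)}{u!}\int_r^{\infty}e^{-\alpha}\alpha^u f_u(\alpha\lambda_x(t-1))\,d\alpha,
\]
so using $|f_u|\leq 1$ and pulling absolute values inside gives
\[
|\hat h_x(\lambda_x)-h_x(\lambda_x)|\leq \sum_{u=1}^{\umax}\frac{|g_x(u)|}{u!}\int_r^{\infty}e^{-\alpha}\alpha^u\,d\alpha.
\]

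Next I would evaluate the incomplete-gamma integral exactly as the Poisson tail: $\int_r^{\infty}e^{-\alpha}\alpha^u\,d\alpha = u!\,e^{-r}\sum_{i=0}^{u}r^i/i! = u!\,\Pr(\Poi(r)\leq u)$. This cancels the $u!$ in the denominator and leaves
\[
|\hat h_x(\lambda_x)-h_x(\lambda_x)|\leq \sum_{u=1}^{\umax}|g_x(u)|\,e^{-r}\sum_{i=0}^{u}\frac{r^i}{i!}.
\]
Compared with the corresponding intermediate step in Lemma~\ref{boundbessel1}, the only differences are the absence of the $\lambda_x(t-1)$ prefactor and the inner sum running to $u$ instead of $u+1$; the latter only tightens the bound, so the remaining estimation is strictly easier.

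I would then bound each summand by the same two ingredients used before: (i) the Lipschitz/vanishing-at-$0$ assumption yields $|f_x(u/(nt))|\leq \lipf(1/(nt))\cdot u/(nt)\leq \lipf(1/(nt))\cdot \umax/(nt)\leq \lipf(1/(nt))\cdot 3s_0/n$ for $u\leq \umax$, giving $|g_x(u)|\leq (t/(t-1))^u\,\lipf(1/(nt))\,3s_0/n$; and (ii) for $u\leq \umax=2s_0t+2s_0-1$, Lemma~\ref{cortail} with $r=10s_0t+10s_0$ yields $\Pr(\Poi(r)\leq u)\leq e^{-4.78(s_0t+s_0)}$. Summing the geometric series $\sum_{u=1}^{\umax}(t/(t-1))^u\leq t\,(t/(t-1))^{\umax}$ and using $(t/(t-1))^{\umax}\leq e^{1.28(s_0t+s_0)}$ (since $\log(t/(t-1))\leq 0.64$ when $t>2.5$) produces an overall exponent of $-3.5(s_0t+s_0)$ multiplied by a polynomial prefactor $3s_0 t$.

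The only remaining check is the arithmetic step $3s_0 t\cdot e^{-3.5(s_0t+s_0)}\leq e^{-2s_0t}$, which I expect to handle by splitting as $(e^{-1.5s_0t}\cdot\text{poly}(t))\cdot(e^{-3.5s_0}\cdot 3s_0)\leq 1$ and checking that each factor is at most $1$ for $t>2.5$ and $s_0\geq 1$ — exactly as in the closing lines of Lemma~\ref{boundbessel1}. Putting these estimates together yields the claimed bound $|\hat h_x(\lambda_x)-h_x(\lambda_x)|\leq (1/n)\lipf(1/(nt))\,e^{-2s_0t}$. I don't foresee any real obstacle here; the only delicate point is bookkeeping the constants so that the Poisson-tail exponent $4.78$ dominates both the $(t/(t-1))^{\umax}$ growth and the leftover polynomial factor, mirroring the analogous arithmetic already carried out for Lemma~\ref{boundbessel1}.
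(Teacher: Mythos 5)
Your proposal is correct and follows exactly the route the paper intends — the paper itself only remarks that Lemma~\ref{boundapprox} follows ``analogously'' from the proof of Lemma~\ref{boundbessel1} by substituting the bound $|f_u(y)|\le 1$ for $|f_u(y)|\le y/(u+1)$, which is precisely the substitution you carry out. Your bookkeeping of the resulting changes (dropping the $\lambda_x(t-1)$ prefactor, shifting the inner Poisson-tail sum from $u+1$ to $u$, and reusing the same $e^{-4.78(s_0t+s_0)}$ tail bound, geometric-series bound, and final arithmetic splitting) matches the calculation the reader is expected to reconstruct.
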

Lemma~\ref{boundbessel1} and Lemma~\ref{boundapprox} together yield the following upper bound. 
\begin{Lemma}\label{lemmabiasfS}
For $t>2.5$ and $s_0\geq 1$,
\begin{align*}
\text{Bias}(f^*_S)^2&\leq {\Paren{1\land \frac{k^2}{n^2}}} e^{-4s_0t} \lipf^2 \left( \frac{1}{nt}\right).
\end{align*}
\end{Lemma}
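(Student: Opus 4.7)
The plan is to combine the two pointwise bounds on $|\hat{h}_x(\lambda_x)-h_x(\lambda_x)|$ established in Lemmas~\ref{boundbessel1} and~\ref{boundapprox} with the identity $|\text{Bias}(f^*_S)| \leq \sum_{x\in\cX} |\hat{h}_x(\lambda_x)-h_x(\lambda_x)|$ derived just above the statement. Since the indicator $\indic_{\mul{x}'\leq s_0}$ has expectation at most $1$, we start by noting that the sum of the pointwise approximation errors (each multiplied by $\EE[\indic_{\mul{x}\leq s_0}] \leq 1$) controls the absolute bias.

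The two pointwise bounds give two different useful estimates when summed over $x$. Applying Lemma~\ref{boundbessel1} yields
\[
\sum_{x\in\cX} \lvert \hat{h}_x(\lambda_x)-h_x(\lambda_x)\rvert \leq \lipf\!\left(\tfrac{1}{nt}\right) e^{-2s_0t} \cdot \frac{1}{n}\sum_{x\in\cX}\lambda_x = \lipf\!\left(\tfrac{1}{nt}\right) e^{-2s_0t},
\]
using $\sum_{x\in\cX}\lambda_x = n$. Applying Lemma~\ref{boundapprox} instead yields
\[
\sum_{x\in\cX} \lvert \hat{h}_x(\lambda_x)-h_x(\lambda_x)\rvert \leq \lipf\!\left(\tfrac{1}{nt}\right) e^{-2s_0t} \cdot \frac{k}{n},
\]
since there are $k$ symbols in $\cX$. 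Taking the minimum of the two bounds gives
\[
|\text{Bias}(f^*_S)| \leq \Paren{1 \land \tfrac{k}{n}} \lipf\!\left(\tfrac{1}{nt}\right) e^{-2s_0t},
\]
and squaring both sides produces exactly the claimed inequality.

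There is no real obstacle here; the substantive work has been done in Lemmas~\ref{boundbessel1} and~\ref{boundapprox}, which carefully bound the tail of the Bessel-function integral defining $\hat{h}_x - h_x$ using the two alternative bounds on $|f_u(y)|$ from Lemma~\ref{boundbessel}. The only subtlety worth flagging is making sure that the min-over-two-bounds step is justified at the level of the sum rather than pointwise (which it is, because each individual bound applies to the entire sum), so that the factor $1\land(k/n)$ appears outside rather than inside the summation. Once this is noted, the proof is a one-line combination followed by squaring.
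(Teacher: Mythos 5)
Your proposal is correct and matches the paper's (implicit) argument exactly: the paper likewise bounds $|\text{Bias}(f^*_S)|$ by $\sum_x|\hat h_x(\lambda_x)-h_x(\lambda_x)|$ using $\EE[\indic_{\mul{x}\le s_0}]\le 1$, then sums Lemma~\ref{boundbessel1} via $\sum_x\lambda_x=n$ for the ``$1$'' branch and Lemma~\ref{boundapprox} over the $k$ symbols for the ``$k/n$'' branch, takes the minimum, and squares. No gaps.
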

\subsection{Bounding \texorpdfstring{$\EE[C^2]$}{E[C2]}}\label{boundCsummary}
Combining all the previous results, for the set of parameters specified in Section~\ref{assum}, if $c_1\sqrt{c_2}\leq{1/11}$, $t>{2.5}$, $n\geq150$, and $1 \leq s_0 \leq \log^{0.2} n$,
\begin{align*}
 \EE[C^2] &\leq \Var(f^*_S)+\Paren{1+{\log n}}\text{Bias}(f^*_S)^2+\Paren{1+\frac{1}{\log n}}R_f^2\\
 &\leq {\Paren{1\land \frac{k}{n}}}  \frac{9s^2_0}{n^{0.22}}\lipf^2 \left( \frac{1}{nt}\right)+(1+\log n){{\Paren{1\land \frac{k^2}{n^2}}} } e^{-4s_0t} \lipf^2 \left( \frac{1}{nt}\right)\\ 
 & +\Paren{1+\frac{1}{\log n}}\Paren{{\Paren{7.1{\Paren{1\land \frac{k}{n}}}  \lipf \left( \frac{1}{n}\right)  e^{-0.3s_0} \log n}}^2+\Paren{\frac{7.1}{n^{3.8}}\lipf \left( \frac{1}{n}\right)}^2}\\
 &\leq 8^2{\Paren{1\land \frac{k}{n}}} \lipf^2 \left( \frac{1}{nt}\right)\log^2 n\Paren{\frac{1}{e^{0.6s_0}}+\frac{1}{n^{0.22}}}+\Paren{\frac{8}{n^{3.8}}\lipf \left( \frac{1}{n}\right)}^2\\
 &\leq 13^2{\Paren{1\land \frac{k}{n}}} \lipf^2 \left( \frac{1}{nt}\right)\Paren{\frac{\log^2 n}{e^{0.6s_0}}}
\end{align*}

\section{Main Results}\label{mainresult}
 To summarize, for properly chosen parameters and sufficiently large $n$, 
 \[
\EE[A^2] \leq\frac{1+T(n)}{nt}\lipf^2\Paren{\frac{1}{nt}}+\left(1+\frac{1}{T(n)}\right)L_{f^E}(p,nt),
\]
\[
\EE[B^2] \leq (8S_f) ^2\Paren{\frac{1}{s_0}\land \frac{k}{n}}+10\lipf^2\left( \frac{1}{nt}\right)\frac{s_0}{n},
\]
and
\[
\EE[C^2] \leq 13^2\Paren{1\land \frac{k}{n}}\lipf^2 \left( \frac{1}{nt}\right)\Paren{\frac{\log^2 n}{e^{0.6s_0}}},
\]
where $T$ is an arbitrary positive function over $\mathbb{N}$.
Furthermore, Cauchy-Schwarz inequality implies 
\[
(f^*(X^N,X^{N'})-f(p))^2=(A+B+C)^2 \leq {(T(n)(C+B)^2+A^2) \left(1+\frac{1}{T(n)} \right)}.
\]
Choosing $T(n)=\log^{\epsilon}n$, the estimation loss of $f^*$ is thus bounded by
\begin{align*}
L_{f^*}(p,2n)&=\EE[(f^*(X^{N}, X^{N'})-\pf)^2]\\
&\leq \EE\left[(\log^{\epsilon}n (C+B)^2+A^2)\left(1+\frac{1}{\log^{\epsilon}n}\right)\right] \\
&\leq 2(1+\log^{\epsilon}n)(\EE[C^2]+\EE[B^2])+ \left(1+\frac{1}{\log^{\epsilon}n}\right)\EE[A^2]\\
&\leq 2(1+\log^{\epsilon}n)\Paren{\EE[C^2]+\EE[B^2]+\frac{1+\log^{\epsilon}n}{2nt\log^{\epsilon}n}\lipf^2\Paren{\frac{1}{nt}}}\\&+ \left(1+\frac{1}{\log^{\epsilon}n}\right)L_{f^E}(p,nt).
\end{align*}
For any property $f$ and set of parameters that satisfy the assumptions in Section~\ref{assum}, 
\[
\EE[C^2]+\EE[B^2]+\frac{1+\log^{\epsilon}n}{2nt\log^{\epsilon}n}\lipf^2\Paren{\frac{1}{nt}}\leq C'_f\min\left\{ \frac{k}{n}+\mathcal{\tilde{O}}\Paren{\frac{1}{n}}, \frac{1}{\log^{2\epsilon}{n}}  \right\},
\]
where $C'_f$ is a fixed constant that only depends on $f$.

Setting $c_1=1$ yields Theorem {\bf $1$} with $C_f=4C'_f$.

In Theorem~\ref{thm1}, for fixed $n$, as $\epsilon\to0$,
the final slack term $1/\log^\epsilon n$ approaches a constant. 
For certain properties it can be improved. 
For normalized support size, normalized support coverage,
and distance to uniformity, a more involved estimator improves this term to
\[
C_{f,\gamma}\min\left\{\frac{k}{n\log^{1-\epsilon}n}+\frac{1}{n^{1-\gamma}}, \frac{1}{\log^{1+\epsilon}{n}}  \right\},
 \]
for any fixed constant $\gamma\in{(0,1/2)}$. 

For Shannon entropy, correcting the bias of
$f^*_L$
and further dividing the probability regions, 
reduces the slack term even more, to
\vspace{-1em}
\[
C_{f,\gamma}\min\left\{\frac{k^2}{n^2\log^{2-\epsilon}n}+\frac{1}{n^{1-\gamma}}, \frac{1}{\log^{2+2\epsilon}{n}}\right\}.
\]
\vspace{-2em}
\section{Experiments}\label{experimentalresults}
We demonstrate the new estimator's efficacy by applying it
to several properties and distributions, and 
comparing its performance to that of several recent estimators~\cite{mmentro,mmsize,mmcover,pnas,jvhw}.
As outlined below, the new estimator was essentially
the best in almost all experiments.
It was out-performed, essentially only by PML,
and only when the distribution is close to uniform.  

\subsection{Preliminaries}
We tested five of the properties outlined in the introduction section:
\ignore{
distance to uniformity, 
normalized support size,
normalized support coverage,
Shannon entropy, and
power sums or equivalently R\'enyi entropy. 
}
Shannon entropy,
normalized support size,
normalized support coverage,
power sums or equivalently R\'enyi entropy, and
distance to uniformity. 
For each of the five properties, we tested the estimator on 
the following six distributions.
a distribution randomly generated from Dirichlet prior with parameter 2;
uniform distribution;
Binomial distribution with success probability $0.3$;
geometric distribution with success probability $0.99$;
Poisson distribution with mean $3{,}000$;
Zipf distribution with power $1.5$.
All distributions had support size $k=10{,}000$.
The Geometric, Poisson, and Zipf distributions were truncated at $k$
and re-normalized. Note that the parameters of the Geometric and
Poisson distributions were chosen so that the expected value would
be fairly large. 

We compared the estimator's performance with $n$ samples to that
of four other recent estimators as well as 
the empirical estimator with $n$, $n\sqrt{\log n}$, and 
$n\log n$ samples.

The graphs denotes
NEW by $f^*$,
$f^E$ with $n$ samples by Empirical,
$f^E$ with $n\sqrt{\log{n}}$ samples by Empirical+,
$f^E$ with $n\log{n}$ samples by Empirical++,
the pattern maximum likelihood estimator in~\cite{mmcover} by PML,
the Shannon-entropy estimator in~\cite{jvhw} by JVHW,
the normalized-support-size estimator in~\cite{mmsize} and 
the entropy estimator in~\cite{mmentro} by WY,
and the smoothed Good-Toulmin Estimator for normalized support
coverage estimation~\cite{pnas}, slightly modified to account for
previously-observed elements that may appear in the subsequent sample,
by SGT. 

While the empirical estimator and the new estimator have the
same form for all properties, as noted in the introduction, the
recent estimators are property-specific, and each was derived
for a subset of the properties. In the experiments we applied these
estimators to the  properties for which they were derived. 
Also, additional estimators~\cite{ventro, pentro, mentro, gsize, ccover, cacover, jcover} for various properties were compared
in~\cite{mmentro,mmsize,pnas,jvhw} and found to perform similarly to or worse than recent 
estimators, hence we do not test them here. 

As outlined in Section \ref{newf}, the new estimator $f^*$ uses
two key parameters $t$ and $s_0$ that determine and all other parameters.
To avoid over-fitting, the data sets used to determine $t$ and
$s_0$ was disjoint from the one used to generate the results shown. 

\begin{table}[h]
  \caption{Values of $t$ and $s_0$ for different properties}
  \label{sample-table1}
  \centering
  \begin{tabular}{llll}
    \toprule
    \cmidrule(r){1-2}
    Property & $t$ & $s_0$\\ 
    \midrule
    Shannon Entropy & $2\log^{0.8}n+1$  & $16\log^{0.2}n$ \\ 
    Normalized Support Size & $\log^{0.7}n+1$ & $16\log^{0.2}n$ \\ 
    Normalized Support Coverage & $\log^{0.8}n+1$ & $8\log^{0.2}n$ \\ 
    Power Sum (0.75) & $\log^{1.0}n+1$ & $4\log^{0.2}n$ \\ 
    Distance to Uniformity& $\log^{0.7}n+1$ & $4\log^{0.2}n$ \\ 
    \bottomrule
  \end{tabular}
\end{table}

Due to the nature of our worst-case analysis and the universality of our results over all possible distributions, we only proved that $f^*$ with $n$ samples works as well as $f^E$ with $n\sqrt{\log n}$ samples. In practice, we chose the amplification parameter $t$ as $\log^{1-\alpha}n+1$,
where $\alpha\in\{0.0,0.1,0.2,...,0.6\}$ was selected based on independent
data, and similarly for $s_0$. Since $f^*$ performed even better than Theorem~\ref{thm1} guarantees,
$\alpha$ ended up between 0 and 0.3 for all properties, 
indicating amplification even beyond $n\sqrt{\log n}$. Finally, to compensate the increase of $t$, in the computation of each coefficient $h_{x,v}$ we substituted $t$ by $\max\left\{{t}/{1.5^{v-1}}, 1.5\right\}$. 

\subsection{Experimental Results}
With the exception of normalized support coverage, all other 
properties were tested on distributions of support size $k=10{,}000$
and number of samples, $n$, ranging from $1{,}000$ to $100{,}000$.
Each experiment was repeated 100 times and the reported results
reflect their mean squared error (MSE). The distributions shown in 
the graphs below are arranged in decreasing order of uniformity. 
In all graphs, the vertical axis is the MSE over the 100 experiments, and the horizontal axis is $\log(n)$.

\vfill
\pagebreak

\subsection*{Shannon Entropy}
For the Dirichlet-drawn and uniform distributions, all recent estimators
outperformed the empirical estimator, even when it was used with
$n\log n$ samples. The best estimator depended on the distribution,
but the new estimator $f^*$ performed best or
essentially as well as the best for all six distributions. 
\vspace{-2em}
\begin{figure*}[h]
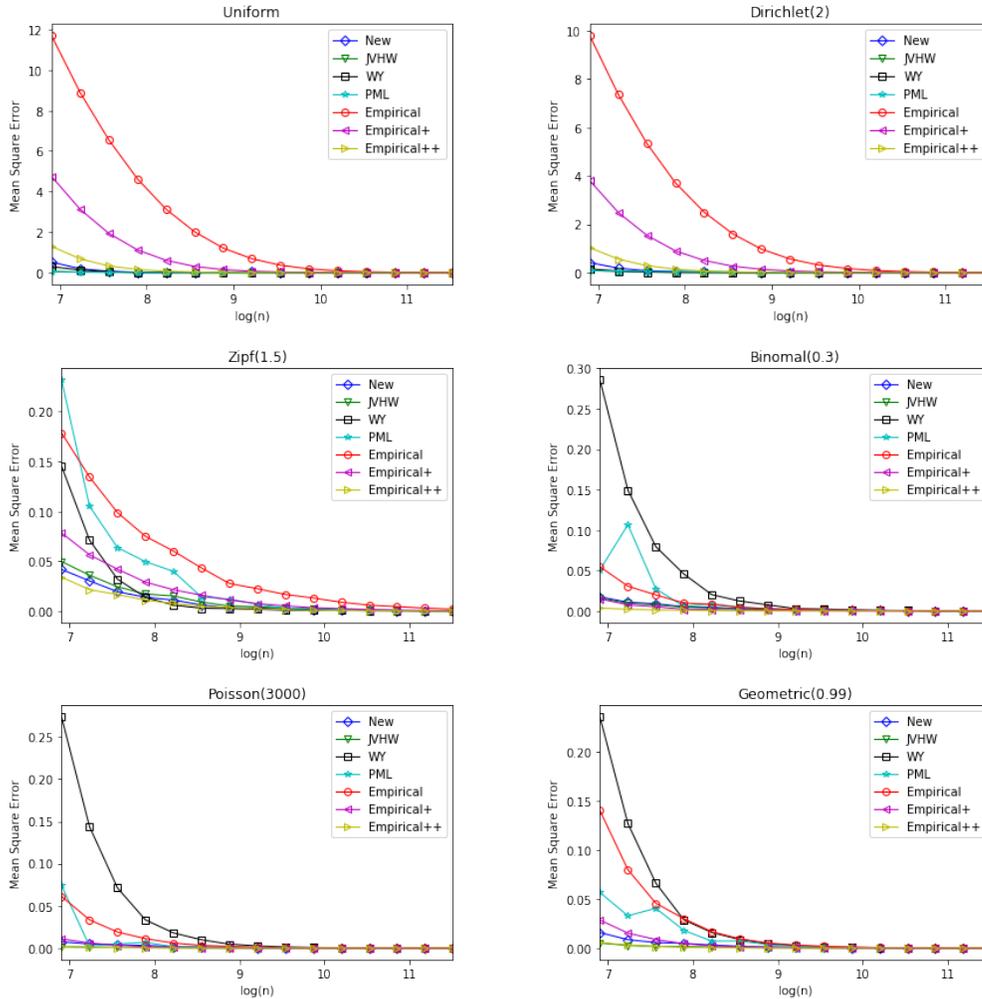

\begin{multicols}{2}
    \includegraphics[width=0.9\linewidth]{eu}\par 
    \includegraphics[width=0.9\linewidth]{ed}\par 
\end{multicols}
\vspace{-2em}
\begin{multicols}{2}
    \includegraphics[width=0.9\linewidth]{ez}\par
    \includegraphics[width=0.9\linewidth]{eb}\par
\end{multicols}
\vspace{-2em}
\begin{multicols}{2}
    \includegraphics[width=0.9\linewidth]{ep}\par
    \includegraphics[width=0.9\linewidth]{eg}\par
\end{multicols}
\vspace{-2em}
\caption{Shannon Entropy}
\end{figure*}
\vfill
\pagebreak

\subsection*{Normalized Support Size}
For the Dirichlet-drawn and uniform distributions, PML and the empirical estimators
were best for small $n$, with the new estimator next. 
For the remaining four distributions, 
empirical with $n\log n$ samples was best, but 
among all estimators using $n$ samples and even empirical with
$n\sqrt{\log n}$ samples, the new estimator was best.

\begin{figure*}[h]
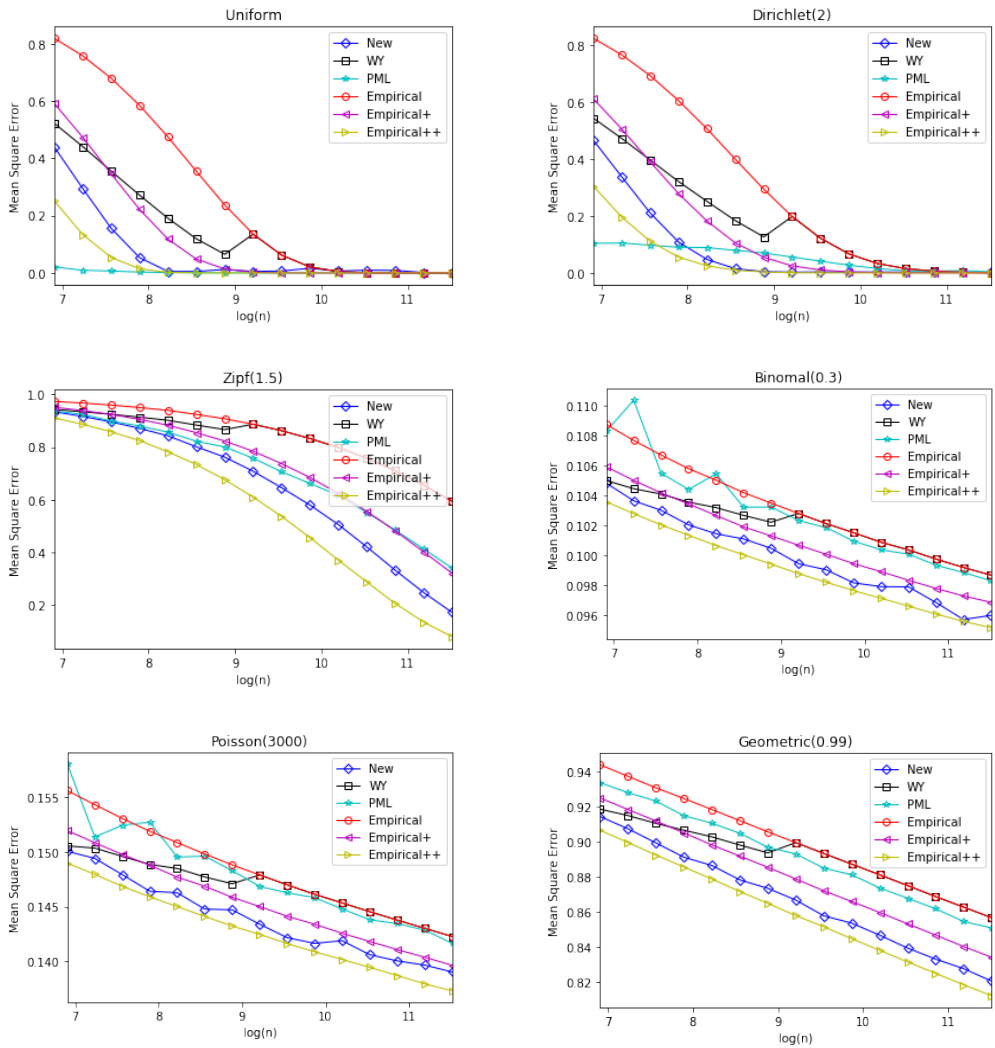

\begin{multicols}{2}
    \includegraphics[width=0.9\linewidth]{su}\par 
    \includegraphics[width=0.9\linewidth]{sd}\par 
     \end{multicols}
\begin{multicols}{2}
    \includegraphics[width=0.9\linewidth]{sz}\par
    \includegraphics[width=0.9\linewidth]{sb}\par
\end{multicols}
\begin{multicols}{2}
    \includegraphics[width=0.9\linewidth]{sp}\par
    \includegraphics[width=0.9\linewidth]{sg}\par
\end{multicols}
\caption{Normalized Support Size}
\end{figure*}
\vfill
\pagebreak

\subsection*{Normalized Support Coverage}
For this property the 
parameter $m$ was set to $5{,}000$. All the distributions have support size $k=1{,}000$ and $n$, the number of samples, ranges from $1{,}000$ to $3{,}000$.
The new estimator was essentially best for all distributions.

\begin{figure*}[h]
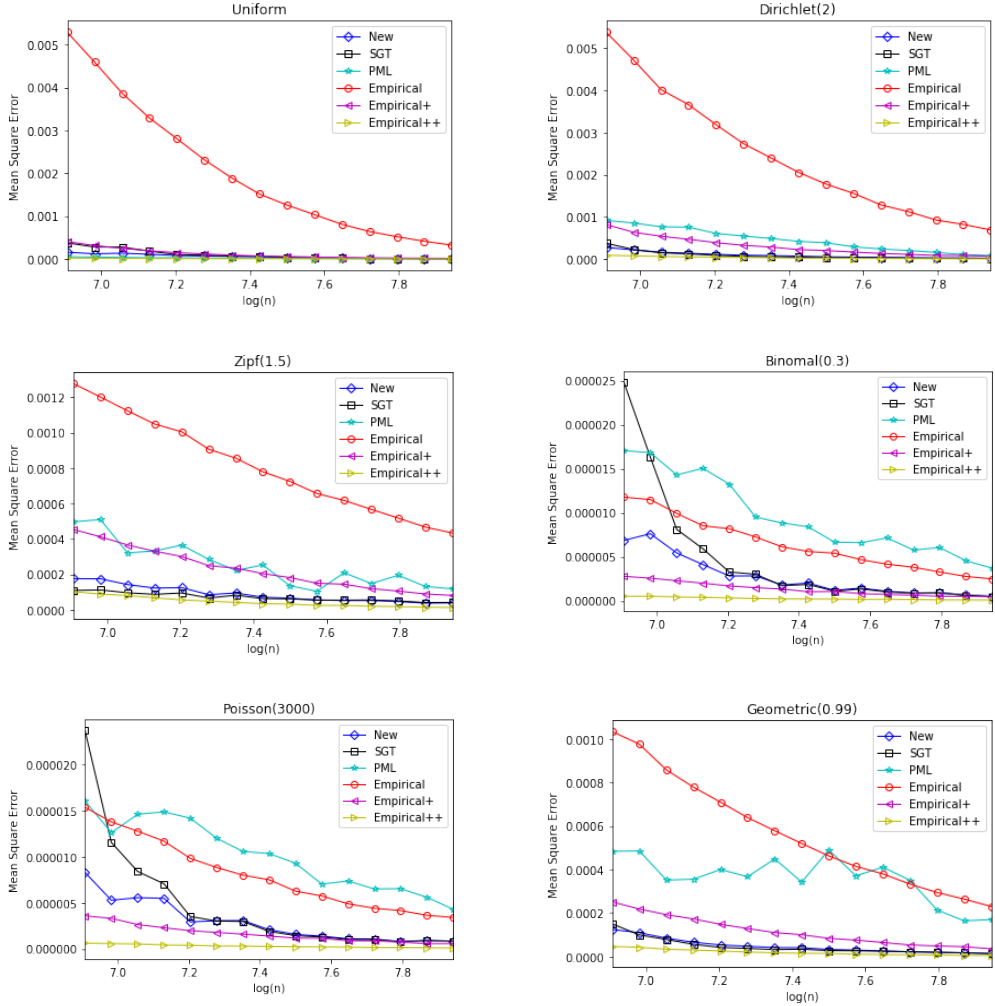

\begin{multicols}{2}
    \includegraphics[width=0.9\linewidth]{cu}\par 
    \includegraphics[width=0.9\linewidth]{cd}\par 
     \end{multicols}
\begin{multicols}{2}
    \includegraphics[width=0.9\linewidth]{cz}\par
    \includegraphics[width=0.9\linewidth]{cb}\par
\end{multicols}
\begin{multicols}{2}
    \includegraphics[width=0.9\linewidth]{cp}\par
    \includegraphics[width=0.9\linewidth]{cg}\par
\end{multicols}
\caption{Normalized Support Coverage}
\end{figure*}
\vfill
\pagebreak

\subsection*{Power Sum (0.75), or equivalently R\'enyi entropy with
  parameter 0.75}
Again PML was best for the Dirichlet-drawn and uniform distributions, however, its performance was not as stable as $f^*$. The new estimator performed as well as $f^E$ with $n\sqrt{\log n}$ samples in all cases and matched $f^E$ with $n{\log n}$ samples for half of the distributions.

\begin{figure*}[h]
\begin{multicols}{2}
    \includegraphics[width=0.9\linewidth]{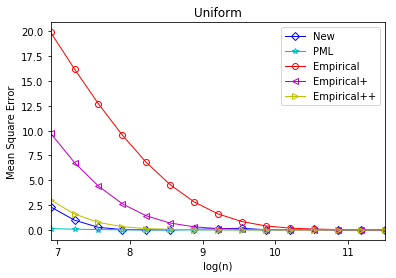}\par 
    \includegraphics[width=0.9\linewidth]{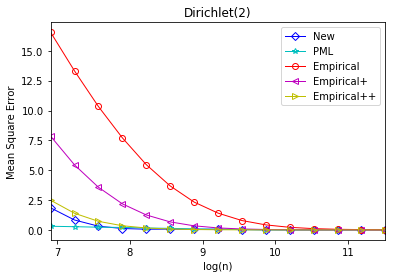}\par 
     \end{multicols}
\begin{multicols}{2}
    \includegraphics[width=0.9\linewidth]{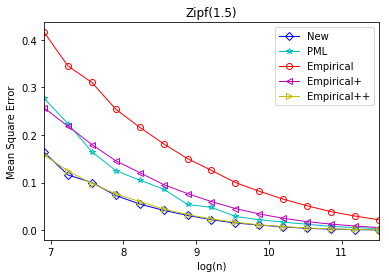}\par
    \includegraphics[width=0.9\linewidth]{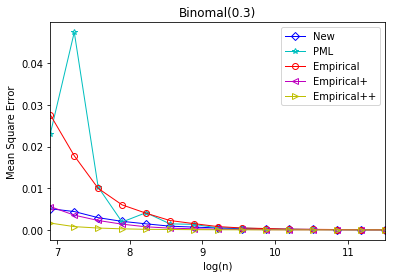}\par
\end{multicols}
\begin{multicols}{2}
    \includegraphics[width=0.9\linewidth]{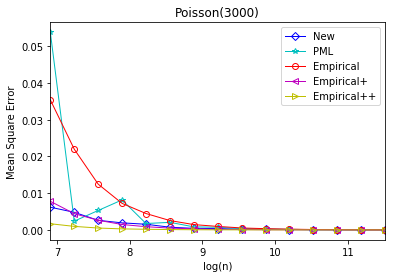}\par
    \includegraphics[width=0.9\linewidth]{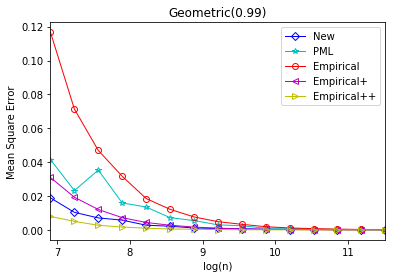}\par
\end{multicols}
\caption{Power Sum (0.75)}
\end{figure*}
\vfill
\pagebreak

\subsection*{Distance to Uniformity}
The new estimator performed as well as $f^E$ with $n\log n$ samples in all cases.
PML was the best estimator for the Dirichlet-drawn and uniform distributions, but provided no improvement 
over the $n$-sample empirical estimator for half of the distributions.

\begin{figure*}[h]
\begin{multicols}{2}
    \includegraphics[width=0.9\linewidth]{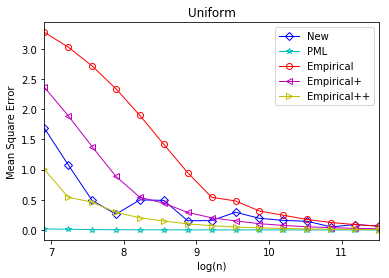}\par 
    \includegraphics[width=0.9\linewidth]{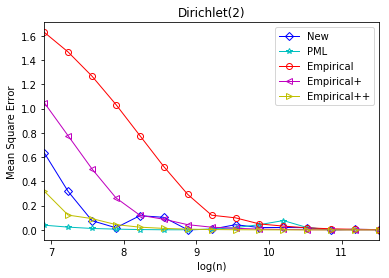}\par 
     \end{multicols}
\begin{multicols}{2}
    \includegraphics[width=0.9\linewidth]{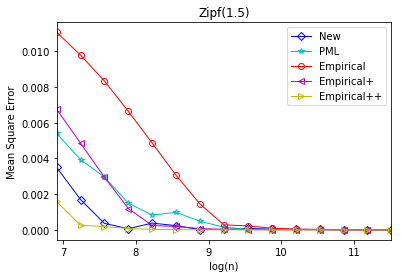}\par
    \includegraphics[width=0.9\linewidth]{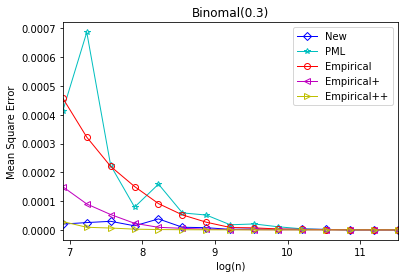}\par
\end{multicols}
\begin{multicols}{2}
    \includegraphics[width=0.9\linewidth]{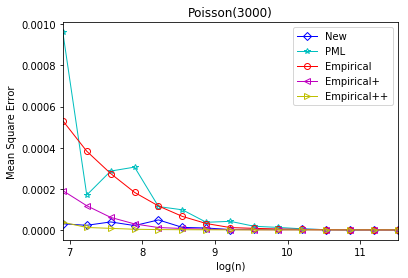}\par
    \includegraphics[width=0.9\linewidth]{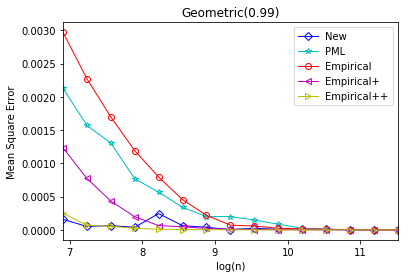}\par
\end{multicols}
\caption{Distance to Uniformity}
\end{figure*}
\vfill
\pagebreak


\begin{thebibliography}{9}
\bibitem{info}
\textsc{Cover, T. M., \& Thomas, J. A.}  (2012). \textit{Elements of information theory}. John Wiley \& Sons.

\bibitem{population}
\textsc{Good, I. J.}  (1953). \textit{The population frequencies of species and the estimation of population parameters}. Biometrika, 40(3-4), 237-264.

\bibitem{vocabulary}
\textsc{McNeil, D. R.}  (1973). \textit{Estimating an author's vocabulary}. Journal of the American Statistical Association, 68(341), 92-96.

\bibitem{ecological}
\textsc{Colwell, R. K., Chao, A., Gotelli, N. J., Lin, S. Y., Mao, C. X., Chazdon, R. L., \& Longino, J. T.}  (2012). \textit{Models and estimators linking individual-based and sample-based rarefaction, extrapolation and comparison of assemblages}. Journal of plant ecology, 5(1), 3-21.

\bibitem{genomic}
\textsc{Ionita-Laza, I., Lange, C., \& Laird, N. M.}  (2009). \textit{Estimating the number of unseen variants in the human genome}. Proceedings of the National Academy of Sciences, 106(13), 5008-5013.

\bibitem{database}
\textsc{Haas, P. J., Naughton, J. F., Seshadri, S., \& Stokes, L.}  (1995). \textit{Sampling-based estimation of the number of distinct values of an attribute}. VLDB, Vol. 95, pp. 311-322.

\bibitem{renyientropy}
\textsc{R\'enyi, A.}  (1961). \textit{On measures of entropy and information}. HUNGARIAN ACADEMY OF SCIENCES Budapest Hungary.

\bibitem{gini}
\textsc{Loh, W. Y.}  (2011). \textit{Classification and regression trees}. Wiley Interdisciplinary Reviews: Data Mining and Knowledge Discovery, 1(1), 14-23.

\bibitem{testingu}
\textsc{Canonne, C. L.} (2017). \textit{A Survey on Distribution Testing}. Your Data is Big. But is it Blue.

\bibitem{testing}
\textsc{Lehmann, E. L., \& Romano, J. P.}  (2006). \textit{Testing statistical hypotheses}. Springer Science \& Business Media.

\bibitem{kl}
\textsc{Kullback, S., \& Leibler, R. A. }  (1951). \textit{On information and sufficiency}. The annals of mathematical statistics, 22(1), 79-86.

\bibitem{poisamp}
\textsc{S\"arndal, C. E., Swensson, B., \& Wretman, J.}  (2003). \textit{Model assisted survey sampling}. Springer Science \& Business Media.

\bibitem{mmentro}
\textsc{Wu, Y., \& Yang, P. }  (2016). \textit{Minimax rates of entropy estimation on large alphabets via best polynomial approximation}, IEEE Transactions on Information Theory, 62(6), 3702-3720.

\bibitem{mmsize}
\textsc{Wu, Y., \& Yang, P.}  (2015). \textit{Chebyshev polynomials, moment matching, and optimal estimation of the unseen}. ArXiv preprint arXiv:1504.01227.

\bibitem{mmcover}
\textsc{Acharya, J., Das, H., Orlitsky, A., \& Suresh, A. T.}  (2017). \textit{A unified maximum likelihood approach for estimating symmetric properties of discrete distributions}. In International Conference on Machine Learning (pp. 11-21).

\bibitem{mml1}
\textsc{Jiao, J., Han, Y., \& Weissman, T.}  (2016). \textit{Minimax estimation of the L1 distance}. In Information Theory (ISIT), 2016 IEEE International Symposium on (pp. 750-754). IEEE.

\bibitem{approx}
\textsc{Timan, A. F.} (2014). \textit{Theory of approximation of functions of a real variable}. Elsevier. 

\bibitem{approxconst}
\textsc{Korn\u eichuk, N. P.} (1991). \textit{ Exact constants in approximation theory}. (Vol. 38). Cambridge University Press.

\bibitem{valiant}
\textsc{Valiant, G., \& Valiant, P. }  (2011). \textit{The power of linear estimators}. In Foundations of Computer Science (FOCS), 2011 IEEE 52nd Annual Symposium on (pp. 403-412). IEEE.

\bibitem{jnew}
\textsc{Han, Y., Jiao, J., \& Weissman, T.} (2018). \textit{Local moment matching: A unified methodology for symmetric functional estimation and distribution estimation under Wasserstein distance}. arXiv preprint arXiv:1802.08405.

\bibitem{learning}
\textsc{Kamath, S., Orlitsky, A., Pichapati, D., \& Suresh, A. T.} (2015, June). \textit{On learning distributions from their samples}. In Conference on Learning Theory (pp. 1066-1100).

\bibitem{pnas}
\textsc{Orlitsky, A., Suresh, A. T., \& Wu, Y.} (2016). \textit{Optimal prediction of the number of unseen species}. Proceedings of the National Academy of Sciences, 201607774.	

\bibitem{emiller}
\textsc{Carlton, A. G.} (1969). \textit{On the bias of information estimates}. Psychological Bulletin, 71(2), 108.


\bibitem{concen}
\textsc{Chung, F. R., \& Lu, L.} (2017).
\textit{ Complex graphs and networks}. (No. 107). American Mathematical Soc.

\bibitem{ber}
\textsc{Bustamante, J.} (2017).
\textit{ Bernstein operators and their properties}. Chicago.

\bibitem{bessel}
\textsc{Watson, G. N.} (1995). \textit{A treatise on the theory of Bessel functions}. Cambridge University Press.

\bibitem{jvhw}
\textsc{Jiao, J., Venkat, K., Han, Y., \& Weissman, T.} (2015). \textit{Minimax estimation of functionals of discrete distributions}. IEEE Transactions on Information Theory, 61(5), 2835-2885.

\bibitem{ventro}
\textsc{Valiant, P., \& Valiant, G.} (2013). \textit{Estimating the unseen: improved estimators for entropy and other properties}. In Advances in Neural Information Processing Systems (pp. 2157-2165).

\bibitem{pentro}
\textsc{Paninski, L.} (2003). \textit{Estimation of entropy and mutual information}. Neural computation, 15(6), 1191-1253.

\bibitem{mentro}
\textsc{Carlton, A. G.} (1969). \textit{On the bias of information estimates}. Psychological Bulletin, 71(2), 108.

\bibitem{gsize}
\textsc{Good, I. J.} (1953). \textit{The population frequencies of species and the estimation of population parameters}. Biometrika, 40(3-4), 237-264.

\bibitem{ccover}
\textsc{Chao, A.} (1984). Nonparametric estimation of the number of classes in a population. Scandinavian Journal of Statistics, 265-270.

\bibitem{cacover}
\textsc{Chao, A.} (2005). Species estimation and applications. Encyclopedia of statistical sciences.

\bibitem{jcover}
\textsc{Smith, E. P., \& van Belle, G.} (1984). Nonparametric estimation of species richness. Biometrics, 119-129.

\bibitem{jrenyi}
\textsc{Acharya, J., Orlitsky, A., Suresh, A. T., \& Tyagi, H.} (2017). Estimating R\'enyi entropy of discrete distributions. IEEE Transactions on Information Theory, 63(1), 38-56.

\bibitem{yi2018}
\textsc{Hao, Y., \& Orlitsky, A.} (2018, June). Adaptive estimation of generalized distance to uniformity. In 2018 IEEE International Symposium on Information Theory (ISIT) (pp. 1076-1080). IEEE.

\bibitem{batu17}
\textsc{Batu, T., \& Canonne, C. L.} (2017, October). Generalized Uniformity Testing. In 2017 IEEE 58th Annual Symposium on Foundations of Computer Science (FOCS) (pp. 880-889). IEEE.
\end{thebibliography}
\end{document}